%%%%%%%%%%%%%%%%%%%%%%%%%%%%%%%%%%%%%%%%%%%%%%%%%%%%%%%%%%%%%%%%%%
%%%%%%%% ICML 2014 EXAMPLE LATEX SUBMISSION FILE %%%%%%%%%%%%%%%%%
%%%%%%%%%%%%%%%%%%%%%%%%%%%%%%%%%%%%%%%%%%%%%%%%%%%%%%%%%%%%%%%%%%

% Use the following line _only_ if you're still using LaTeX 2.09.
%\documentstyle[icml2014,epsf,natbib]{article}
% If you rely on Latex2e packages, like most moden people use this:
\documentclass[11pt]{article}
\usepackage[top=0.75in,bottom= 0.75in,left= 0.75in,right=0.75in]{geometry}
\setlength{\parskip}{0.5\baselineskip}%
\setlength{\parindent}{0pt}
% use Times
\usepackage{times}
% For figures
\usepackage{authblk}

\usepackage{graphicx} % more modern
\usepackage{subfigure} 

\usepackage{amssymb,amsmath,appendix}
\setcounter{tocdepth}{3}
\usepackage{graphicx}
\usepackage{subfigure} 
\usepackage{epstopdf}
\usepackage{amsfonts}
\usepackage{amsthm}
\usepackage{algorithm}
\usepackage{bbm}
\usepackage{algpseudocode}
\usepackage{xspace}

% For citations
\usepackage{natbib}
\usepackage{appendix}

% For algorithms
%\usepackage{algorithm}
%\usepackage{algorithmic}

% As of 2011, we use the hyperref package to produce hyperlinks in the
% resulting PDF.  If this breaks your system, please commend out the
% following usepackage line and replace \usepackage{icml2014} with
% \usepackage[nohyperref]{icml2014} above.
%\usepackage{hyperref}

% Packages hyperref and algorithmic misbehave sometimes.  We can fix
% this with the following command.

% Employ the following version of the ``usepackage'' statement for
% submitting the draft version of the paper for review.  This will set
% the note in the first column to ``Under review.  Do not distribute.''
%\usepackage{icml2014} 
% Employ this version of the ``usepackage'' statement after the paper has
% been accepted, when creating the final version.  This will set the
% note in the first column to ``Proceedings of the...''
%\usepackage[accepted]{icml2014}

\newtheorem{asm}{Assumption}
\newtheorem{lemma}{Lemma}
\newtheorem{theorem}{Theorem}

\newtheorem{definition}{Definition}

% The \icmltitle you define below is probably too long as a header.
% Therefore, a short form for the running title is supplied here:
%\icmltitlerunning{Resource-Efficient  Stochastic Optimization  under Correlated Bandit Feedback}
\title{Online Stochastic Optimization  under Correlated Bandit Feedback}
% Misc symbols
\newcommand{\HCT}{\textit{HCT}\xspace}
\newcommand{\HCTiid}{\textit{HCT}-iid\xspace}
\newcommand{\HCTgamma}{\textit{HCT}-$\Gamma$\xspace}
\newcommand{\comp}{\mathsf{c}}
\newcommand{\A}{\mathcal A}
\newcommand{\X}{\mathcal X}

\newcommand{\G}{\mathcal G}

\newcommand{\calP}{\mathcal P}
\newcommand{\calI}{\mathcal I}
\newcommand{\barH}{\overline{H}}

\renewcommand{\L}{\mathcal L}

\newcommand{\B}{\mathcal B}

\newcommand{\F}{\mathcal F}
\newcommand{\E}{\mathbb E}
\newcommand{\calE}{\mathcal E}
\newcommand{\calS}{\mathcal{S}}

\newcommand{\Prob}{\mathbb P}
\newcommand{\I}{\mathbb I}
\newcommand{\N}{\mathcal N}
\newcommand{\M}{\mathcal M}

\newcommand{\T}{\mathcal{T}}

\newcommand{\hmu}{\widehat{\mu}}

\newcommand{\eps}{\varepsilon}

\newcommand{\hDelta}{\widehat{\Delta}}

% Propositions
%\newtheorem{lemma}{Lemma}

%\newtheorem{corollary}{Corollary}
%\newtheorem{proposition}{Proposition}
%\newtheorem{definition}{Definition}
%\newtheorem{theorem}{Theorem}
%%\newtheorem{proof}{Proof}

% Misc

%\newcommand{\remark}[1]{\textbf{Remark #1.}}

\author[1]{Mohammad Gheshlaghi Azar \thanks{\texttt{mohammad.azar@northwestern.edu}}}
\author[2]{Alessandro Lazaric \thanks {\texttt{alessandro.lazaric@inria.fr}}}
\author[3]{Emma Brunskill \thanks{\texttt{ebrun@cs.cmu.edu}} }
\affil[1]{Rehabilitation Institute of Chicago, Northwestern University, Chicago}
\affil[2]{Team SequeL, INRIA Nord Europe  }
\affil[3]{School of Computer Science, Carnegie Mellon University, Pittsburgh }
\begin{document}

\maketitle

% It is OKAY to include author information, even for blind
% submissions: the style file will automatically remove it for you
% unless you've provided the [accepted] option to the icml2014
% package.

%\icmladdress{School of Computer Science, CMU}
%\icmlauthor{ Alessandro Lazaric}{alessandro.lazaric@inria.fr}
%\icmladdress{INRIA Nord Europe}
 %\icmlauthor{Emma Brunskill}{ebrun@cs.cmu.edu} 
 %\icmladdress{School of Computer Science}

% You may provide any keywords that you 
% find helpful for describing your paper; these are used to populate 
% the "keywords" metadata in the PDF but will not be shown in the document
%\icmlkeywords{boring formatting info, machine learning, ICML}

\begin{abstract}

In this paper we consider the problem of online stochastic optimization of a locally smooth function under bandit feedback. We introduce the high-confidence tree (HCT) algorithm, a novel any-time $\mathcal{X}$-armed bandit algorithm, and derive regret bounds matching the performance of existing state-of-the-art in terms of dependency on number of steps and smoothness factor. The main advantage of HCT is that it handles the challenging case of correlated rewards, whereas existing methods require that the reward-generating process of each arm is an identically and independent distributed (iid) random process. HCT also improves on the state-of-the-art in terms of its memory requirement  as well as requiring a weaker smoothness assumption on the mean-reward function in compare to the previous anytime algorithms. Finally, we discuss how HCT can be applied to the problem of policy search in reinforcement learning and we report preliminary empirical results.

%for it: we prove high probability   bounds of  $\widetilde O(n^{(d+1)/(d+2)})$ on the regret of the new algorithm, where $n$ is the  number of  steps and $d$ is the near-optimality dimension of the mean-reward function. 
%This bound matches  the-state-of-the-art in continuum-armed bandit in terms of dependency on $n$ and $d$. The main advantage of our algorithm w.r.t. the prior work is that it  applies  to the case of correlated bandit feedback, whereas the existing methods require that the feedbacks to be conditionally independent of each others. Our approach also improves on the state-of-the-art in terms of its memory requirement as well as requiring a weaker smoothness assumption on the mean-reward function in any time case.

\end{abstract} 

%%%%%%%%%%%%%%%%%%%%%%%%%%%%%%%%%%%%%%%%%%%%%%%%%%%%%%%%%%%%%%%%%%%%%%%%%%%%%%%
%% Introduction
%%%%%%%%%%%%%%%%%%%%%%%%%%%%%%%%%%%%%%%%%%%%%%%%%%%%%%%%%%%%%%%%%%%%%%%%%%%%%%%
%\input{intro.tex}
%!TEX root = paper.tex
%\vspace{-0.3in}

\section{Introduction}\label{s:introduction}
We consider the problem of maximizing the sum of the rewards obtained by sequentially evaluating an unknown function, where 
the function itself may be stochastic. This is known 
as online stochastic  optimization  under bandit feedback or $\mathcal X$-armed bandit, 
since each function evaluation can be viewed as pulling 
one of the arms in a general arm space $\mathcal X$. 
Our objective is to minimize the cumulative regret relative 
to evaluating/executing at each time point the global 
maximum of the function. 
In particular, we focus on the case that the reward 
(function evaluation) of an arm may depend on prior history of 
evaluations and outcomes. This immediately implies 
that the reward, conditioned on its corresponding arm pull, is not an independent and identically 
distributed (iid) random variable, in contrast to the 
 prior work on $\mathcal X$-armed 
bandits \citep{bull2013adaptive-tree,djolonga13high,BubeckMSS11,Srinivas09,cope2009regret,KleinbergSU08,AuerOS07}. 
%Despite very limited prior attention (\textbf{EB: any we should cite?}), 
$\mathcal X$-armed bandit with correlated reward 
is relevant to many real world optimization applications, 
including internet auctions, adaptive routing, and online 
games. As one important example, we show that the problem of policy 
search in a Markov Decision Process (MDP), a popular approach to 
learning in unknown MDPs, can be framed as an instance 
of %non-iid stochastic online optimization 
the setting we consider in this paper
 (Sect.~\ref{s:polMDP}). To the best of our knowledge, the algorithm introduced in this paper is the first to guarantee  sub-linear regret 
in continuous state-action-policy space MDPs.  %online
%policy search.
 
Our approach builds on recent advances in  $\mathcal X$-armed 
bandits for iid settings~\citep{BubeckMSS11,cope2009regret,KleinbergSU08,AuerOS07}. 
Under regularity assumptions on the mean-reward function 
(e.g. Lipschitz-smoothness), these methods  provide formal  guarantees in terms of bounds on the regret, which is proved to scale sub-linearly w.r.t.\ the number of steps $n$. 
To obtain this regret, these methods rely heavily 
on the iid assumption. 
To handle the correlated feedback, we introduce a new anytime 
$\mathcal X$-armed bandit algorithm, called \emph{high confidence tree} (HCT) (Sect.~\ref{s:algorithm}).
Similar to the HOO  algorithm of  \citet{BubeckMSS11},  
\HCT makes use of a covering binary tree for 
exploring the arm space. The tree is constructed incrementally 
in an optimistic fashion, exploring parts of the arm space 
guided by upper bounds on the potential best reward of 
the arms covered within a particular node. 

Our key insight is that to achieve good performance it is 
 only necessary to expand the tree 
by refining an optimistic node when the estimate of 
the mean-reward of that node has 
become sufficiently accurate. This allows us to obtain an accurate
estimate of the return of a particular arm even in the 
non-iid setting, under some mild ergodicity and mixing assumptions (Sect.~\ref{s:preliminaries}). 
%since it is acceptable to execute the same arm until a good 
%estimate of its average return is obtained. 
Despite 
handling a more general case of correlated feedback, our 
regret bounds matches (Sect.~\ref{ss:regret.iid}) that of 
HOO \citep{BubeckMSS11}  and zooming algorithm \citep{KleinbergSU08}, 
both of which only apply to iid setting,  in terms of dependency on the number of steps $n$ 
 and the near-optimality dimension $d$ (to be defined later). Furthermore, \HCT also requires milder assumptions on the smoothness of the function, which is required to be Lipschitz only w.r.t. the maximum, whereas HOO assumes the mean-reward to be Lipschitz also between any pair of arms close to the maximum. 
 An important part of our proof of this result 
(though we delay this and all proofs to the supplement, due to space 
considerations) is the development of concentration inequalities 
for non-iid episodic random variables. % In addition to this main result, the 
%structure of our HCT approach has a favorable space complexity 
%that scales linearly with the horizon, and a linear-logarithmic 
In addition to this main result, the 
structure of our HCT approach has a favorable sub-linear space complexity 
of $O(n^{d/(d+2)} (\log n)^{2/(d+2)})$  and a linearithmic 
runtime complexity, making it suitable for scaling to \emph{big 
data} scenarios. These results meet or improve the space and time 
complexity of prior work
designed for iid data (Sect.~\ref{ss:complexity}), and we will demonstrate this benefit 
in simulations (Sect.~\ref{s:numeric}). We also show how our approach can lead to 
finite-sample guarantees for policy search method, and provide 
preliminary simulation results which show the advantage of our 
method in the case of MDPs.

%!TEX root = paper.tex
%
%\vspace{-0.1in}
\section{Preliminaries}\label{s:preliminaries}
\noindent\textbf{The optimization problem.} Let $\mathcal X$ be a measurable  space of arms. We formalize the optimization problem as an interaction between the learner and the environment. At each time step $t$, the learner pulls an arm $x_t$ in $\mathcal X$ and the environment returns a reward $r_{t}\in[0,1]$ and possibly a context $y_{t}\in \mathcal Y$, with $\mathcal Y$ a measurable space (e.g., the state space of a Markov decision process). Whenever needed, we relate $r_t$ to the arm pulled by using the notation $r_t(x)$.
 The context $y_{t}$ and the reward $r_t$ may depend on the history of all previous rewards, pulls, contexts and the current pull $x_t$.   
For any time step $t>0$, the space of histories $\mathcal H_t:=([0,1] \times  \mathcal X \times \mathcal Y)^t$ is defined as the space of past rewards, arms,and observations (with $\mathcal H_0 = \emptyset$). 
%We also define the space of \emph{prior histories} $H^*_t:=([0,1] \times  \mathcal X \times \mathcal Y)^t$ as the space of the past observation prior to pulling the arm x_.
% We denote by $\F_t(x)$ the filtration at time $t$ given that  the learner  pulls the arm $x\in \mathcal X$ at time $t$.  
An environment $M$ corresponds to an infinite sequence of time-dependent probability measures $M=( Q_1, Q_2 ,\dots)$, such that each %$Q_t$ is defined as 
%
%\begin{equation*}
%Q_t:  \mathcal H_{t-1} \times \mathcal X \to \mathcal M( [0,1] \times \mathcal Y ) \quad \text{   for    all} \quad t=1,2,\dots,
%\end{equation*}
%
$Q_t:  \mathcal H_{t-1} \times \mathcal X \to \mathcal M( [0,1] \times \mathcal Y )$ is a mapping from the history $\mathcal H_{t-1}$ and the arm space $\mathcal X$ to the space of probability measures on rewards and contexts.
Let $\mathcal Z=([0,1] \times  \mathcal X \times \mathcal Y)$, at each step $t$ we define the random variable $z_t=(r_t, x_t, y_t)\in \mathcal Z$ and we introduce the filtration $\mathcal F_t $ as a   $\sigma$-algebra generated by $(z_1,z_2,\ldots,z_t)$. At each step $t$, the arm $x_t$ 
%by the arm 
is $\mathcal F_{t-1}$-measurable since 
it is based on all the information available up to time $t-1$. The pulling strategy of the learner can be expressed as an infinite sequence of measurable mappings $(\psi_1,\psi_2,\dots)$, where $\psi_t:  \mathcal H_{t-1}  \to \mathcal M( \mathcal X)$ maps $\mathcal H_{t-1}$ to the space of probability measures on arms. We refine this general setting with two assumptions on the reward-generating process.

\begin{definition}[Time average reward]
\label{def:ave}
For any $x\in\X$, $ S>0$ and $0< s \leq S$, the time average reward is

\begin{align}
\bar  r_{s\to S}(x) :=  \frac{1}{S-s+1}\sum_{s'=s}^S  r_{s'}(x) .
\end{align}

\end{definition}
We now state  our first assumption  which guarantees that the mean of the process is well defined (ergodicity).
\begin{asm}[Ergodicity]
\label{asm:ergod}
For any $x\in\X$, any $s>0$ and any sequence of prior pulls $(x_1,x_2,\dots,x_{s-1})$,  the process $(z_t)_{t>0}$ is such that the mean-reward function 

\begin{equation*}
f(x) := \lim{_{S\rightarrow \infty}} \E ( \bar  r_{s\to S}(x) | \mathcal F_{s-1})
\end{equation*}

exists.
\end{asm}
This assumption implies that, regardless of the history of prior observations, if arm $x$ is pulled infinitely many times from time $s$, then the time average reward converges  in expectation to a fixed point which only depends on arm $x$ and is independent from the past history. We also 
make the following mixing assumption~\citep[see e.g.,][]{levin2006markov}.
\begin{asm}[Finite mixing time]
\label{asm:mixing}
There exists a constant $\Gamma\geq 0$ (mixing time) such that for any $x\in\X$,  any $S>0$, any $0< s\leq S$ and any sequence of prior pulls $(x_1,x_2,\dots,x_{s-1})$,  the process $(z_t)_{t>0}$ is such that we have that  

\begin{equation}
|\E[  \sum{_{s'=s}^S (r_{s'}(x) - f(x))}  \big| \F_{s-1}]| \leq \Gamma.
\end{equation}

\end{asm}
This assumption implies that the stochastic reward process induced by pulling arm $x$ can not substantially deviate from $f(x)$ in expectation for more than $\Gamma$ \emph{transient} steps. Note that both assumptions trivially hold if each arm is an iid process: 
%such that the rewards $r_s(x)$ are i.i.d. given $x$: 
in this case $f(x)$ is the  mean of $x$ and $\Gamma=0$.

Given the mean-reward $f$, we assume that the maximizer $x^*\!=\!\arg\max_{x} f(x)$ exists and we denote the corresponding maximum $f(x^*)$ by $f^*$. We measure the performance of the learner over $n$ steps by its regret $R_n$ w.r.t. the  $f^*$, defined as

\begin{equation*}\label{eq:regret}
R_n := nf^*-\sum_{t=1}^{n}r_t.
\end{equation*}

 The goal of learner, at every $0\leq t\leq n$, is to choose a strategy $\psi_t$  such that the regret $R_n$ is as small as possible.

\textbf{Relationship to other models.} Although the learner observes a context $y_t$ at each time $t$, this problem differs from the contextual bandit setting ~\citep[see e.g., the extensions of the zooming algorithm to 
contextual bandits by][]{slivkins2009contextual}. 
In contextual bandits, the context $y\in\mathcal Y$ is provided 
before selecting an arm $x$, and the immediate reward $r_t$ is defined 
to be a function only of the selected arm and input context, 
$r_t(x,y)$. The contextual bandit objective is typically to minimize the regret against 
the optimal arm in the context provided at each step, $y_t$, 
i.e. $x^*_t = \arg\max r_t(x,y_t)$. A key difference is that 
in our model the reward, and next context, may depend on the 
entire history of rewards, arms pulled, and contexts, instead of only 
the current context and arm, and we define $f(x)$ only as the 
average reward obtained by pulling arm $x$. 
In this sense, our model is related to the reinforcement learning (RL) problem 
of trying to find a policy that maximizes the long run reward (see further discussion in Sect.~\ref{s:polMDP}).
Among prior work in RL our setting is most similar to the general reinforcement learning model of \citet{lattimore2013sample} which also considers an arbitrary temporal dependence between the rewards and observations. Our setting differs  from that of \citet{lattimore2013sample}, since we consider the regret in undiscounted reward  scenario, whereas \citet{lattimore2013sample} focus on proving PAC-bounds in discounted reward case. Another difference is that in our model \citep[unlike][] {lattimore2013sample} the space of observations and actions is not needed to be finite. 

%function $f$ 
%well defined in each context $y\in\mathcal Y$, so that, given an arbitrary sequence of contexts $(y_t)_t$, the objective is to minimize the regret against the optimal arm in each context, i.e., $x^*_t = \arg\max f(x; y_t)$. This somehow corresponds to maximize the immediate reward, whereas in our setting $f(x)$ is only defined as the average reward over time obtained by pulling arm $x$. 

%Our algorithms take as input a covering tree of $\X$ and perform a smart exploration of its structure. 
\textbf{The cover tree.} 
Similar to recent optimization methods~\citep[e.g.,][]{BubeckMSS11}, 
our approach seeks to minimize regret by smartly building 
an estimate of $f$ using an 
infinite binary \emph{covering tree} $\T$, 
in which each node covers a subset of $\X$.\footnote{The reader is referred to~\citet{BubeckMSS11} for a more detailed description of the covering tree.} We denote by $(h,i)$ the node at depth $h$ and index $i$ among the nodes at the same depth (e.g., the root node which covers $\X$ is indexed by $(0,1)$). By convention $(h+1,2i-1)$ and $(h+1,2i)$ refer to the two children of the node $(h,i)$. The area corresponding to each node $(h,i)$ is denoted by $\calP_{h,i}\subset \X $. These regions must be 
measurable and, at each depth, they partition $\X$ with no overlap:
%
%builds an accurate estimate of $f$ using an 
%infinite binary \emph{covering tree} $\T$, in which each node corresponds to a subset of $\X$.\footnote{The re%ader is referred to~\citet{BubeckMSS11} for a more detailed description of the covering tree.} We denote by $(%h,i)$ the node at depth $h$ and index $i$ among the nodes at the same depth (e.g., the root node which covers %the entire $\X$ is indexed by $(0,1)$). By convention $(h+1,2i-1)$ and $(h+1,2i)$ refer to the two children of% the node $(h,i)$. The area corresponding to each node $(h,i)$ is denoted by $\calP_{h,i}\subset \X $. These r%egions must be measurable and, at each depth, they should provide a full cover (partition) of $\X$ with no ove%rlap, that is:
%

%\vspace{-0.2in}
%\begin{small}
\begin{align*}
 \mathcal P_{0,1}&=\mathcal X
 \\
\mathcal P_{h,i}&=\mathcal P_{h+1,2i-1}\cup\mathcal P_{h,2i}&& \forall h\geq0 \text{ and } 1 \leq i\leq2^h.
\end{align*}
%\end{small}
%\vspace{-0.2in}

% 
For each node $(h,i)$, we define an arm $x_{h,i}\in \mathcal P_{h,i}$, 
which the algorithm pulls whenever the node $(h,i)$ is selected.

We now state a few additional geometrical assumptions. 
%our main geometrical assumptions regarding the space $\mathcal X$ and mean-reward function $f$.
%
\begin{asm}[Dissimilarity]
\label{asm:dissim}
The space $\X$ is equipped with a dissimilarity function $\ell:\X^2\to\mathbb R$ such that $\ell(x,x')\geq 0$ for all $(x,x')\in \X^2$ and $\ell(x,x)=0$.
\end{asm}
Given a dissimilarity $\ell$, the diameter of a subset $A\subseteq \X$ is defined as $\text{diam}(A):= \sup_{x,y\in A}\ell(x,y)$, while an $\ell$--open ball of radius $\eps>0$ and center $x\in \X$ is defined as $\B(x, \eps):= \{x'\in \X: \ell(x,x')\leq \varepsilon \}$. 

\begin{asm}[local smoothness]
\label{asm:Lip}
We assume that there exist constants $\nu_2, \nu_1>0$ and $0<\rho<1$ such that for all nodes $(h,i)$:

\begin{itemize}
\item[(a)] 
$\text{diam}(\mathcal P_{h,i})\leq \nu_1 \rho^h$ 
\item[(b)] $\exists$  $x^o_{h,i}\in \mathcal P_{h,i}$ s.t. 
$
\mathcal B_{h,i}:= \mathcal B(x^o_{h,i},\nu_2\rho^h) \subset \mathcal P_{h,i},
$
\item[(c)] $\mathcal B_{h,i} \cap \mathcal B_{h,j}=\emptyset$,
\item [(d)] For all $x \in\X$,  %the mean-reward function satisfies
$f^*-f(x)\leq \ell(x^*,x)$.
%\begin{equation*}
%\end{equation*}
\end{itemize}

\end{asm}

\textbf{Local smoothness.} These assumptions coincide with those in~\citep{BubeckMSS11}, except for the  local smoothness (Assumption~\ref{asm:Lip}.d), which is weaker than that of~\citet{BubeckMSS11}, where the function is assumed to be Lipschitz between any two arms $x,x'$ \textit{close} to the maximum $x^*$ (i.e., $|f(x)-f(x')| \leq \ell(x,x')$), while here we only require the function to be Lipschitz w.r.t. the maximum.

Finally, we characterize the \textit{complexity} of the problem using the near-optimality dimension, which defines how \textit{large} is the set of $\epsilon$-optimal arms in $\X$. For the sake of 
clarity, we consider a slightly simplified definition of near-optimality dimension w.r.t.~\citet{BubeckMSS11}. 

\begin{asm}[Near-optimality dimension]
\label{asm:near.optimal.dim}
Let $\epsilon = 3\nu_1\rho^h$ and $\epsilon' = \nu_2\rho^h < \epsilon$, for any subset of $\epsilon$-optimal nodes $\X_{\epsilon}=\{x\in\X: f^*-f(x)\leq \epsilon\}$, there exists a constant $C$ such that
%
%\begin{align*}
$\N\big(\X_\epsilon, \ell, \epsilon'\big) \leq C (\epsilon')^{-d}$,
%\end{align*}
%
where $d$ is the near-optimality dimension of function $f$ and $\N(\X_\epsilon, \ell, \epsilon')$ is the $\epsilon'$-cover number of the set $\X_{\epsilon}$ w.r.t. the dissimilarity measure $\ell$.\footnote{Note that, in many cases,  the near-optimality dimension $d$ can be  much smaller than $D$, the actual  dimension  of arm space $\mathcal X$ in the continuous case. In fact one can show that under some mild   smoothness assumption the near optimality dimension of a  function equals  $0$, regardless of the  dimension of its input  space $\mathcal X$~\citep[see ][for a detailed discussion]{munos2013bandits,valko2013}.}
\end{asm}
%
%
%
%
%

%%%%%%%%%%%%%%%%%%%%%%%%%%%%%%%%%%%%%%%%%%%%%%%%%%%%%%%%%%%%%%%%%%%%%%%%%%%%%%%
%% Preliminaries
%%%%%%%%%%%%%%%%%%%%%%%%%%%%%%%%%%%%%%%%%%%%%%%%%%%%%%%%%%%%%%%%%%%%%%%%%%%%%%%
%\input{prelim.tex}

%%%%%%%%%%%%%%%%%%%%%%%%%%%%%%%%%%%%%%%%%%%%%%%%%%%%%%%%%%%%%%%%%%%%%%%%%%%%%%%
%% The Algorithm
%\input{alg.tex}

%%%%%%%%%%%%%%%%%%%%%%%%%%%%%%%%%%%%%%%%%%%%%%%%%%%%%%%%%%%%%%%%%%%%%%%%%%%%%%%
%% The Algorithm

%!TEX root = paper.tex
%
%%%%%%%%%%%%%%%%%%%%%%%%%%%%%%%%%%%%%%%%%%%%%%%%%%%%%%%%%%%%%%%%%%%%%%%%%%%%%%%
%
%\vspace{-0.1in}
\section{The High Confidence Tree algorithm}\label{s:algorithm}
\begin{algorithm}[h!]
%\begin{small}
\begin{algorithmic}
\Require Parameters $\nu_1>0 $, $\rho \in (0,1)$, $c>0$, tree structure $(\mathcal P_{h,i})_{h\geq 0, 1 \leq i\leq 2^i}$ and confidence $\delta$.
\State \textbf{Initialize} $t=1$, $\T_t=\{(0,1),(1,1),(1,2)\}$, $H(t)=1$, $U_{1,1}(t)=U_{1,2}(t)=+\infty$,
\Loop
\If {$t=t^+$} \Comment{Refresh phase}
%\State $t_{\text {dob}}\leftarrow2 t_{\text {dob}}$
\ForAll{$(h,i)\in \T_t$ } %\Comment{ re-update the U-values } 
\State $U_{h,i}(t) \leftarrow \hmu_{h,i}(t) + \nu_1\rho^h + \sqrt{\frac{ c^2  \log(1/\tilde \delta(t^+))}{ T_{h,i}(t)} }$
\EndFor;
\ForAll{$(h,i)\in \T_t$ Backward  from $H(t)$} %\Comment{ re-update the B-values} 
\If {$(h,i)\in \text{leaf}(\T_t)$}
\State $ B_{h,i}(t) \leftarrow U_{h,i}(t) $
\Else
\State  $ B_{h,i}(t) \leftarrow \min\big[ U_{h,i}(t),\!\!\max\limits_{j\in \{2i-1,2i\} }\!\!\!\! B_{h+1,j}(t)\big]$
\EndIf
\EndFor
\EndIf;
%\State $H(t)=\text{Depth}(\mathcal T_t)$ 
\State $\{(h_t,i_t), P_t\}\leftarrow \text{OptTraverse}(\T_t)$
\If {Algorithm \textbf{\HCTiid}}
%\State $[\hmu_{h_t,i_t}(t),T_{h_t,i_t}(t)]=\text{SampleArm}( h_t, i_t )$
\State Pull arm $x_{h,i}$ and observe $r_t$
\State $t=t+1$
\ElsIf{Algorithm \textbf{\HCTgamma}}
\State $T_{cur}=T_{h_t,i_t}(t)$
\While{$T_{h_t,i_t}(t)< 2 T_{cur} \textbf{ AND } t < t^+$ } 
%\While{$T_{h_t,i_t}(t)< 2 T_{cur} \textbf{ AND } t < \min(t^+,n+1)$ } 
%\State $[\hmu_{h_t,i_t}(t),T_{h_t,i_t}(t)]=\text{SampleArm}( h_t, i_t )$
\State Pull arm $x_{h,i}$ and observe $r_t$
\State $(h_{t+1},i_{t+1})=(h_t,i_t)$
\State $t=t+1$
\EndWhile
\EndIf
\State Update counter $T_{h_t,i_t}(t)$ and empirical average $\hmu_{h_t,i_t}(t)$
%\State $[\hmu_{h_t,i_t}(t),T_{h_t,i_t}(t)]=\text{SampleArm}( h_t, i_t )$
%\State $t=t+1$
\State $U_{h_t,i_t}(t)  \leftarrow  \hmu_{h_t,i_t}(t) + \nu_1\rho^h + \sqrt{\frac{ c^2    \log( 1/ \tilde \delta(t^+) )  }{ T_{h_t,i_t}(t)}}$
\State $ \text{UpdateB}(\T_t,P_t, (h_t,i_t))$
\State $\tau_h(t) = \frac{c^2 \log(1/\tilde \delta(t^+))}{\nu_1^2} \rho^{-2h_t}$
\If {$T_{h_t,i_t}(t) \geq \tau_{h_t}(t)$ AND $(h_t,i_t)=$leaf$(\T)$  }
\State $\calI_t = \{(h_t+1,2i_t-1), (h_t+1, 2i_t)\}$ %\Comment{Expand both branches of the optimistic node}
\State  $\T \leftarrow \T \cup \calI_t$ % \Comment{Add the new nodes to the tree}
\State $U_{h_t+1,2i_t-1}(t)=U_{h_t+1, 2i_t}(t)=+\infty$
%\If {$h_t = H(t)$}
%\State \fcolorbox{red}{white}{$H(t) \leftarrow H(t)+1$}
%\State \fcolorbox{red}{white}{$H(t) = \text{Depth}(\T_t)$} \Comment{Current depth}
%\State \fcolorbox{red}{white}{$\text{resample} \leftarrow true$}
%\ForAll{$(h,i)\in \text{leaf}(\T_t)$ } \Comment{ \fcolorbox{red}{white}{re-update the U-values} } 
%\State Compute $U_{h,i}(t) = \hmu_{h,i}(t) + \nu_1\rho^h + \sqrt{( c^2 (  H(t) + \log(1/\delta)  ) ) / T_{h,i}(t)}$
%\EndFor
%\ForAll{$(h,i)\notin\text{Leaf}(\T_t)$ backward} \Comment{Update the $U$-values}
%\State $U_{h,i}(t) = \max\big[U_{h+1,2i-1}(t), U_{h+1,2i}(t)\big]$
%\EndFor
%\EndIf
\EndIf
%\State $\calI_t = \{(h_t+1,2i_t-1), (h_t+1, 2i_t)\}$ \Comment{Expand both branches of the optimistic node}
%\State  $\T \leftarrow \T \cup \calI_t$ \Comment{Add the new nodes to the tree}
%\ForAll{$(h,i) \in \calI_t$}
%\State $T_{h,i}(t) = 0$
%\State \fcolorbox{red}{white}{Compute $U_{h,i}(t) = \hmu_{h,i}(t) + \nu_1\rho^h + c\sqrt{H(t)/T_{h,i}(t)}$}
%\EndFor
%\If{$\text{resample}=true$}
%\State Compute path $P$ to $(h,i)$
%\State $\tau_h(t) = \frac{c^2 H(t)}{\nu_1^2} \rho^{-2h}$
%\While $T_{h,i}\leq  \tau_h(t)$ 
%\State $[\hmu_{h,i}(t),T_{h,i}(t)]=\text{SampleArm}(h,i)$
\EndLoop
%\EndFor
%\EndFor
%\EndIf
%\EndWhile
\end{algorithmic}
%\end{small}
\par%%\vspace{-0.15in}
%\caption{The  \HCT algorithm and its two instances \HCTiid and \HCTgamma.}
\caption{The  \HCT algorithm.}
\label{f:hct.iid.anytime}
\par%%\vspace{-0.15in}
\end{algorithm}
%\textit{Introduce the anytime version of the algorithm.}
%
We now introduce the High Confidence Tree (HCT) algorithm for 
stochastic online optimization under bandit feedback. 
Throughout this discussion, a function evaluation is 
equivalent to the reward received from pulling an 
arm (since an arm corresponds to selecting an 
input to evaluate the function at). 
We first describe the general algorithm framework before 
discussing two particular variants: \HCTiid, designed for the case when rewards of a given arm are iid and \HCTgamma 
which handles the correlated feedback case, where  
the reward from pulling an arm may depend on 
all prior arms pulled and resulting outcomes. 
Alg.~\ref{f:hct.iid.anytime} shows the structure of the algorithm 
for \HCTiid and \HCTgamma, noting the minor modifications 
between the two.  
%
%\begin{algorithm}[ht]
%\begin{small}
%\begin{algorithmic}
%\Require Node to sample $(h,i)$ 
%% \State Choose an arm $x_{h,i} \in  \calP_{h,i}$
%%\State \fcolorbox{red}{white}{$\tau_h(t) = \frac{c^2 H(t)}{\nu_1^2} \rho^{-2h}$}
%%\While{$T_{h,i}(t) \leq \tau_h(t)$} \Comment{Episode of length $\tau_h$}
%\State Pull arm $x_{h,i}$ and observe $r_t$
%\State Update $T_{h,i}(t) \leftarrow T_{h,i}(t-1)+1$
%\State Update $\hmu_{h,i}(t) \leftarrow \Big(1-\frac{1}{T_{h,i}(t)}\Big) \hmu_{h,i}(t-1) + \frac{r_t}{T_{h,i}(t)}$
%%\State $t \leftarrow t+1$
%%\EndWhile
%\end{algorithmic}
%\end{small}
%\par%%\vspace{-0.15in}
%\caption{The \textit{SampleArm} function.}
%\label{f:samplearm.iid.anytime}
%\par%%\vspace{-0.15in}
%\end{algorithm}
%
%
\begin{algorithm}[ht]
%\begin{small}
\begin{algorithmic}
\Require Tree $\T$
\State $(h,i) \leftarrow (0,1)$, $P\leftarrow (0,1)$ %\Comment{From the root}
\State $T_{0,1}=\tau_0(t)=1$;
\While{$(h,i) \notin \text{Leaf}(\T) \textbf{ AND } T_{h,i}(t) \geq \tau_h(t)$ } %\Comment{Traverse the tree selecting optimistic nodes}
\If{$B_{h+1,2i-1} \geq  B_{h+1,2i}$} %\Comment{Select the most promising Child}
\State $(h,i)\leftarrow(h+1,2i-1)$
\Else
\State $(h,i)\leftarrow (h+1,2i)$
\EndIf
%\State $\tau_h(t)=\frac{c^2 \log(1/\tilde \delta(t_{\text {dob}}) )}{\nu_1^2}\rho^{-2h}$
\State $P\leftarrow P \cup \{(h,i)\}$
\EndWhile
\State \Return $(h,i)$ and $P$
\end{algorithmic}
%\end{small}
\par%%\vspace{-0.15in}
\caption{The \textit{OptTraverse} function.}
\label{f:opttraverse.iid}
\par%%\vspace{-0.15in}
\end{algorithm}

\begin{algorithm}[ht]
%\begin{small}
\begin{algorithmic}
\Require Tree $\T$,  the path $P_t$,  selected node $(h_t,i_t)$
\par%%\vspace{-0.15in}
\If {$(h_t,i_t) \in \text{Leaf}(\T) $}
\State $B_{h_t,i_t}(t)=U_{h_t,i_t}(t)$
\Else
\State $B_{h_t,i_t}(t)= \min\big[ U_{h_t,i_t}(t), \max\limits_{j \in \{2i_t-1,2i_t\}} B_{h_t+1,j}(t) \big]$
\EndIf;
\ForAll{$(h,i) \in P_t-(h_t,i_t) $ backward} %\Comment{Update the $U$-values}
\State $B_{h,i}(t) = \min\big[ U_{h,i}(t), \max\limits_{j \in \{2i-1,2i\}} B_{h+1,j}(t) \big]$
\EndFor
\label{f:opttraverse.fast}
\par%%\vspace{-0.15in}
\end{algorithmic}
%\end{small}
\caption{The \textit{UpdateB} function.}
\end{algorithm}

\textbf{The general structure.} The \HCT algorithm relies on a binary covering tree $\T$ provided as input used to construct a hierarchical approximation of the mean-reward function $f$. At each node $(h,i)$ of the tree, the algorithm keeps track of some statistics regarding the corresponding arm $x_{h,i}$ associated with the node $(h,i)$. 
These include the empirical 
estimate $\widehat \mu_{h,i}(t)$ of the mean-reward function corresponding for arm $x_{h,i}$ at time step $t$ computed as
%
%\begin{small}
%

%\vspace{-0.2in}
%\begin{small}
\begin{equation}
\label{eq:def.stats}
\widehat \mu_{h,i}(t):=(1/T_{h,i}(t)) \sum\nolimits_{s=1}^{T_{h,i}(t)} r^s(x_{h,i}),
\end{equation}
%\end{small}
%\vspace{-0.2in}

%
%\end{small}
%
where $T_{h,i}(t)$ is the number of times node $(h,i)$ has been selected in the past and $r^s(x_{h,i})$ denotes the $s$-th reward observed after pulling $x_{h,i}$ (while we previously used $r_t$ to denote the $t$-th sample of the overall process). As explained in Sect.~\ref{s:preliminaries}, although a node is associated to a single arm $x_{h,i}$, it also covers a full portion of the input space $\X$, i.e., the subset $\calP_{h,i}$. Thus, similar to the HOO algorithm~\citep{BubeckMSS11}, \HCT also maintains two upper-bounds, $U_{h,i}$  and $B_{h,i}$, which are meant to bound the mean-reward $f(x)$ of all the arms $x\in\calP_{h,i}$. In particular, for any node $(h,i)$, the upper-bound $U_{h,i}$ is computed directly from the observed reward for pulling $x_{h,i}$ as
%

%\vspace{-0.2in}
%\begin{small}
\begin{equation}
\label{eq:def.stats.up}
U_{h,i}(t):=\hmu_{h,i}(t) + \nu_1\rho^h + \sqrt{c^2    \log( 1/ \tilde \delta(t^+) )/T_{h,i}(t)},
\end{equation}
%\end{small}
%\vspace{-0.2in}

%
where $t^+ = 2^{\lfloor \log(t) \rfloor+1}$ and $\tilde \delta(t):=\min\{c_1 \delta/t, 1\}$.
Intuitively speaking, the second term is related to the \textit{resolution} of node $(h,i)$ and the third term accounts for the \textit{uncertainty} of $\hmu_{h,i}(t)$ in estimating the mean-reward $f(x_{h,i})$. The $B$-values are designed to have a tighter upper bound on $f(x)$ by taking the minimum between $U_{h,i}$ for the current node, and the maximum upper bound of the node's two child nodes, if present.\footnote{Since the node's children together contain the same input space as the node (i.e., $\mathcal P_{h,i}=\mathcal P_{h+1,2i-1}\cup\mathcal P_{h,2i}$), the node's maximum cannot be greater than the maximum of its children.} More precisely, 
%

%\vspace{-0.2in}
%\begin{small}
\begin{equation}
\label{eq:def.stats.upB}
B_{h,i}(t)\!=\!
\begin{cases}
U_{h,i}(t) &(h,i)\!\in\!\text{leaf}(\T_t)
\\
&
\\
\begin{aligned}
\min[ U_{h,i} (t), \!\!\!\max_{j \in \{2i-1,2i\}} \!\!\!B_{h+1,j}(t)]
 \end{aligned}&\text{otherwise}.
\end{cases}
\end{equation}
%\end{small}
%\vspace{-0.2in}

%
To identify which arm to pull, the algorithm traverses the tree along a path $P_t$ obtained by selecting nodes with maximum $B_{h,i}$ until it reaches an optimistic node $(h_t,i_t)$, which is either a leaf or a node which is not pulled enough w.r.t. to a given threshold $\tau_h(t)$, i.e., $T_{h,i}(t) \leq \tau_h(t)$ (see function \textsl{OptTraverse} in Alg.~\ref{f:opttraverse.iid}). Then the arm $x_{h_t,i_t}\in\calP_{h_t,i_t}$ corresponding to selected node $(h_t,i_t)$ is pulled. 

The key step of \HCT is in deciding when to expand the tree. 
We expand a leaf node only
 if we have pulled its corresponding arm 
a sufficient number of times such that the uncertainty over 
the maximum value of the arms contained within that node 
is dominated by size of the subset of $\X$ it covers. 
Recall from Equation~\ref{eq:def.stats.up} 
that the  upper bound $U_{h,i}$ of a node $(h,i)$ 
two additional terms added to the empirical average reward. 
The first $\nu_1\rho^h$ is a constant that depends 
only on the node depth, and bounds the possible 
difference in the mean-reward function between the 
representative arm for this node and all other arms 
also contained in this node, i.e., the difference between $f(x_{h,i})$ and $f(x)$ for any other $x\in\calP_{h,i}$ (as follows from 
 Assumptions~\ref{asm:dissim} and~\ref{asm:Lip}). 
The second term depends only on $t$ and decreases with the number of pulls to this node. At some point, 
the second term will become smaller than the 
first term, meaning that the uncertainty over the possible 
rewards of nodes in $\calP_{h,i}$ becomes dominated by the potential 
difference in rewards amongst arms that are contained within the same node. This means that the domain $\calP_{h,i}$ is too large, and thus the resolution of the current approximation of $f$ in that region needs to be increased. Therefore our approach 
chooses the point at which these two terms become
 of the same magnitude to expand a node, 
which occurs when the the number of pulls $T_{h_t,i_t}(t)$ has exceeded a threshold 
%
%

%\vspace{-0.2in}
%\begin{small}
\begin{align}\label{eq:tau}
\tau_h(t) := c^2 \log(1/\tilde\delta(t^+))\rho^{-2h_t}/\nu_1^2.
\end{align}
%\end{small}
%\vspace{-0.2in}

%
%
(see Sect.~\ref{app:proof1} of the supplement for further discussion). It is 
at this point that expanding the node to 
two children 
can lead to a more accurate approximation of $f(x)$, since $\nu_1\rho^{h+1} \leq \nu_1\rho^{h}$. Therefore if $T_{h_t,i_t}(t)\geq\tau_h(t)$, the algorithm expands the leaf, creates both children leaves, and set their $U$-values to $+\infty$. Furthermore, notice that this expansion only occurs for nodes which are likely to contain $x^*$. In fact, \textsl{OptTraverse} does select nodes with big $B$-value, which in turn receive more pulls and are thus expanded first.
The selected arm $x_{h_t,i_t}$ is pulled either for a single time step (in \HCTiid) or for a full episode (in \HCTgamma), and then the statistics of all the nodes along the optimistic path $P_t$ are updated backwards. The statistics of all the nodes outside the optimistic path remain unchanged.%, that is, $\hmu_{h,i}(t)=\hmu_{h,i}(t-1)$, $T_{h,i}(t)=T_{h,i}(t-1)$ for all $(h,i)\in (\T_t-P_t)$.

As \HCT is an anytime algorithm,  we periodically 
need to recalculate the node upper bounds to guarantee their validity with \textit{enough} probability (see supplementary material for a more precise discussion). To do so, at the beginning of each step $t$, 
the algorithm verifies whether the $B$ and $U$ values need to be refreshed or not. In fact, in the definition of $U$ in Eq.~\ref{eq:def.stats.up}, the uncertainty term depends on the confidence $\tilde \delta(t^+)$, which changes at $t=1,2,4,8,\ldots$. Refreshing the $U$ and $B$ values triggers a ``resampling phase'' of the internal nodes of the tree $\T_t$ along the optimistic path. In fact, the second condition in the \textsl{OptTraverse} function (Alg.~\ref{f:opttraverse.iid}) forces \HCT to pull arms that belong to the current optimistic path $P_t$ until the number of pulls $T_{h,i}(t)$ becomes greater than $\tau_{h}(t)$ again. Notice that the choice of the confidence term $\tilde\delta$ is particularly critical. For instance, choosing a more natural $\tilde\delta(t)$ would tend to trigger the refresh (and the resampling) phase too often thus increasing the computational complexity of the algorithm and seriously affecting its theoretical properties in the correlated feedback scenario. \footnote{If we refresh  the upper-bound statistics at every time step the algorithm may select a different arm at every time step, whereas in  correlated feedback scenario having a  small number of switches is critical for the convergence of the algorithm.} On the other hand, the choice of $\tilde\delta(t^+)$ limits the need to refresh the $U$ and $B$ values to only $O(\log(n))$ times over $n$ rounds and guarantees that $U$ and $B$ are valid upper bounds with high probability.

%We now discuss \HCTiid and \HCTgamma. In \HCTiid we define 
%\begin{equation}
%\label{eq:def.stats.up_iid}
%U_{h,i}(t)  =  \hmu_{h,i}(t) + \nu_1\rho^h + \sqrt{\frac{c^2    \log( 2/ \tilde \delta(t^+) ) }{T_{h,i}(t)}}
%\end{equation}
%%\end{small}
%and pull a selected arm for a single step before 
%retraversing the tree from the root to again find 
%an optimistic node. 
%
%In contrast, in \HCTgamma we define 
%\begin{equation}
%\label{eq:def.stats.up}
%U_{h,i}(t)  =  \hmu_{h,i}(t) + \nu_1\rho^h + \sqrt{\frac{c^2    \log( 1/ \tilde \delta(t) ) }{T_{h,i}(t)}}.
%\end{equation}
%%\end{small}
%However, in contrast to \HCTiid, \HCTgamma 
%proceeds in episodes. 

 \textbf{\HCTiid and \HCTgamma.}
The main difference between the two implementations of \HCT is that, while \HCTiid pulls the selected arm for only one step before re-traversing the tree from the root to again find another optimistic node, \HCTgamma
pulls the the representative arm of the optimistic node for an episode of $T_{cur}$ steps, where $T_{cur}$ is the number of pulls of arm $x_{h,i}$ at the beginning of episode. In other words, 
the algorithm  doubles the number of pulls  of each arm throughout the episode. Note that not all the episodes may actually finish after $T_{cur}$ steps and double the number of pulls:  The algorithm may interrupt the episode when the confidence bounds of $B$ and $U$ are not valid anymore (i.e., $t\geq t^+$) and perform a refresh phase. 
%At that point gives  higher priority to finishing the current  phase than finishing the current episode, that is, once $t=t^+$ the algorithm finishes the current episode regardless of whether the episode has already ran its normal course or not. Node expansion is done in the same way in both \HCTiid and \HCTgamma. 

The reason for this change is that in order to accurately estimate the mean-reward given correlated bandit 
feedback, it is necessary to pull an arm for a series 
of pulls rather than a single pull. Due to our 
assumption on the mixing time (Assumption.~\ref{asm:mixing}), 
pulling an arm for a sufficiently long sequence will 
provide an accurate estimate of the potential mean 
reward even in the correlated setting, thus ensuring that the empirical average rewards $\hmu_{h,i}$ actually concentrates towards 
their mean value (see Lem.~\ref{eq:EgodtoHoff} in the supplementary material). It is this mechanism, 
coupled with only expanding the nodes after obtaining 
a good estimate of their mean reward, that allows us 
to handle correlated feedback setting. 
Although in this sense \HCTgamma is more general, we do however include the \HCTiid variant because 
whenever the rewards are iid it performs better than \HCTgamma. This is due to the fact that, unlike \HCTiid, \HCTgamma has to keep pulling an arm for a full episode even when there is evidence that another arm could be better. We also notice that there is a small difference in the constants  $c_1$  and $c$ between \HCTiid and \HCTgamma: in the case of \HCTiid $c_1:=\sqrt[8]{\rho/(3\nu_1)}$ and $c:=2\sqrt{1/(1-\rho)}$, whereas \HCTgamma  uses $c_1:=\sqrt[9]{\rho/(4\nu_1)}$ and $c:=3(3\Gamma+1)\sqrt{1/(1-\rho)}$.

%%%%%%%%%%%%%%%%%%%%%%%%%%%%%%%%%%%%%%%%%%%%%%%%%%%%%%%%%%%%%%%%%%%%%%%%%%%%%%%
%\input{theory.tex}

%\vspace{-0.1in}

\section{Theoretical Analysis}\label{s:main}
In this section we analyze the regret and the complexity of \HCT. All the proofs are reported in the supplement.
%
%%%%%%%%%%%%%% Regret Anaylsis
%
\subsection{Regret Analysis}\label{ss:regret.iid}
We start by reporting a bound on the maximum depth of the trees generated by \HCT. 
\begin{lemma}\label{lem:bound.depth.anytime}
Given the threshold $\tau_h(t)$ in Eq.~\ref{eq:tau}, the depth $H(n)$ of the tree $\T_n$ is bounded as
%
%

%\vspace{-0.2in}
%\begin{small}
\begin{align}\label{eq:max.depth}
H(n) \leq H_{\max}(n) = 1/(1-\rho)\log(n\nu_1^2/(2(c\rho)^2)).
\end{align}
%\end{small}
%\vspace{-0.2in}

%
%
\end{lemma}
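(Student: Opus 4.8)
The plan is to show that the tree can only grow deeper through node expansions, that expanding a node at depth $h$ requires a number of pulls of that node scaling like $\rho^{-2h}$, and hence, since only $n$ pulls are available, that the maximal depth of an expanded node is logarithmic in $n$.

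First I would record the structural fact driving everything. For $H(n)\geq 2$, a node at depth $H(n)$ belongs to $\T_n$ only because, at some step $t\leq n$, its parent at depth $H(n)-1$ was a leaf that passed the expansion test of Alg.~\ref{f:hct.iid.anytime}, i.e.\ it satisfied $T_{h,i}(t)\geq\tau_h(t)$ with $h=H(n)-1$. A given node is pulled at most once per time step (in \HCTgamma every pull inside an episode also increments $t$), hence $T_{h,i}(t)\leq t\leq n$ at that moment, so $\tau_{H(n)-1}(t)\leq n$. Unfolding $\tau$ from Eq.~\ref{eq:tau}, this reads
\begin{align*}
\frac{c^2\,\log\!\big(1/\tilde\delta(t^+)\big)}{\nu_1^2}\,\rho^{-2(H(n)-1)} \;\leq\; n .
\end{align*}

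Next I would solve for $H(n)$. Using $\rho^{-2(H(n)-1)}=\rho^{2}\rho^{-2H(n)}$, the last inequality gives $\rho^{-2H(n)}\leq n\nu_1^2/\big((c\rho)^2\log(1/\tilde\delta(t^+))\big)$, and taking logarithms,
\begin{align*}
H(n)\;\leq\;\frac{1}{2\log(1/\rho)}\,\log\!\Big(\frac{n\,\nu_1^2}{(c\rho)^2\,\log(1/\tilde\delta(t^+))}\Big).
\end{align*}
Two elementary facts then bring this to the stated form: (i) $\log(1/\rho)\geq 1-\rho$ for $\rho\in(0,1)$, so the prefactor is at most $\tfrac{1}{2(1-\rho)}\leq\tfrac{1}{1-\rho}$; and (ii) $\tilde\delta(t^+)=\min\{c_1\delta/t^+,1\}\leq 1$, so $\log(1/\tilde\delta(t^+))\geq 0$ and is in fact bounded below by a constant for the relevant range of $t$. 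The factor-of-two slack in (i), combined with (ii), is exactly what permits replacing the denominator inside the logarithm by $2(c\rho)^2$, yielding $H(n)\leq H_{\max}(n)$; the residual case $H(n)\leq 1$ is covered by a mild lower bound on $n$ (for which $H_{\max}(n)\geq 1$).

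The argument is short and essentially forced by the expansion rule, so I expect the only real obstacle to be the bookkeeping around the anytime confidence term $\tilde\delta(t^+)$. Since $\tilde\delta$ is piecewise constant, refreshed only at $t=1,2,4,\dots$, and the deepest expansion occurs at an a priori unknown $t\leq n$, one must control $\log(1/\tilde\delta(t^+))$ uniformly in $t$ via $\tilde\delta(t^+)\leq c_1\delta/t^+$ together with $t<t^+\leq 2t$, and then check that the constants make the two approximations above go through; everything else is immediate from the definition of $\tau_h$ and the counting bound $T_{h,i}(t)\leq t$.
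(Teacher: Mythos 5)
Your proof is correct and is in substance the same argument as the paper's: the paper sums the expansion thresholds $\tau_h(t_{h,i}) \ge (c^2/\nu_1^2)\rho^{-2h}$ over all expanded depths of a worst-case linear tree and then lower-bounds the sum by its deepest term, which reduces to exactly your single-node inequality $\tau_{H(n)-1}(t)\le T_{h,i}(t)\le t\le n$. The only difference is constant bookkeeping: the paper discards $\log(1/\tilde\delta(\cdot))$ early (implicitly using $\log(1/\tilde\delta(t))\ge 1$) to obtain the factor $2$ inside the logarithm, whereas you keep it and absorb the discrepancy in the slack between $\tfrac{1}{2(1-\rho)}$ and $\tfrac{1}{1-\rho}$ under a mild lower bound on $n$ — both handlings are adequate for the stated bound.
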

%\vspace{-0.1in}
This bound guarantees that \HCT never expands trees beyond depth $O(\log n)$. This is ensured by the fact the \HCT waits until the value of a node $f(x_{h,i})$ is sufficiently well estimated before expanding it and this implies that the number of pulls exponentially grows with the depth of tree, thus preventing the depth to grow linearly as in HOO.

We report regret bounds in high probability, bounds in expectation can be obtained using standard techniques. 
\begin{theorem}[Regret bound of \HCTiid]
\label{thm:hct.iid}
Pick a  $\delta\in(0,1)$. %, $\tilde \delta(t)=\sqrt[8]{\rho/(3\nu_1)}\delta/t$, and $c=2\sqrt{1/(1-\rho)}$. 
  Assume that at each step $t$, the reward $r_t$, conditioned on $x_t$, is independent of all prior random events and the immediate mean reward $f(x)=\mathbb E(r|x)$ exists for every $x\in \mathcal X$. Then under Assumptions~\ref{asm:dissim}--\ref{asm:near.optimal.dim} the regret of \HCTiid in $n$ steps is, with probability (w.p.) $1-\delta$,\footnote{Constants are provided in Sect.~\ref{app:proof1} of the supplement.}
%
%

%\vspace{-0.2in}
%\begin{small}
\begin{align*}
R_n &\leq O\big(\big(\log \left(n/\delta\right)\big)^{1/(d+2)}n^{(d+1)/(d+2)}\big).
\end{align*}
%\end{small}
%
%\vspace{-0.2in}

%
%
  
\end{theorem}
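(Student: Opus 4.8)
I would run the optimism-in-the-face-of-uncertainty analysis used for tree-structured $\X$-armed bandits (HOO, zooming), adapted to the episode/refresh bookkeeping of \HCT. The first ingredient is a ``clean'' high-probability event $\xi$ on which all empirical statistics are simultaneously accurate: for every node $(h,i)$ of depth at most $H_{\max}(n)$ and every $t\le n$,
\[
|\hmu_{h,i}(t)-f(x_{h,i})|\le\sqrt{c^2\log(1/\tilde\delta(t^+))/T_{h,i}(t)}.
\]
Under the iid assumption of the theorem the samples $r^s(x_{h,i})$ are, conditionally on $(h,i)$ being pulled, iid in $[0,1]$ with mean $f(x_{h,i})$, so Hoeffding's inequality together with a union bound over the polynomially-many (in $n$, by Lemma~\ref{lem:bound.depth.anytime}) such nodes, over $T_{h,i}(t)\in\{1,\dots,n\}$, and over the $O(\log n)$ distinct values of $t^+$ gives $\Prob(\xi)\ge1-\delta/2$. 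The choice $\tilde\delta(t)=\min\{c_1\delta/t,1\}$ evaluated at $t^+=2^{\lfloor\log t\rfloor+1}$ is exactly what keeps this union bound at $O(\log(n/\delta))$ inside the logarithm and caps the refresh/resampling phases at $O(\log n)$.

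\textbf{Validity of the bounds and per-pull regret.} On $\xi$ I would show, by induction on depth, that the refreshed $U$ and $B$ values are valid upper bounds, so that in particular every node on the traversed path $P_t$ satisfies $B_{h,i}(t)\ge f^*$. The base case uses Assumption~\ref{asm:Lip}(d),(a): the depth-$h$ node containing $x^*$ has $f(x_{h,i})\ge f^*-\ell(x^*,x_{h,i})\ge f^*-\text{diam}(\calP_{h,i})\ge f^*-\nu_1\rho^h$, hence $U_{h,i}(t)\ge f^*$ on $\xi$ by~(\ref{eq:def.stats.up}); this propagates to $B_{h,i}(t)\ge f^*$ through~(\ref{eq:def.stats.upB}); and since \textsl{OptTraverse} descends into the child of larger $B$-value while $B_{h,i}(t)\ge f^*$ forces $\max_j B_{h+1,j}(t)\ge f^*$, the inequality survives all along $P_t$. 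Thus the pulled node obeys $U_{h_t,i_t}(t)\ge f^*$, which with~(\ref{eq:def.stats.up}) and the concentration on $\xi$ yields, for every pull, $\Delta_{h_t,i_t}:=f^*-f(x_{h_t,i_t})\le\nu_1\rho^{h_t}+2\sqrt{c^2\log(1/\tilde\delta(t^+))/T_{h_t,i_t}(t)}$. Evaluated at the last time a node is pulled this bounds $\Delta_{h,i}$ in terms of $T_{h,i}(n)$, and whenever $T_{h,i}(t)\ge\tau_h(t)$ the square-root term is at most $\nu_1\rho^h$ by the very definition~(\ref{eq:tau}) of $\tau_h$, so $\Delta_{h,i}\le3\nu_1\rho^h$ — in particular for every node \HCT ever expands.

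\textbf{Counting and aggregation.} Because \textsl{OptTraverse} never pulls a node beyond its threshold, each depth-$h$ node is pulled at most $\tau_h(n)=O(\log(n/\delta)\,\rho^{-2h})$ times, and by Lemma~\ref{lem:bound.depth.anytime} only depths $h\le H_{\max}(n)=O(\log n)$ arise. The crucial counting step: every expanded node at depth $h$ has $x_{h,i}\in\X_{3\nu_1\rho^h}$ (by $\Delta_{h,i}\le3\nu_1\rho^h$), while Assumption~\ref{asm:Lip}(b),(c) makes the balls $\B_{h,i}$ of radius $\nu_2\rho^h$ it carries pairwise disjoint; a packing-versus-covering argument and the near-optimality dimension (Assumption~\ref{asm:near.optimal.dim}, whose $\eps=3\nu_1\rho^h$, $\eps'=\nu_2\rho^h$ match exactly) then bound the number of such nodes — hence of all nodes ever selected at depth $h$ — by $O((\nu_2\rho^h)^{-d})=O(\rho^{-dh})$. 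Now write $R_n=\widetilde R_n+\sum_t(f(x_{h_t,i_t})-r_t)$ with $\widetilde R_n=\sum_{(h,i)}T_{h,i}(n)\Delta_{h,i}$; the second sum is a sum of bounded martingale differences (iid assumption), hence $O(\sqrt{n\log(1/\delta)})$ with probability $1-\delta/2$ by Hoeffding--Azuma. For $\widetilde R_n$, split the nodes at a cutoff depth $\barH$: the depths $h\le\barH$ contribute $\sum_{h\le\barH}O(\rho^{-dh})\cdot O(\log(n/\delta)\rho^{-2h})\cdot O(\nu_1\rho^h)=O(\log(n/\delta)\,\rho^{-(d+1)\barH})$, while the depths $h>\barH$ have per-pull regret $O(\rho^{\barH})$ over at most $n$ pulls in total (up to a careful treatment of the not-yet-expanded ``frontier'' leaves), contributing $O(n\rho^{\barH})$. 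Balancing the two terms at $\rho^{-\barH}$ of order $(n/\log(n/\delta))^{1/(d+2)}$ gives $\widetilde R_n=O((\log(n/\delta))^{1/(d+2)}n^{(d+1)/(d+2)})$, which dominates the $O(\sqrt{n\log(1/\delta)})$ term because $(d+1)/(d+2)\ge1/2$; intersecting $\xi$ with the Azuma event (total probability $\ge1-\delta$) proves the theorem.

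\textbf{Main obstacle.} The concentration bookkeeping is routine; the genuinely delicate part is the interaction, in the aggregation step, between the per-depth cell count $O(\rho^{-dh})$ coming from the near-optimality dimension and the global pull budget $\sum_{h,i}T_{h,i}(n)=n$ when one accounts for the frontier leaves at deep levels — neither the per-cell cap $\tau_h(n)$ nor the count $O(\rho^{-dh})$ is simultaneously tight at all depths, and it is the fact that a depth-$h$ cell can exist only after $\Omega(\rho^{-2h})$ pulls have been spent on its ancestors that forces the deep levels to be of lower order and the logarithmic exponent to come out as $1/(d+2)$. A secondary subtlety, specific to the anytime design, is verifying that the refresh/resampling triggered by $\tilde\delta(t^+)$ neither invalidates the $U$/$B$ upper bounds nor inflates the regret beyond the $O(\log n)$ factors already accounted for.
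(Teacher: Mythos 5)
Your proposal follows essentially the same route as the paper's proof: the same high-probability concentration event built from Hoeffding plus a union bound controlled by $\tilde\delta(t^+)$, the same optimistic propagation of the $U$/$B$ values using Assumption~\ref{asm:Lip}(d) to get $\Delta_{h_t,i_t}\le \nu_1\rho^{h_t}+2c\sqrt{\log(1/\tilde\delta(t^+))/T_{h_t,i_t}(t)}$ and $3\nu_1\rho^{h}$-optimality of expanded (parent) nodes, the same near-optimality-dimension count $O(\rho^{-dh})$ per depth, Azuma for the reward-noise martingale, and the same cutoff-depth $\barH$ balancing yielding the $(\log(n/\delta))^{1/(d+2)}n^{(d+1)/(d+2)}$ rate. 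The only step phrased loosely is the deep-level contribution — frontier leaves with few pulls do not literally have per-pull regret $O(\rho^{\barH})$, and the paper closes this exactly via the Cauchy--Schwarz-plus-pull-budget argument (deep nodes exist only after their parents consumed $\Omega(\rho^{-2h}\log(1/\tilde\delta))$ pulls) that you yourself identify as the main obstacle, so the outline is sound.
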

%\vspace{-0.1in}
\textbf{Remark (the bound).} We notice that the bound perfectly matches the bound for HOO up to constants (see Thm. 6 in~\citep{BubeckMSS11}). This represents a first sanity check w.r.t.\ the structure of \HCT, since it shows that changing the structure of HOO and expanding nodes only when they are pulled enough, preserves the regret properties of the algorithm. Furthermore, this result holds under milder assumptions than HOO. In fact, Assumption~\ref{asm:Lip}-(d) only requires $f$ to be Lipschitz w.r.t.\ to the maximum $x^*$. Other advantages of \HCTiid are discussed in the Sect.~\ref{ss:complexity} and~\ref{s:numeric}.

Although the proof is mostly based on standard techniques and tools from bandit literature, \HCT has a different structure from HOO (and similar algorithms) and moving from iid to correlated arms calls for the development of a significantly different proof technique. The main technical issue is to show that the empirical average $\hmu_{h,i}$ computed by averaging rewards obtained across different episodes actually converges to $f(x_{h,i})$. In particular, we prove the following high-probability concentration inequality (see Lem.~\ref{eq:EgodtoHoff} in the supplement for further details).
\begin{lemma}\label{eq:EgodtoHoff.text}
Under Assumptions~\ref{asm:ergod} and \ref{asm:mixing}, for any fixed node $(h,i)$ and step $t$, we have that, w.p. $1-\delta$,
% 
%

%\vspace{-0.2in}
%\begin{small}
\begin{align*}
| \hmu_{h,i}(t) - f(x_{h,i}) | \leq (3\Gamma+1)\sqrt{\frac{2\log(5/\delta)}{T_{h,i}(t)}}+\frac{\Gamma\log(t)}{T_{h,i}(t)}.
\end{align*}
%\end{small}
%\vspace{-0.2in}

%
%
Furthermore $K_{h,i}(t)$, the number of episodes in which $(h,i)$ is selected, is bounded by $\log_2(4T_{h,i}(t))+\log_2(t)$.
\end{lemma}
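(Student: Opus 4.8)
The central object is the family of \emph{episodes} in which the representative arm $x := x_{h,i}$ of node $(h,i)$ is pulled. Write $T := T_{h,i}(t)$ and $K := K_{h,i}(t)$, and let episode $k$ ($k=1,\dots,K$) consist of pulls of $x$ at a block of consecutive global time steps $I_k = \{a_k,\dots,b_k\}$ of length $\ell_k$, with $\sum_{k=1}^K \ell_k = T$. The bound on $K$ I would get first: in \HCTgamma\ an episode that runs to completion doubles the pull count of its node, so the blocks that are not cut short have geometrically increasing lengths and there are at most $\log_2(4T)$ of them; an episode can only be interrupted before completion by a refresh phase, and refreshes occur only at the $\le \log_2(t)$ dyadic steps $t\in\{2,4,8,\dots\}$, each creating at most one extra partial episode. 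This gives $K_{h,i}(t)\le \log_2(4T_{h,i}(t))+\log_2(t)$. (Ergodicity, Assumption~\ref{asm:ergod}, is only needed to ensure $f(x)$ is well defined; the analytic work is done by the mixing assumption.)

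For the concentration bound, the naive idea of treating each block as a single ``super-sample'' fails: since blocks double in length, $\sum_k \ell_k^2 = \Theta(T^2)$, so a block-level Azuma--Hoeffding inequality would only give an $O(1)$ deviation. Instead I would decompose at the level of individual rewards using a Poisson-equation-style identity. Fix a block $I_k$ and for $a_k\le j\le b_k+1$ set $\hat h_j := \E\big[\sum_{s'=j}^{b_k}(r_{s'}(x)-f(x))\,\big|\,\F_j\big]$ with $\hat h_{b_k+1}:=0$. Applying Assumption~\ref{asm:mixing} with $s=j+1$ and with $s=j$ respectively yields $|\hat h_j|\le\Gamma+1$ and $|\E[\hat h_j\mid\F_{j-1}]|\le\Gamma$. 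From $\hat h_j = (r_j(x)-f(x)) + \E[\hat h_{j+1}\mid\F_j]$ one obtains, for $j\in I_k$, the identity $r_j(x)-f(x) = \xi_j + \big(\E[\hat h_j\mid\F_{j-1}]-\E[\hat h_{j+1}\mid\F_j]\big)$ with $\xi_j := \hat h_j - \E[\hat h_j\mid\F_{j-1}]$. Summing over $j\in I_k$, the second term telescopes to $\E[\hat h_{a_k}\mid\F_{a_k-1}]$, whose absolute value is $\le\Gamma$ by mixing, while $\xi_j$ is an $\F_j$-martingale difference with $|\xi_j|\le 2\Gamma+1$.

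Aggregating over episodes and setting $\xi_j:=0$ on the steps where $x$ is not pulled makes $(\xi_j)_j$ a martingale difference sequence with respect to $(\F_j)$ having exactly $T$ nonzero terms among $j\le t$, and
\begin{equation*}
T\big(\hmu_{h,i}(t)-f(x)\big) = \sum_{s=1}^{T}\big(r^s(x)-f(x)\big) = \sum_{j=1}^{t}\xi_j + \sum_{k=1}^{K}\E[\hat h_{a_k}\mid\F_{a_k-1}],
\end{equation*}
where the last sum has absolute value $\le K\Gamma \le \Gamma(\log_2(4T)+\log_2 t)$, which after dividing by $T$ produces the additive $\Gamma\log(t)/T$ term (up to an absorbed constant). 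Reindexing $(\xi_j)$ by the pulls of $x$ and applying Azuma--Hoeffding --- with the random value $T_{h,i}(t)\le t$ handled by a union bound over its possible values and the confidence split appropriately among the relevant events --- bounds $\big|\sum_j\xi_j\big|$ by $(3\Gamma+1)\sqrt{2T\log(5/\delta)}$ with probability $1-\delta$; dividing by $T$ gives the claimed inequality, and the episode count above gives the bound on $K_{h,i}(t)$.

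\textbf{Main obstacle.} The delicate part is the decomposition itself rather than the final union bound: one must verify that $(\xi_j)$ remains a bona fide martingale difference sequence even though the pulls of $x$ are interleaved with pulls of other arms and with refresh phases, and --- crucially --- that the bias enters \emph{once per episode} rather than once per step, which is exactly where Assumption~\ref{asm:mixing} is used. The remaining items (bookkeeping the episode count under refresh-triggered interruptions, and tracking the numerical constants through Azuma to obtain the $(3\Gamma+1)$ prefactor and the factor $5$) are routine.
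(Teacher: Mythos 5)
Your proposal is correct and is essentially the paper's own argument: your tail-sum quantities $\hat h_j$ and residuals $\xi_j$ coincide (once the $f(x)$ terms cancel) with the paper's Doob-martingale increments $M_t^k-M_{t-1}^k$ of the per-episode reward sums, giving the same $2\Gamma+1$ increment bound, the same single bias of at most $\Gamma$ per episode via the mixing assumption, the same Azuma step, and the same doubling-plus-at-most-$\log_2(t)$-interrupted-episodes count for $K_{h,i}(t)$. The only cosmetic difference is in the bookkeeping of constants: the paper does not union-bound over the possible values of $T_{h,i}(t)$ inside this lemma (that is deferred to the later high-probability event over all nodes and sample counts), and the $(3\Gamma+1)$ and $\log(5/\delta)$ arise there from absorbing the $\Gamma\log_2\bigl(4T_{h,i}(t)\bigr)/T_{h,i}(t)$ part of the episode-count bias into the $1/\sqrt{T_{h,i}(t)}$ term, rather than from the confidence split you describe.
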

%\vspace{-0.1in}
%
This technical lemma is at the basis of the derivation of the following regret bound for \HCTgamma.
\begin{theorem}[Regret bound of \HCTgamma]
\label{thm:hct.corr}
%Let $\delta\in(0,1)$%, $\tilde \delta(t)=\sqrt[9]{\rho/(4\nu_1)}\delta/t$, and $c=3(3\Gamma+1)\sqrt{1/(1-\rho)}$. 
We assume that Assumptions~\ref{asm:ergod}--\ref{asm:near.optimal.dim} hold and that rewards are generated according to the general model defined in Section~\ref{s:preliminaries}. Then the regret of \HCTiid after $n$ steps is, w.p. $1-\delta$,%

%\vspace{-0.2in}
%\begin{small}
\begin{align*}
R_n &\leq O\Big(\big(\log \left(n/\delta\right)\big)^{1/(d+2)}n^{(d+1)/(d+2)}\Big).
\end{align*}
%\end{small}
%\vspace{-0.2in}

%\vspace{-0.2in}
%
% 
\end{theorem}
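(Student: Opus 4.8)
The plan is to run the same analysis pipeline as for Theorem~\ref{thm:hct.iid}, replacing the i.i.d.\ concentration step by the episodic concentration inequality of Lemma~\ref{eq:EgodtoHoff.text} and adding the bookkeeping required by the episode structure of \HCTgamma. First I would define the favorable event $\mathcal{E}$ on which, for every node $(h,i)$ ever created by the algorithm and every round $t$, one has $|\hmu_{h,i}(t) - f(x_{h,i})| \le \sqrt{c^2\log(1/\tilde\delta(t^+))/T_{h,i}(t)}$, i.e.\ the true mean of the representative arm lies inside the confidence radius used in the $U$-value (Eq.~\ref{eq:def.stats.up}). By Lemma~\ref{eq:EgodtoHoff.text} the actual deviation is at most $(3\Gamma+1)\sqrt{2\log(5/\delta)/T_{h,i}(t)} + \Gamma\log(t)/T_{h,i}(t)$; the constants $c = 3(3\Gamma+1)\sqrt{1/(1-\rho)}$ and $c_1 = \sqrt[9]{\rho/(4\nu_1)}$ in $\tilde\delta(t) = \min\{c_1\delta/t,1\}$ are chosen precisely so that, after a union bound over the $O(\log n)$ refresh times $t^+$ and over the (at most polynomially many, since $H(n)\le H_{\max}(n)=O(\log n)$ by Lemma~\ref{lem:bound.depth.anytime}) created nodes, $\Prob(\mathcal{E})\ge 1-\delta$, and so that on $\mathcal{E}$ the square-root confidence term dominates both deviation terms, using $\log(1/\tilde\delta(t^+))\gtrsim\log t$ to absorb the additive $\Gamma\log(t)/T_{h,i}(t)$ piece.

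Next, on $\mathcal{E}$ I would show $U$ and $B$ are valid upper bounds on $\sup_{x\in\calP_{h,i}}f(x)$, exactly as in the i.i.d.\ case: for the node $(h,i_h^*)$ containing $x^*$ at depth $h$, Assumption~\ref{asm:Lip}(a),(d) give $f^*-f(x_{h,i_h^*})\le\ell(x^*,x_{h,i_h^*})\le\nu_1\rho^h$, hence $U_{h,i_h^*}(t)\ge f^*$; a backward induction using the $\min$--$\max$ definition (Eq.~\ref{eq:def.stats.upB}) propagates $B_{h,i_h^*}(t)\ge f^*$, and since \textsl{OptTraverse} descends to the child of maximal $B$, the selected node $(h_t,i_t)$ satisfies $B_{h_t,i_t}(t)\ge f^*$ at every step. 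Combining $f^*\le B_{h_t,i_t}(t)\le U_{h_t,i_t}(t)$ with the confidence bound gives $f^*-f(x_{h_t,i_t})\le 2\nu_1\rho^{h_t}+2\sqrt{c^2\log(1/\tilde\delta(t^+))/T_{h_t,i_t}(t)}$, and since \textsl{OptTraverse} stops at $(h_t,i_t)$ only when it is a leaf or $T_{h_t,i_t}(t)<\tau_{h_t}(t)$, the defining identity $\sqrt{c^2\log(1/\tilde\delta(t^+))/\tau_h(t)}=\nu_1\rho^h$ (Eq.~\ref{eq:tau}) shows every selected node — or its parent, which was pulled at least $\tau_{h_t-1}(t)$ times — is $O(\nu_1\rho^{h_t})$-optimal. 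In particular only nodes in $\mathcal{I}_h:=\{(h,i):f^*-f(x_{h,i})\le 3\nu_1\rho^h\}$ are ever expanded or pulled well beyond their parent's threshold.

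Then comes the regret decomposition, where the episode structure enters. For each episode in which the algorithm commits to arm $x_{h,i}$ for a block of consecutive pulls, I would write $\sum_{s\in\text{ep}}(f^*-r_s(x_{h,i})) = |\text{ep}|\cdot(f^*-f(x_{h,i})) + \sum_{s\in\text{ep}}(f(x_{h,i})-r_s(x_{h,i}))$ and bound the second sum by $\Gamma$ using Assumption~\ref{asm:mixing} (promoted to high probability, or folded into $\mathcal{E}$, via a martingale argument over episode boundaries), so that $R_n\le\sum_{(h,i)\ \text{selected}}T_{h,i}(n)(f^*-f(x_{h,i})) + \Gamma\cdot(\#\text{episodes})$. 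For the first sum I would combine the suboptimality estimate above, the pull-count bound $T_{h,i}(n)=O(\tau_h(n))=O(\log(n/\delta)\rho^{-2h})$ (episode-doubling costs only a factor $2$, and the $O(\log n)$ refresh-triggered resampling adds a lower-order term), and the near-optimality dimension $|\mathcal{I}_h|\le C(\nu_2\rho^h)^{-d}$ (Assumption~\ref{asm:near.optimal.dim}); splitting at a depth $\bar h$ — depths $h\le\bar h$ bounded term by term as $|\mathcal{I}_h|\cdot O(\tau_h(n))\cdot O(\nu_1\rho^h)=O(\log(n/\delta)\rho^{-h(d+1)})$, and all deeper nodes bounded together by $n\cdot O(\nu_1\rho^{\bar h})$ since their pulls total at most $n$ — yields $R_n=O(\log(n/\delta)\rho^{-\bar h(d+1)}+n\rho^{\bar h})$, and optimizing with $\rho^{\bar h}\asymp(\log(n/\delta)/n)^{1/(d+2)}$ gives the claimed $O((\log(n/\delta))^{1/(d+2)}n^{(d+1)/(d+2)})$. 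The transient term is lower order: by Lemma~\ref{eq:EgodtoHoff.text} each node is selected in at most $\log_2(4T_{h,i}(n))+\log_2 n=O(\log n)$ episodes, and the number of distinct selected nodes is at most the number of created nodes, which (creation happening only inside $\mathcal{I}_h$ and requiring the parent to reach $\tau_{h-1}$ pulls) is $\tilde O(n^{d/(d+2)})$, so $\Gamma\cdot(\#\text{episodes})=\tilde O(\Gamma\,n^{d/(d+2)})$.

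\textbf{Main obstacle.} The genuinely non-i.i.d.\ difficulty is entirely packaged into Lemma~\ref{eq:EgodtoHoff.text}; within this theorem the delicate points are all about the episode mechanics: \HCTgamma commits to an arm for a doubling block that a refresh can cut short, so pull counts and episode counts are controlled only up to constant and additive-$O(\log n)$ slack; the aggregate transient regret, scaling as $\Gamma$ times the number of episodes, must be shown to stay below the $n^{(d+1)/(d+2)}$ main term; and the $\Gamma$-inflated constants $c,c_1$ must keep the $\Gamma$-dependent terms of the concentration bound inside the confidence radius while still letting the union bound close at level $\delta$. Getting the interaction of refresh phases, episode interruptions, and the node-counting argument right — rather than any single inequality — is the crux.
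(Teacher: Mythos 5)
Your overall route is the paper's: the same confidence event built from the episodic concentration inequality (Lem.~\ref{eq:EgodtoHoff.text}), the same optimism argument showing $B_{h^*,i^*}(t)\geq f^*$ and hence that the parent of every selected node is $3\nu_1\rho^{h_t-1}$-optimal, the same shallow/deep split at a tuned depth $\barH$ with the covering-number bound $|\calI_h(n)|\leq 2C(\nu_2\rho^{h-1})^{-d}$, and the same optimization $\rho^{\barH}\asymp(\log(n/\delta)/n)^{1/(d+2)}$. Two of your steps, however, do not hold as literally stated. First, you bound the realized within-episode deviation $\sum_{s\in\text{ep}}(f(x_{h,i})-r_s)$ by $\Gamma$; Assumption~\ref{asm:mixing} bounds only its \emph{conditional expectation}. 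The martingale correction you allude to is not lower order: aggregated per node it contributes a term of order $\Gamma\sqrt{T_{h,i}(n)\log(1/\tilde\delta)}$ (this is exactly Lem.~\ref{lem:tail.weak.dep}), i.e.\ of the same order as your main term, so the displayed inequality $R_n\leq\sum_{(h,i)}T_{h,i}(n)(f^*-f(x_{h,i}))+\Gamma\cdot(\#\text{episodes})$ drops a leading-order piece. The paper closes this by writing the reward-fluctuation term per node as $T_{h,i}(\bar t_{h,i})\big(f(x_{h,i})-\hmu_{h,i}(\bar t_{h,i})\big)$ and invoking the confidence event at the \emph{last} selection time $\bar t_{h,i}\geq t$; this is precisely why it works with the joint event $\calE_{t,n}=\cap_{s\geq t}\Omega_s$, and why the union bound must run over all steps (the empirical means change at every episode end, not only at refresh times) and over all \emph{possible} nodes of depth at most $H_{\max}(t)$, not just the (random, adaptively chosen) created ones, as in Lem.~\ref{lem:high.prob.gamma}.

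Second, your bound for depths $h>\barH$ (``pulls total at most $n$, each costing $O(\nu_1\rho^{\barH})$'') is not justified under the one-sided smoothness Assumption~\ref{asm:Lip}(d): a freshly created deep node need not be $O(\nu_1\rho^{\barH})$-optimal — only its \emph{parent} is $3\nu_1\rho^{h-1}$-optimal, and there is no two-sided Lipschitz control relating $f(x_{h,i})$ to the parent's value — so its per-pull regret is controlled only through $3c\sqrt{\log(2/\tilde\delta(t))/T_{h,i}(t)}$. The correct aggregate argument is the paper's term (b): sum $\sqrt{T_{h,i}(n)\log(\cdot)}$ over deep nodes, apply Cauchy--Schwarz, and bound $\sum_{h>\barH}\sum_i\log(1/\tilde\delta(\bar t^{+}_{h,i}))\leq 2\nu_1^2\rho^{2\barH}n/c^2$ using that every expanded ancestor at depth $\geq\barH$ consumed at least $\tau_{\barH}$ pulls — the same counting you already invoke for the node-count/episode-count bound. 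With these two repairs (plus Lemma~19 of \citet{jaksch2010near-optimal} to handle the fact that $T_{h,i}$ and $\hmu_{h,i}$ are frozen during an episode, which you acknowledge informally as the factor-2 doubling slack), your plan coincides with the paper's proof and yields the stated rate.
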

%\vspace{-0.1in}
%
\textbf{Remark (the bound).} The most interesting aspect of this bound is that \HCTgamma achieves the same regret as \HCTiid when samples are non-iid. This represents a major step forward w.r.t.\ HOO since it shows that the very general case of correlated arms can be managed as well as the much simpler iid case. In the next section we also discuss how this result can be used in policy search for MDPs.
%
%
%%%%%%%%%%%%%% Complexity
%
%
\subsection{Complexity}\label{ss:complexity}

\textbf{Time complexity.}  The run time complexity of both versions of \HCT is $O(n\log(n))$. This is due to the boundedness of the depth $H(n)$ and by the structure of the refresh phase. By Lem.~\ref{lem:bound.depth.anytime}, we have that the maximum depth is $O(\log(n))$. As a result, at each step $t$, the cost of traversing the tree to select a node is at most $O(\log n)$, which also coincides with the cost of updating the $B$ and $U$ values of the nodes in the optimistic path $P_t$. Thus, the total cost of selecting, pulling, and updating nodes is no larger than $O(n\log n)$. Notice that in case of \HCTgamma, once a node is selected is pulled for an entire episode, which further reduces the total selection cost. Another computational cost is represented by the refresh phase where all the nodes in the tree are actually updated. Since the refresh is performed only when $t=t^+$, then the number of times all the nodes are refreshed is of order of $O(\log n)$ and the boundedness of the depth guarantees that the number of nodes to update cannot be larger than $O(2^{\log n})$, which still corresponds to a total cost of $O(n\log n)$. This implies that \HCT achieves the same run time as \emph{T-HOO} \citep{BubeckMSS11}. Though unlike \emph{T-HOO}, our algorithm is fully anytime and it does not suffer from the extra regret incurred due to the truncation and the doubling trick.  

\textbf{Space complexity.}  
The following theorem provides bound on space complexity of the \HCT algorithm.
\begin{theorem}
\label{thm:hct.corr.space}
Under the same conditions of Thm.~\ref{thm:hct.corr}, let $\N_n$ denote the space complexity of \HCTgamma, then we have that
%
%

%\vspace{-0.2in}
%\begin{small}
\begin{equation*}
\mathbb E(\N_n) =O( \log(n)^{2/(d+2)}n^{d/(d+2)} ).
\end{equation*}
%\end{small}
%\vspace{-0.2in}

%
%
\end{theorem}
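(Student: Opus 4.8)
Since \HCTgamma keeps only a constant amount of information at each node, $\N_n$ is within a constant factor of the number of nodes in $\T_n$; and as $\T$ is a binary tree in which every internal node has exactly two children, that number is at most $1+2\sum_{h\ge 0} I_h$, where $I_h$ denotes the number of expanded (internal) nodes at depth $h$. By Lemma~\ref{lem:bound.depth.anytime} the sum extends only up to $H(n)\le H_{\max}(n)=O(\log n)$, so the whole argument reduces to bounding $I_h$ in two complementary ways and balancing them at a suitably chosen cutoff depth $\bar h$.

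The first bound, effective at small depths, is a near-optimality bound: on the high-probability event $\xi$ underlying Theorem~\ref{thm:hct.corr} (validity of all the $U$ and $B$ bounds), I would show that every node $(h,i)$ ever expanded by \HCTgamma is $3\nu_1\rho^h$-optimal, i.e.\ $f^*-f(x_{h,i})\le 3\nu_1\rho^h$. This reuses the regret-proof machinery: (i) \textsl{OptTraverse} descends along a path of non-decreasing $B$-values and $B_{0,1}\ge f^*$ on $\xi$, so the expanded leaf has $B_{h,i}=U_{h,i}\ge f^*$; (ii) a leaf is expanded only when $T_{h,i}\ge\tau_h$, which makes the uncertainty term $\sqrt{c^2\log(1/\tilde\delta(t^+))/T_{h,i}}$ at most $\nu_1\rho^h$; (iii) the concentration inequality of Lemma~\ref{eq:EgodtoHoff.text} — the non-iid version, whose extra $\Gamma$ terms are precisely what the constant $c=3(3\Gamma+1)\sqrt{1/(1-\rho)}$ of \HCTgamma is chosen to absorb — controls $|\hmu_{h,i}-f(x_{h,i})|$ by $O(\nu_1\rho^h)$ once $T_{h,i}\ge\tau_h$. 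Chaining (i)--(iii) gives $f(x_{h,i})\ge f^*-3\nu_1\rho^h$. Since distinct depth-$h$ nodes carry disjoint balls $\B_{h,i}$ of radius $\nu_2\rho^h$ (Assumption~\ref{asm:Lip}.c), a standard packing-versus-covering comparison together with Assumption~\ref{asm:near.optimal.dim} at scale $\epsilon=3\nu_1\rho^h$, $\epsilon'=\nu_2\rho^h$ yields $I_h\le C'\rho^{-dh}$.

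The second bound, effective at large depths, is a pull-budget bound. Every expanded node at depth $h$ has been pulled at least $\tau_h=c^2\log(1/\tilde\delta(t^+))\rho^{-2h}/\nu_1^2$ times, and these pulls are disjoint across nodes, so $n=\sum_{(h,i)}T_{h,i}(n)\ge\sum_h I_h\,\tau_h$; since $\tau_h$ increases with $h$, this gives $\sum_{h\ge\bar h}I_h\le n/\tau_{\bar h}$ for any cutoff $\bar h$. Putting the two bounds together, $\N_n \lesssim \sum_{h<\bar h}C'\rho^{-dh} + n/\tau_{\bar h} \lesssim \rho^{-d\bar h} + n\,\nu_1^2\,\rho^{2\bar h}/(c^2\log(1/\tilde\delta(t^+)))$, and choosing $\bar h$ so that the two terms have the same order (i.e.\ $\rho^{-(d+2)\bar h}\asymp n/\log(n/\delta)$) makes both $O(n^{d/(d+2)})$ up to the relevant powers of $\log(n/\delta)$; this is the deterministic bound valid on $\xi$.

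Finally, to pass to the expectation I would split on $\xi$: off $\xi$ (probability at most $\delta$) I still have the deterministic guarantee $\N_n=O(2^{H_{\max}(n)})=\mathrm{poly}(n)$ from Lemma~\ref{lem:bound.depth.anytime}, so $\E(\N_n)\le(\text{bound on }\xi)+\delta\cdot\mathrm{poly}(n)$; taking $\delta$ polynomially small in $n$ kills the second term and fixes $\log(1/\delta)=\Theta(\log n)$, turning the first term into $O\big((\log n)^{2/(d+2)}n^{d/(d+2)}\big)$, as claimed. I expect the main obstacle to be the near-optimality step: it must be established on the correlated-feedback event and relies on the full strength of Lemma~\ref{eq:EgodtoHoff.text}, and one must track how the threshold $\tau_h(t)$ — hence the ``pulled enough before expansion'' guarantee — evolves through the refresh phases of \HCTgamma.
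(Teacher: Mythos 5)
Your proposal is correct and follows essentially the same route as the paper's proof: reduce space usage to the node count, split depths at a cutoff $\barH$, bound shallow depths via the near-optimality/covering argument inherited from the regret analysis and deep depths via the pull-budget inequality $n\ge\sum_{h,i}\tau_h(t_{h,i})$, balance $\barH$, and absorb the failing-confidence event into the expectation. The only notable difference is cosmetic: the paper establishes near-optimality for the \emph{parent} of each selected node (which is guaranteed to satisfy $T\ge\tau$ at the selection time, yielding $|\calI_h(n)|\le 2|\calI^+_{h-1}(n)|\le 2C(\nu_2\rho^{h-1})^{-d}$), which cleanly sidesteps the timing subtlety you yourself flag about when $U_{h,i}\ge f^*$ and $T_{h,i}\ge\tau_h$ hold simultaneously for the expanded leaf.
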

%\vspace{-0.1in}
The previous theorem guarantees that the space complexity of \HCT scales sub-linearly w.r.t. $n$.  An important observation is that the space complexity of \HCT increases   slower, by   a factor of $\widetilde O (n^{1/(d+2)})$, than its regret. This implies that, for small values of $d$, HCT does not require  to use a  large memory space to achieve a good performance.  An interesting special case is the class of   problem with near-optimality dimension   $d=0$.  For this class of problems the bound translates to a space complexity of $O(\log(n))$, whereas the space complexity of alternative algorithms may be as large as $n$ (see e.g., HOO). As it has been shown in \citep{valko2013} the case of $d=0$ covers  a rather large class of functions, since every function  which satisfies some mild local smoothness assumption, around its global optima, has a near-optimality dimension equal to $0$ \citep[see][for further discussions]{valko2013}. The fact that  \HCT can achieve a near-optimal performance, using only   a relatively small memory  space, which makes it a suitable choice for \emph{big-data} applications, where the algorithms with linear space complexity can not be used due to very large size of the dataset.     

\textbf{Switching frequency.} Finally, we also remark another interesting feature of \HCTgamma. Since an arm is pulled for an entire episode before another arm could be selected, this drastically reduces the number of switches between arms. In many applications, notably in reinforcement learning (see next section), this can be a significant advantage since pulling an arm may correspond to the actual implementation of a complex solution (e.g., a position in a portfolio management problem) and continuously switch between different arms might not be feasible. More formally, since each node has a number of episodes bounded by $O(\log n)$ (Lem.~\ref{eq:EgodtoHoff.text}), then the number of switches can be derived be the number of nodes in Thm.~\ref{thm:hct.corr.space} multiplied by $O(\log n)$, which leads to $O( \log(n)^{(d+4)/(d+2)}n^{d/(d+2)} )$.

%%%%%%%%%%%%%%%%%%%%%%%%%%%%%%%%%%%%%%%%%%%%%%%%%%%%%%%%%%%%%%%%%%%%%%%%%%%%%%%
%% The Algorithm
%%%%%%%%%%%%%%%%%%%%%%%%%%%%%%%%%%%%%%%%%%%%%%%%%%%%%%%%%%%%%%%%%%%%%%%%%%%%%%%

%\input{policy.tex}

%!TEX root = paper.tex
%\vspace{-0.1in}

\section{Application to Policy Search in MDPs}\label{s:polMDP}
As we discussed in Sect.~\ref{s:preliminaries}, \textit{HCT} is designed to handle the very general case of optimization in problems where there exists a strong correlation among the rewards, arm pulls, and contexts, at different time steps. An important subset of this general class is represented by the problem of policy search in infinite-horizon Markov decision processes.  Notice that the extension to the case of partially observable MDPs is straightforward as long as the POMDP satisfies some ergodicity assumptions. 

A MDP $M$ is defined as a tuple $\langle \calS, \A,  P \rangle$ where $\calS$ is the set of states, $\A$ is the set of actions, $P:\calS\times\A\rightarrow \mathcal M(\calS \times [0,1] )$ is the transition kernel mapping each state-action pair to a distribution over states and rewards.  A (stochastic) policy $\pi:  \calS \to \M(\A)$ is a mapping from states to distribution over actions. 
Policy search algorithms \citep{scherrer2013policy,AzarLB13a,kober2011policy} aim at finding the policy in a given policy set which maximizes the long-term performance. Formally, a policy search algorithm receives as input a set of policies $\G = \{\pi_\theta; \theta\in\Theta\}$, each of them parameterized by a parameter vector $\theta$ in a given set $\Theta \subset \Re^d$. 
Any policy $\pi_{\theta}\in \mathcal G$ induces a state-reward transition kernel $T:\calS\times \Theta \to \mathcal M(\calS\times[0,1])$. $T$ relates to the state-reward-action transition kernel $P$ and the policy kernel $\pi_{\theta}$ as follows
%%%%%

\begin{equation*}
T(ds',dr|s,\theta):=\int _{ u\in \A } P(ds',dr|s,u)\pi_{\theta}(du|s).
\end{equation*}

%%%%%
For any $\pi_{\theta} \in \mathcal G$ and initial state $s_0\in S$, the time-average reward over $n$ steps is 

\begin{equation*}
\mu^{\pi_{\theta}}( s_0, n):= \frac 1n \E \left[ \sum\nolimits_{t=1}^n  r_t\right],
\end{equation*}

%%%
%\begin{equation*}
%\mu^{\pi_{\theta}}( s_0, n):= \E\Big[\frac 1n \sum_{t=1}^n  r_t\Big],
%\end{equation*}  
%%%%
where $r_1,r_2,\dots,r_n$ is the sequence of rewards observed by running $\pi_{\theta}$ for $n$ steps staring at $s_0$. If the Markov reward process induced by $\pi_\theta$ is ergodic, $\mu^{\pi_{\theta}}( s_0, n)$ converges to a fixed point independent of the initial state $s_0$. The average reward of $\pi_\theta$ is thus defined as 

\begin{equation*}
\mu(\theta):=\lim_{n\to\infty} \mu^{\pi_{\theta}}(s_0, n).
\end{equation*}

The goal of  policy search  is to find the best $\theta^*= \arg\max_{\theta\in \Theta} \mu(\theta)$. %, whose corresponding best policy is denoted by $\pi^*_{\Theta} = \pi_{\theta^*}$.
    Note that $\pi_{\theta^*}$ is optimal in the policy class $\mathcal G$ and it may not coincide with the optimal policy $\pi^*$ of the MDP, when $\pi^*$  is not covered by $\mathcal G$. 
        
It is straightforward now to match the MDP scenario to the general setting in Sect.~\ref{s:preliminaries}, notably mapping $\Theta$ to $\X$ and $\mu(\theta)$ to $f(x)$. More  precisely the parameter space $\theta\in \Theta$ corresponds to the space of arms $\X$, since in the policy search we want to explore the parameter space $\Theta$ to learn the best parameter $\theta^*$. Also the  state space $\mathcal S$ in MDP setting is the special from of context space of Sect.~\ref{s:preliminaries} where here the contexts evolve according to some controlled Markov process. Further  the transition kernel $T$, which at each  time step $t$ determines the distribution on the current state and reward   given the last state and $\theta$    is again a special case of  of the more general $(Q_t)_t$ which may depend on the entire history of prior observations. Likewise   $\mu(\theta)$, $\mu^*_{\Theta}$ and $\theta^*$ translate into $f(\theta)$, $f^*$ and $x^*$, respectively, using the notation of Sect.~\ref{s:preliminaries}. The Asm. \ref{asm:ergod} and  \ref{asm:mixing} in Sect. \ref{s:preliminaries} are also the general version of   the standard ergodicity and mixing assumption in MDPs, in which  the notion of filtration in assumptions of Sect. \ref{s:preliminaries}  is  simply replaced by the the initial state $s_0\in \mathcal S$. 

This allows us to directly apply \HCTgamma to the problem of policy search. The advantage of \HCTgamma algorithm w.r.t. prior work is that, to the best of our knowledge,   it is the first policy search algorithm which provides finite sample guarantees in the form of regret bounds on the performance loss of policy search in MDPs (see Thm.~\ref{thm:hct.corr}), which guarantee that \HCTgamma suffers from a small sub-linear regret  w.r.t. $\pi_{\theta^*}$. 
Also it is not difficult to prove  that the policy induced by \HCTgamma  has a  small simple regret, that  is, the average reward of the policy chosen by \HCTgamma converges to  $\mu(\theta^*)$ with a polynomial rate.\footnote{Refer to \citet{BubeckMSS11,munos2013bandits}   for  how to transform bounds on accumulated regret  to simple regret bounds.}  Another interesting feature of \HCTgamma is that can be readily used  in large  (continuous) state-action problems since it does not make any restrictive assumption on  the size of  state-action space. 

\textbf{Prior regret bounds for continuous MDPs.} A  related work to \HCTgamma is the UCCRL algorithm by~\citet{ortner2012online}, which extends the original UCRL algorithm~\citep{jaksch2010near-optimal} to continuous state spaces. Although a direct comparison between the two methods is not possible, it is interesting to notice that the assumptions used in UCCRL are stronger than for \HCTgamma, since they require both the dynamics and the reward function to be globally Lipschitz. Furthermore, UCCRL requires the action space to be finite, while \HCTgamma can deal with any continuous policy space. Finally, while \HCTgamma is guaranteed to minimize the regret against the best policy in the policy class $\mathcal G$, UCCRL targets the performance of the actual optimal policy of the MDP at hand. Another relevant work is the  OMDP algorithm of \citet{NIPS2013_Yassin} which deals with the problem of RL in continuous state-action MDPs with adversarial rewards. OMDP achieves a  sub-linear regret under the assumption that the space of policies is finite, whereas in HCT the space of policy can be continuous.
%%%%%%%%%%%%%%%%%%%%%%%%%%%%%%%%%%%%%%%%%%%%%%%%%%%%%%%%%%%%%%%%%%%%%%%%%%%%%%%
%% The Algorithm
%%%%%%%%%%%%%%%%%%%%%%%%%%%%%%%%%%%%%%%%%%%%%%%%%%%%%%%%%%%%%%%%%%%%%%%%%%%%%%%
%\input{numer.tex}

%!TEX root = paper.tex
%\vspace{-0.1in}

\section{Numerical Results}\label{s:numeric}
\begin{figure*}[htp!]
%\begin{figure}[htp]
\begin{center}
\subfigure[]
{\includegraphics[width=0.35\textwidth]{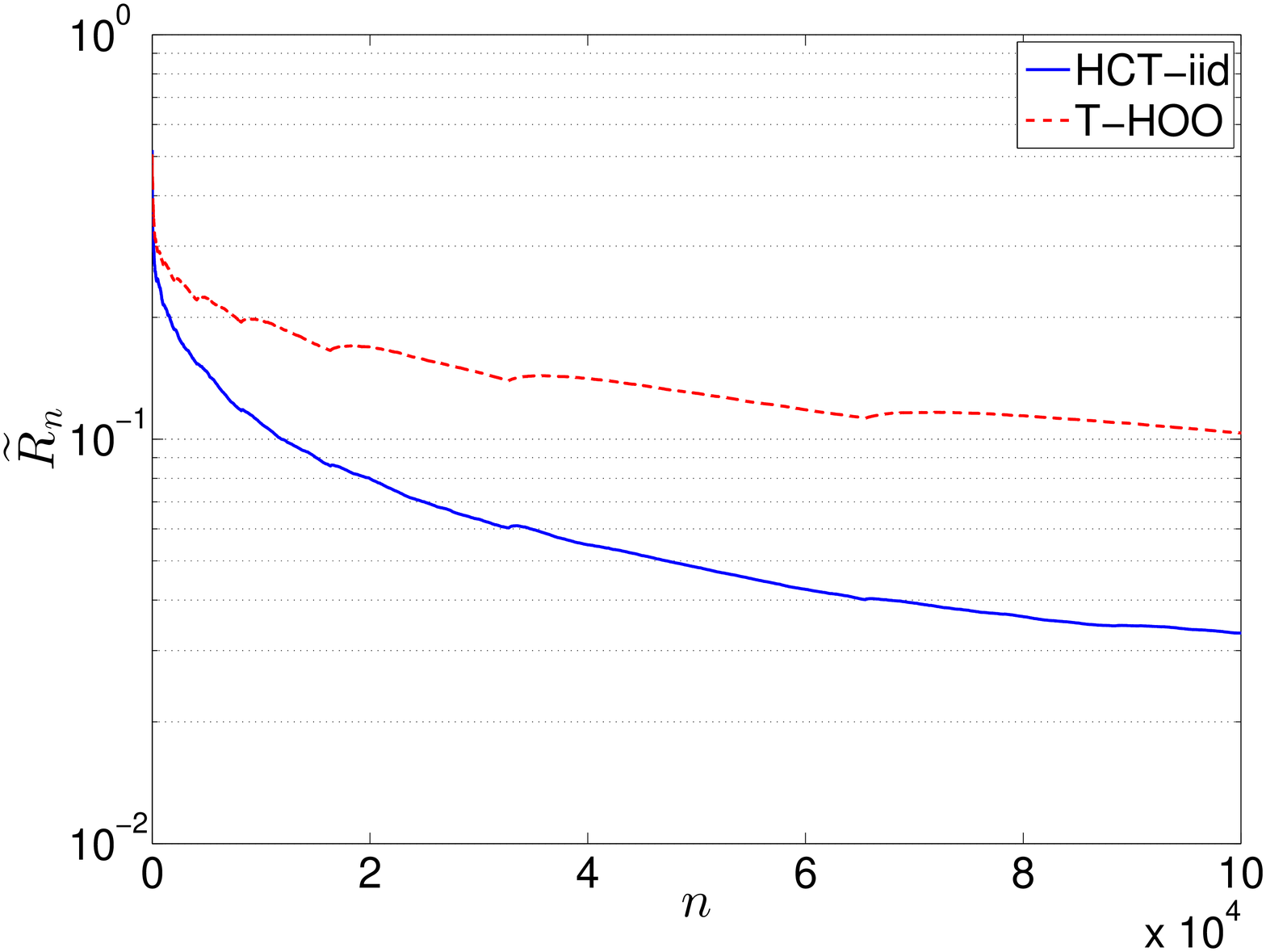}
\label{fig:res.iid.reg}
}
\subfigure[]
{
\includegraphics[width=0.35\textwidth]{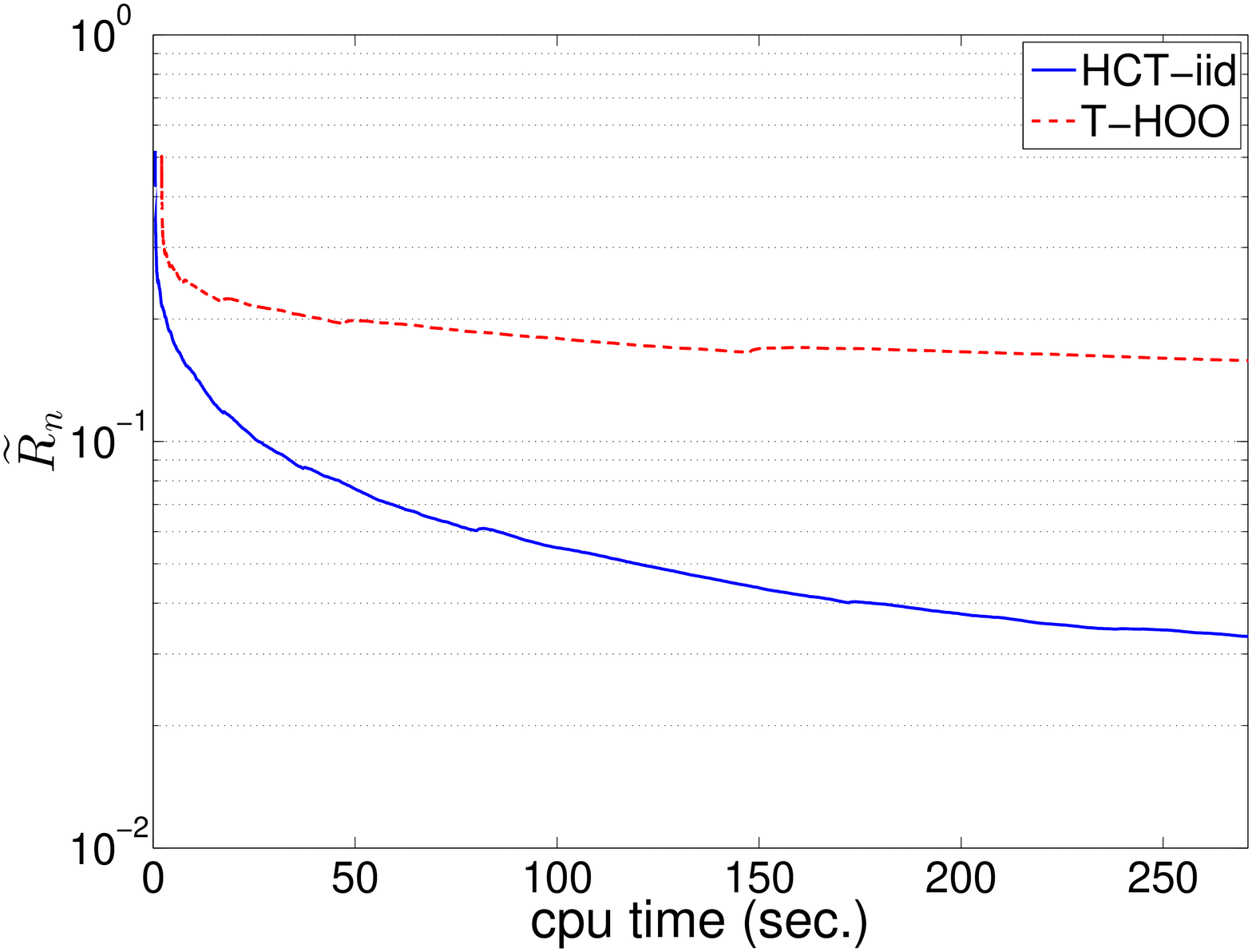}
\label{fig:res.iid.cpu}
}
\subfigure[]
{\includegraphics[width=0.35\textwidth]{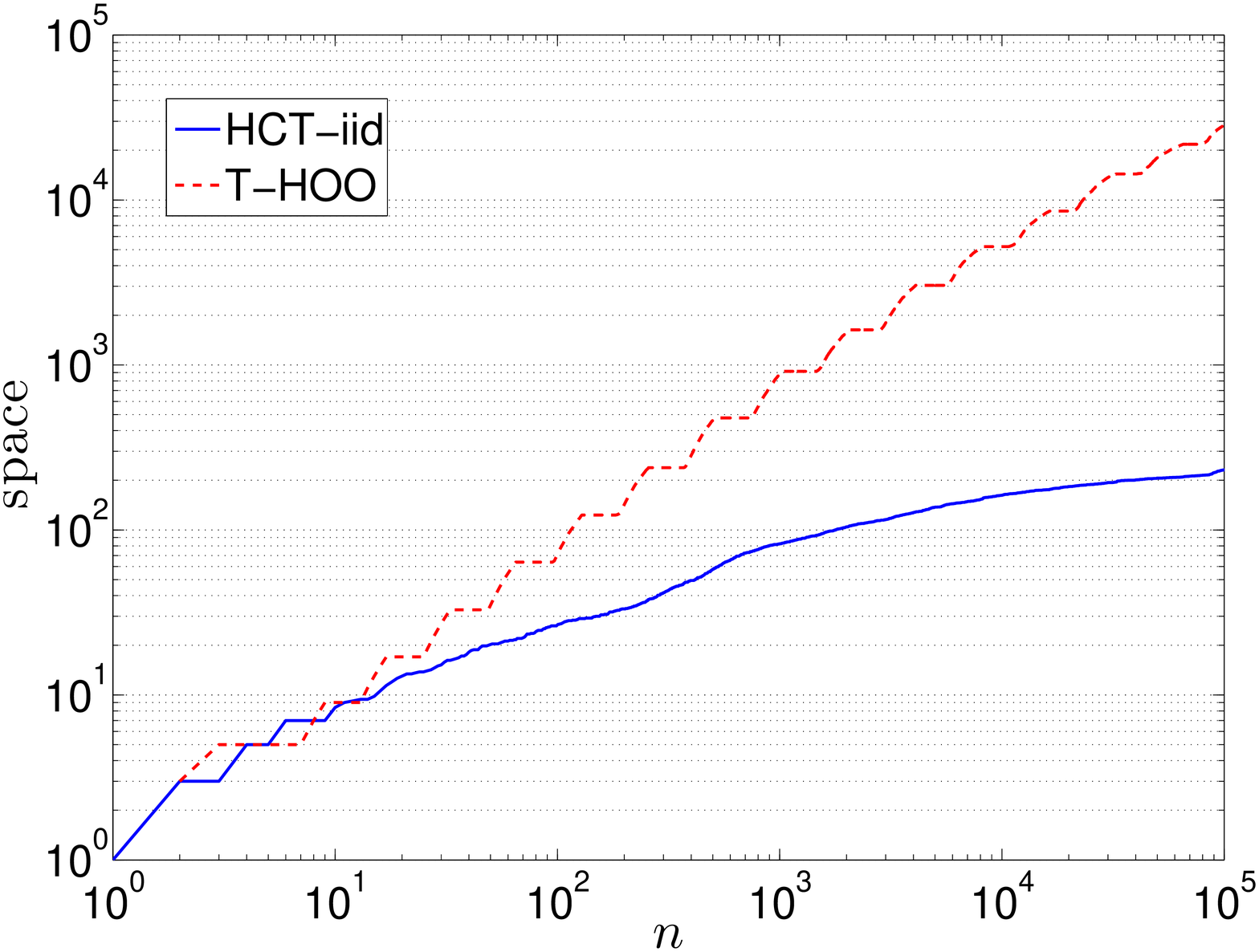}
\label{fig:res.iid.mem}
}
\end{center}
\label{fig:res.iid}
\vspace{-.5cm}
\caption{Comparison of the Performance of $\text{HCT-iid}$ and the Previous Methods under the iid Bandit Feedback.}
\vspace{-.1cm}
%\end{figure}
\end{figure*}

While our primary contribution is the definition of \HCT and its technical analysis, 
we also give some preliminary 
simulation results to demonstrate some 
of its properties. 
\begin{figure*}[htp]
\begin{center}
\subfigure[]
{\includegraphics[width=0.35\textwidth]{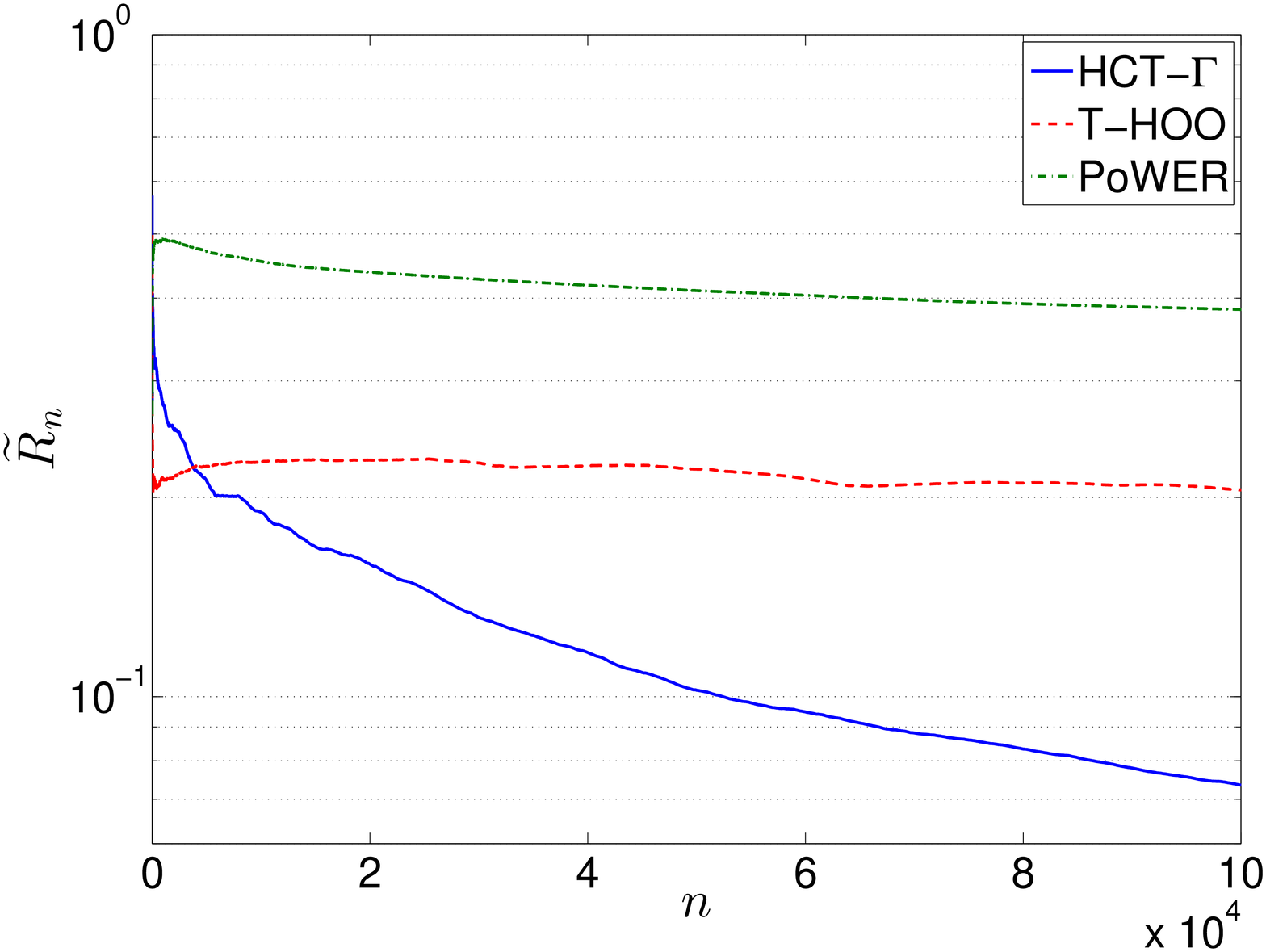}
\label{fig:res.dep.reg}
}
\subfigure[]{
\includegraphics[width=0.35\textwidth]{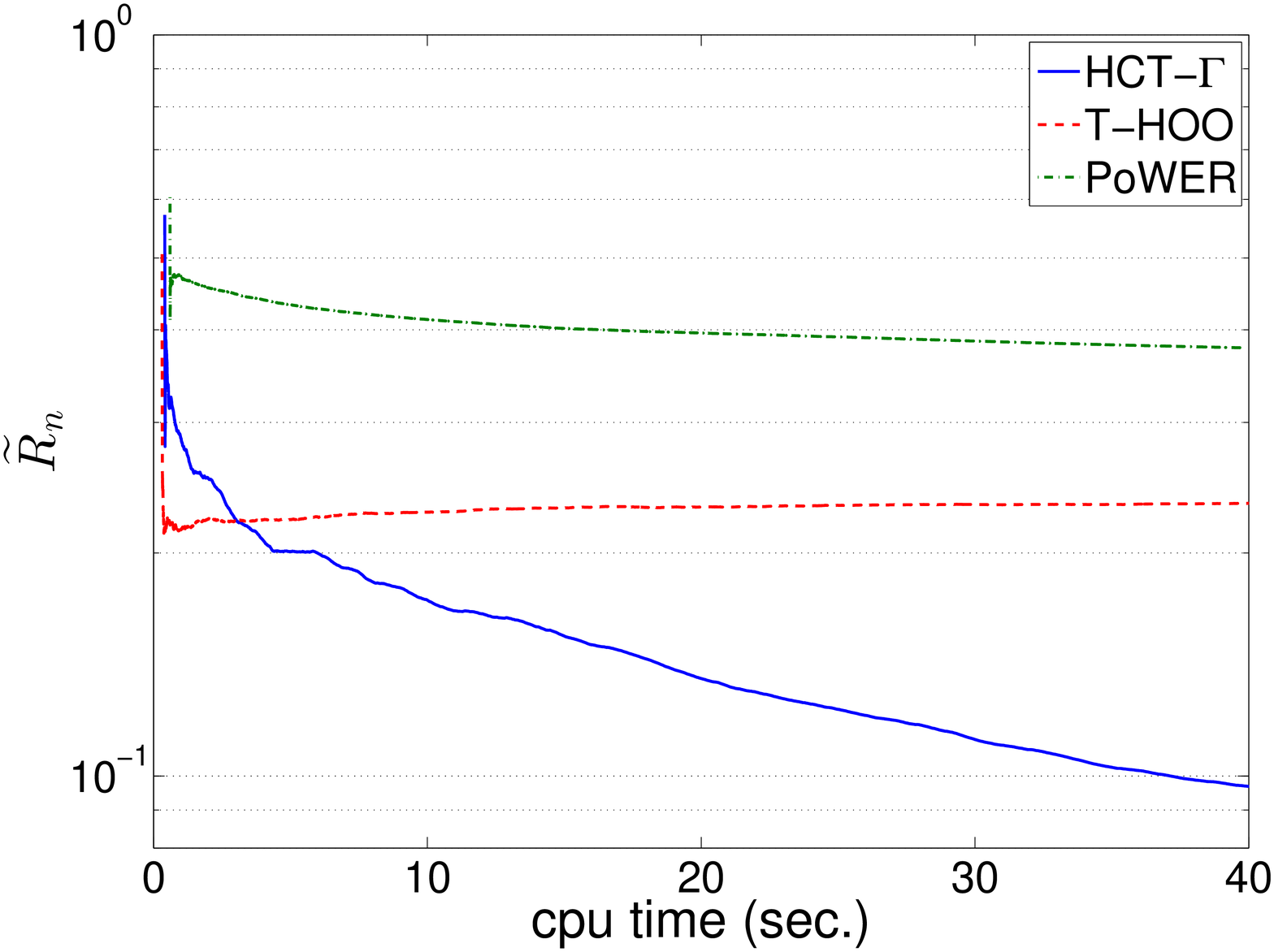}
\label{fig:res.dep.cpu}
}
\subfigure[]{
\includegraphics[width=0.35\textwidth]{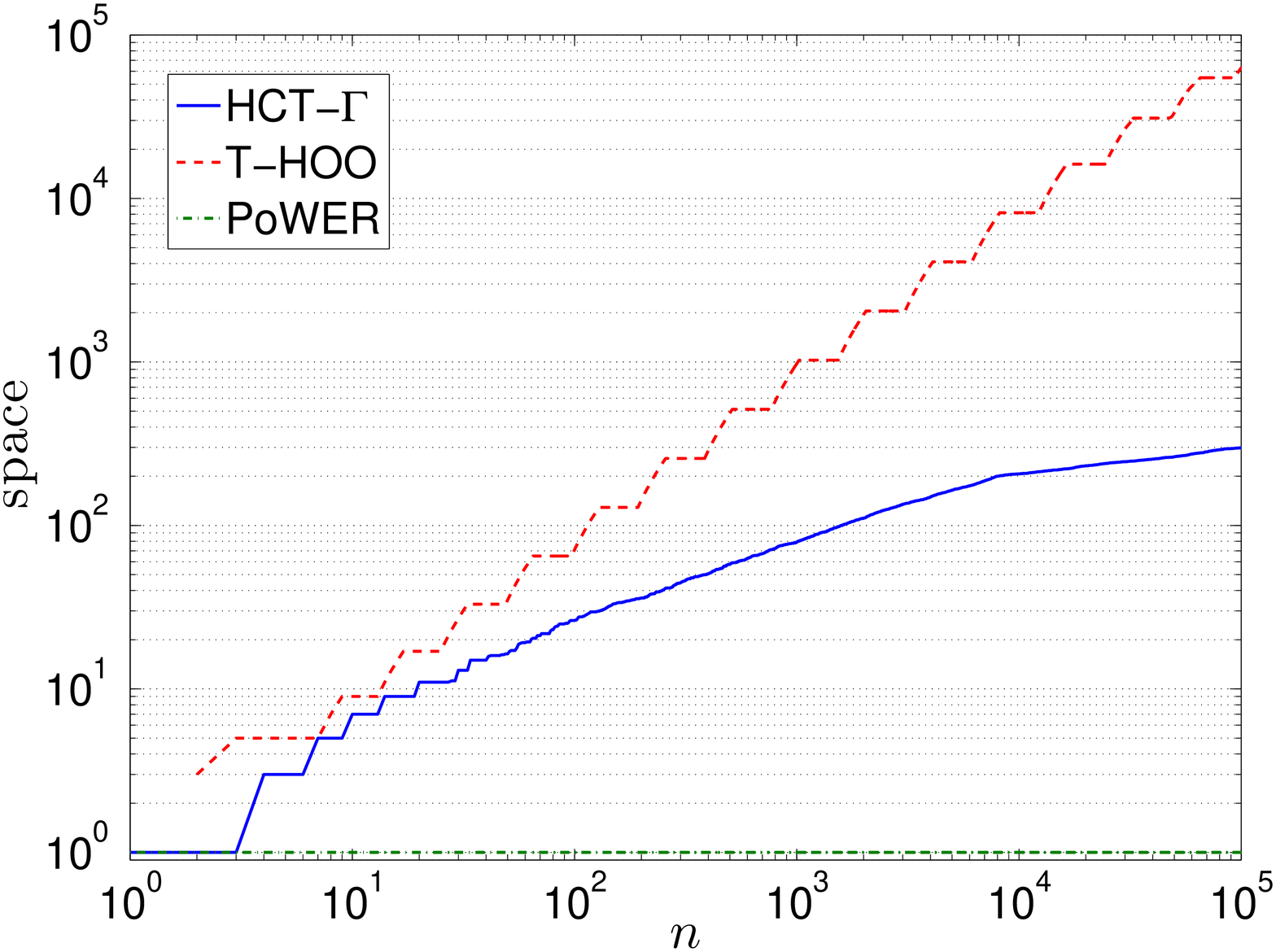}
\label{fig:res.dep.mem}
}
\label{fig:res.dep}
\vspace{-.4cm}
\caption{Comparison of the Performance of $\text{HCT}-\Gamma$ and the Previous Methods under Correlated Bandit Feedback (MDP setting)}
\vspace{-.4cm}
\end{center}
\end{figure*}

\textbf{Setup.} We focus on minimizing 
the regret across repeated noisy evaluations of 
the garland function $f(x)=x(1-x)(4-\sqrt{|\sin(60x)|})$  
relative to repeatedly selecting its global optima.%\footnote{We discuss some of the interesting properties of garland function in  Sect.~\ref{app:numeric} of the supplement. Also the  function is illustrated in Fig.~\ref{fig.garland}.}
 We select this function due to its 
 several interesting properties: (1) it contains many local optima, (2) it is locally smooth around its global optima $x^*$ (it behaves as
 $f^*-c|x -x^*|^\alpha$, for  $c =2$  and  $\alpha=1/2$), (3) it is also possible to show that the near-optimality dimension $d$ of $f$ equals $0$.
 
 \begin{figure}[ht]
\begin{center}
\includegraphics[width=0.4\textwidth]{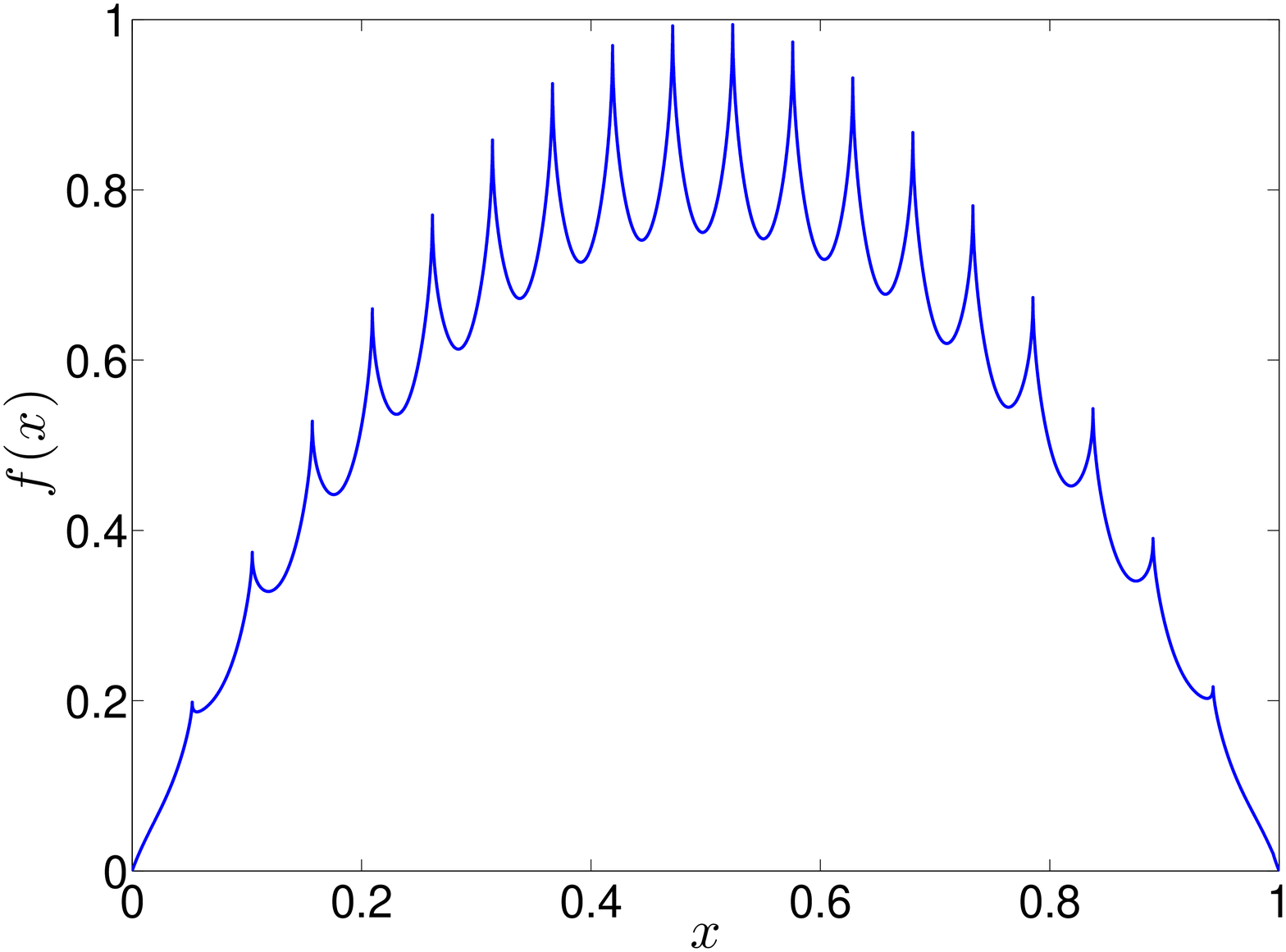}
%\vspace{-0.2in}
\caption{The garland function.  }
\label{fig.garland}
\end{center}
\end{figure}

We evaluate the performances of each algorithm in terms of 
the per-step regret, $\widetilde R_n=R_n/n$. Each run is $n=10^5$ 
steps and we average the performance on $10$ runs. For all the algorithms compared in the following, parameters\footnote{For both HCT and T-HOO we introduce a tuning parameter  
used to multiply the upper bounds, while for PoWER we optimize the window for computing the weighted average.} are optimized to maximize their performance.

\textbf{I.i.d.\ setting.}
For our first experiment%, we focus on minimizing 
%the regret across repeated function evaluations of 
%the garland function $f(x)=x(1-x)(4-\sqrt{|\sin(60x)|})$ 
%(see Figure~\ref{fig.garland} in the supplementary material)  
%relative to repeatedly selecting its global optima $x^*$. Pulling an arm $x$ produces a 
%reward of $f(x)+ \eps$,  where $\eps$ is drawn randomly 
%from the interval $[0,1]$.  These rewards are independent 
%and identically distributed given the selected arm $x$.
%

%, pulling an arm $x$ produces a 
%reward of $f(x)+ \eps$,  where $\eps$ is iid from a uniform distribution in $[0,1]$.
  we compare \HCTiid to 
the truncated hierarchical optimistic optimization 
(T-HOO) algorithm~\cite{BubeckMSS11}. T-HOO 
is a state-of-the-art $\mathcal X$-armed bandit algorithm,  developed 
as a computationally-efficient alternative of HOO. 
In Fig.~\ref{fig:res.iid.reg} we show the per-step 
regret, the runtime, and the space requirements of 
each approach. As predicted by the theoretical bounds, 
the per-step 
regret  $\widetilde R_n$ of both \HCTiid and truncated \textit{HOO}  decrease rapidly with number of steps.  Though 
the big O theoretical bounds are identical for 
both approaches, empirically we observe in this 
example that  \HCTiid outperforms \textit{T-HOO}  by a large margin.  Similarly, though the computational complexity 
of both approaches matches in the dependence on 
the number of time steps, empirically we observe 
that our approach outperforms \textit{T-HOO} 
(Fig.~\ref{fig:res.iid.cpu}). 
Perhaps the most significant expected 
advantage of HCT-iid over T-HOO for 
iid settings is in the space requirements. 
HCT-iid has a  space requirement for this domain 
that scales logarithmically 
with the time step $n$, as predicted by 
Thm.~\ref{thm:hct.corr.space}. since   
the near-optimality dimension  $d=0$). 
In contrast,  a brief analysis of \textit{T-HOO} 
suggests that its space requirements can grow 
polynomially, and indeed 
in this domain we 
observe  a polynomial growth of memory usage for T-HOO. 
These patterns mean that 
 \HCTiid can achieve a very small regret using a sparse decision tree with only few hundred nodes, whereas truncated \textit{HOO}   requires  orders of magnitude more nodes than \HCTiid.  

\textbf{Correlated setting.}
We create a continuous-state-action  
MDP out of the previously described 
Garland function by introducing the state of the environment 
$s$. Upon taking continuous-valued action $x$, the state of 
the environment changes deterministically to 
$s_{t+1}=(1-\beta) s_t +\beta x$, where we set $\beta=0.2$. 
The agent receives a stochastic reward for being in state $s$, 
which is (the Garland function) $f(s)+\eps$, where as before 
$\eps$ is drawn randomly from $[0,1]$. The initial state $s_0$ is also 
drawn randomly from $[0,1]$. A priori, the agent does not know 
the transition or reward function, making this a reinforcement 
learning problem. Though not a standard benchmark RL instance, 
this problem has multiple local optima and therefore is a interesting 
case for policy search, where $\Theta=\X$ is the policy set (which coincides with the action set in this case).

In this setting, we compare \HCTgamma to a PoWER, a standard 
RL policy search algorithm \cite{kober2011policy} on the above MDP problem %continuous-state-action  
MDP constructed out of garland function.%\footnote{See Sect.~\ref{app:numeric} of the supplement for the details of MDP problem.}  
PoWER uses an Expectation Maximization approach to optimize the policy 
parameters  and is therefore not guaranteed to find the global optima. 
We also compare our algorithm with T-HOO, though this algorithm is specifically designed for iid setting and one may expect that it may fail to converge to global optima under correlated bandit feedback. 
Fig.~\ref{fig:res.dep.reg} 
shows per-step regret of the $3$ approaches in the MDP.  
Only \HCTgamma succeeds in finding the globally optimal policy, 
as is evident because only 
in the case of \HCTgamma does the average regret tends to converge to zero 
(which is as predicted from Thm.~\ref{thm:hct.corr}). The PoWER method 
finds worse solutions  than both stochastic optimization approaches for 
the same amount of computational time, likely due to using EM which is known 
to be susceptible to local optima. On the other hand, its primary advantage is that it has a 
very small memory requirement. Overall this suggests the benefit of our 
proposed approach to be used for online MDP policy search, since it 
quickly (as a function of samples and runtime) can find a global optima, 
and is, to our knowledge, one of the only policy search methods guaranteed 
to do so. 

\section{Discussion and Future Work}
\label{s.discuss}
%Also, for a large class of problems 
%\HCT also requires significantly less memory space than the existing algorithms . 
%
%\TODO{Alex: add comments to the assumptions. Maybe we should add a remark later at the end explaining what happens when one of these assumptions is not correct.}
%
%\textbf{Unknown smoothness and mixing time:} 
In the current version of \textit{HCT} we assume that the learner has access to the information regarding the smoothness of function $f(x)$ and the mixing time $\Gamma$. In many problems those information  are not available to the learner. In the future it would 
be interesting to build on prior work that handles unknown 
smoothness in iid settings and extend it to correlated feedback. For example, 
 \citet{BubeckSY11} require a stronger global Lipschitz assumption and 
propose an algorithm to estimate the Lipschitz constant. 
Other work on the iid setting include 
\citet{valko2013} and \citet{Munos11}, 
which are limited to the simple regret scenario, but 
who only use the mild local smoothness assumption we define in Asm.~\ref{asm:Lip}, and 
do not require knowledge of 
the dissimilarity measure $\ell$. On the other hand, \citet{slivkins2011multi} and~\citet{bull2013adaptive-tree} study the cumulative regret but consider a different definition of smoothness related to the zooming concept introduced by~\citet{KleinbergSU08}. Finally, we notice that to deal with unknown mixing time, one may rely on data-dependent
tail's inequalities, such as  empirical Bernstein inequality~\citep{tolstikhin2013pac,maurer2009empirical}, replacing the mixing time with the empirical variance of the rewards.

In the future we also wish to explore using HCT to optimize 
other problems that can be modeled using correlated bandit feedback.
For example, HCT may be used for policy search in partially 
observable MDPs~\citep{vlassis2009model,baxter2000reinforcement}, as long as the POMDP is ergodic. 

To conclude, in this paper we introduce a new $\mathcal X$-armed bandit algorithm, called \HCT, for optimization under bandit feedback and prove regret bounds and simulation results for it. 
Our approach improves on existing results 
to handle the important case of correlated bandit feedback. This allows 
HCT to be applied to a broader range of problems than prior 
$\mathcal X$-armed bandit algorithms, such as we demonstrate 
by using it to perform 
policy search for continuous MDPs. 
\appendixpage
\appendix

%%%%%%%%%%%%%%%%%%%%%%%%%%%%%%%%%%%%%%%%%%%%%%%%%%%%%%%%%%%%%%%%%%%%%%%%%%%%%%%
%% Proof of Thm. 1
%%%%%%%%%%%%%%%%%%%%%%%%%%%%%%%%%%%%%%%%%%%%%%%%%%%%%%%%%%%%%%%%%%%%%%%%%%%%%%%

\section{Proof of Thm.~\ref{thm:hct.iid}}\label{app:proof1}

In this section we report the full proof of the regret bound of \HCTiid. 

We begin by introducing some additional notation, required for the analysis of both algorithms. 
We denote the indicator function of an event $\calE$ by $\mathbb I_{\calE}$. For all $1\leq h\leq H(t)$ and $t>0$, we denote by $\calI_h(t)$ the set of all nodes created by the algorithm at depth $h$ up to time $t$ and by $\calI^+_h(t)$ the subset of $\calI_h(t)$ including only the internal nodes (i.e., nodes that are not leaves), which corresponds to nodes at depth $h$ which have been expanded before time $t$. At each time step $t$, we denote by $(h_t,i_t)$ the node selected by the algorithm. For every $(h,i)\in \mathcal T$, we define the set of time steps when $(h,i)$ has been selected as $\mathcal C_{h,i}:=\{t=1,\ldots,n: (h_t,i_t)=(h,i)\}$. We also define the set of times that a child of $(h,i)$ has been selected as $\mathcal C^c_{h,i}:=\mathcal C_{h+1,2i-1}\bigcup\mathcal C_{h+1,2i}$. We need to introduce three important steps related to node $(h,i)$:
\begin{itemize}
\item $\bar t_{h,i} :=\max_{t\in\mathcal C_{h,i}} t$ is the last time $(h,i)$ has been selected,
\item $\tilde t_{h,i} :=\max_{t\in\mathcal C^c_{h,i}} t$ is the last time when any of the two children of $(h,i)$ has been selected,
\item $t_{h,i} := \min\{t: T_{h,i}(t) > \tau_h(t)\}$ is the step when $(h,i)$ is expanded. 
\end{itemize} 

\textbf{The choice of $\tau_h$.} The threshold on the the number of pulls needed before expanding a node at depth $h$ is determined so that, at each time $t$, the two confidence terms in the definition of $U$ (Eq.~\ref{eq:def.stats.up}) are roughly equivalent, that is
\begin{align*}%\label{eq:tau}
\nu_1 \rho^h = c\sqrt{\frac{\log(1/\tilde \delta(t^+))}{\tau_h(t)}} \quad\Longrightarrow\quad \tau_h(t) = \frac{c^2 \log(1/\tilde \delta(t^+))}{\nu_1^2} \rho^{-2h}.
\end{align*}
Furthermore, since $t \leq  t^+ \leq 2t$ then
\begin{equation}\label{eq:tau2}
\frac{c^2}{\nu_1^2} \rho^{-2h} \leq \frac{c^2 \log(1/\tilde \delta(t))}{\nu_1^2} \rho^{-2h} \leq \tau_h(t) \leq  \frac{c^2 \log(2/\tilde \delta(t))}{\nu_1^2} \rho^{-2h},
\end{equation}
where we used the fact that $0 < \tilde \delta(t) \leq 1$ for all $t>0$. As described in Section~\ref{s:algorithm}, the idea is that the expansion of a node, which corresponds to an increase in the resolution of the approximation of $f$, should not be performed until the empirical estimate $\hmu_{h,i}$ of $f(x_{h,i})$ is accurate enough.
Notice that the number of pulls $T_{h,i}(t)$ for an expanded node $(h,i)$ does not necessarily coincide with $\tau_h(t)$, since $t$ might correspond to a time step when some leaves have not been pulled until $\tau_h(t)$ and other nodes have not been fully resampled after a refresh phase.

%%%%%%%%%%%%%%%%%%%%%%%%%%%%%%%%%%%%%%%%%%%%%%%%%%%%%% Lem.

We begin our analysis by bounding the maximum depth of the trees constructed by \HCTiid.

\textbf{Lemma~\ref{lem:bound.depth.anytime}}
\textit{Given the number of samples $\tau_h(t)$ required for the expansion of nodes at depth $h$ in Eq.~\ref{eq:tau}, the depth $H(n)$ of the tree $\T_n$ is bounded as}
\begin{align*}
H(n) \leq H_{\max}(n) = \frac{1}{1-\rho}\log\Big(\frac{n\nu_1^2}{2(c\rho)^2}\Big).
\end{align*}

\begin{proof}
The deepest tree that can be developed by \HCTiid is a \textit{linear} tree, where at each depth $h$ only one node is expanded, that is ,  $|\calI^+_h(n)|=1$ and $|\calI_h(n)|=2$ for all $h<H(n)$. Thus we have
\begin{align*}
n & = \sum_{h=0}^{H(n)} \sum_{i\in \calI_h(n)} T_{h,i}(n) \geq \sum_{h=0}^{H(n)-1} \sum_{i\in \calI_h(n)} T_{h,i}(n) \geq \sum_{h=0}^{H(n)-1} \sum_{i\in \calI^+_h(n)} T_{h,i}(n) \geq \sum_{h=0}^{H(n)-1} \sum_{i\in \calI^+_h(n)} T_{h,i}(t_{h,i}) \\
&\stackrel{(1)}{\geq}  \sum_{h=0}^{H(n)-1} \sum_{i\in \calI^+_h(n)} \tau_{h,i}(t_{h,i})\geq \sum_{h=1}^{H(n)-1} \frac{c^2}{\nu_1^2} \rho^{-2h} \geq \frac{(c\rho)^2}{\nu_1^2} \rho^{-2H(n)} \sum_{h=1}^{H(n)-1} \rho^{-2(h-H(n)+1)},
\end{align*}
where inequality $(1)$ follows from the fact that a node $(h,i)$ is expanded at time $t_{h,i}$ only when it is pulled \textit{enough}, i.e., $T_{h,i}(t_{h,i}) \geq \tau_h(t_{h,i})$.
Since all the elements in the summation over $h$ are positive, then we can lower-bound the sum by its last element ($h=H(n)$), which is $1$, and obtain
\begin{align*}
n \geq 2\frac{(c\rho)^2}{\nu_1^2} H(n)\rho^{-2H(n)} \geq 2\frac{(c\rho)^2}{\nu_1^2} \rho^{-2H(n)},
\end{align*}
where we used the fact that $H(n) \geq 1$. By solving the previous expression we obtain
\begin{align*}
\rho^{-2H(n)} \leq n\frac{\nu_1^2}{2(c\rho)^2} \quad\Longrightarrow\quad H(n) \leq \frac{1}{2}\log\Big(\frac{n\nu_1^2}{2(c\rho)^2}\Big)/\log(1/\rho).
\end{align*}
Finally, the statement follows using $\log(1/\rho) \geq 1-\rho$.
\end{proof}

We now introduce a high probability event under which the mean reward for all the expanded nodes is within a confidence interval of the empirical estimates at a \textit{fixed} time $t$.

%%%%%%%%%%%%%%%%%%%%%%%%%%%%%%%%%%%%%%%%%%%%%%%%%%%%%% Lem.

\begin{lemma}[High-probability event]\label{lem:high.prob}
We define the set of all the possible nodes in trees of maximum depth $H_{\max}(t)$ as
\begin{align*}
\L_t = \bigcup_{\T: \text{Depth}(\T)\leq H_{\max}(t)} \text{Nodes}(\T).
\end{align*}
We introduce the event
\begin{align*}
\calE_t= \bigg\{\forall (h,i)\in\L_t, \forall T_{h,i}(t)=1..t: \Big| \hmu_{h,i}(t) - f(x_{h,i}) \Big| \leq c\sqrt{\frac{\log(1 /\tilde \delta(t))}{T_{h,i}(t)}} \bigg\},
\end{align*}
where $x_{h,i}\in\calP_{h,i}$ is the arm corresponding to node $(h,i)$. If 
\begin{align*}
c=2\sqrt{\frac{1}{1-\rho}} \quad \text{ and } \quad \tilde \delta(t)=\frac{\delta}{t}\sqrt[8]{\frac{\rho}{3\nu_1}},
\end{align*}
then for any fixed $t$, the event $\calE_t$ holds with probability at least $1 - \delta/t^6$.
\end{lemma}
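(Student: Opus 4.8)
The plan is to follow the standard fixed-time concentration route: (i) control a single node for a single value of its visit counter via Hoeffding's inequality, exploiting the iid hypothesis of Theorem~\ref{thm:hct.iid}; (ii) union-bound over every node that could appear in a tree of depth at most $H_{\max}(t)$ and over every admissible counter value; (iii) substitute the prescribed $c$ and $\tilde\delta(t)$ and check that the resulting bound collapses to $\delta/t^6$. For step (i), fix a depth $h$, an index $1\le i\le 2^h$, and an integer $s\in\{1,\dots,t\}$. The representative arm $x_{h,i}\in\calP_{h,i}$ is a deterministic point of $\X$, and under the iid assumption the $s$-pull average $\hmu_{h,i}^{(s)}:=\frac1s\sum_{j=1}^s r^j(x_{h,i})$ is an average of $s$ i.i.d.\ random variables in $[0,1]$ with common mean $f(x_{h,i})$; it is well defined whether or not the algorithm ever creates node $(h,i)$ or pulls it $s$ times, and it coincides with $\hmu_{h,i}(t)$ on $\{T_{h,i}(t)=s\}$, which is exactly what lets the (data-dependent) tree be treated as fixed in the union bound. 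Hoeffding's inequality then gives
\[
\Prob\Big(\big|\hmu_{h,i}^{(s)}-f(x_{h,i})\big|>c\sqrt{\tfrac{\log(1/\tilde\delta(t))}{s}}\Big)\le 2\exp\!\big(-2c^2\log(1/\tilde\delta(t))\big)=2\,\tilde\delta(t)^{2c^2}.
\]

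For step (ii), Lemma~\ref{lem:bound.depth.anytime} guarantees that every tree generated by \HCTiid up to time $t$ has depth at most $H_{\max}(t)$, so $\L_t\subseteq\{(h,i):0\le h\le H_{\max}(t),\,1\le i\le 2^h\}$ and $|\L_t|\le 2^{H_{\max}(t)+1}$; together with the $t$ possible counter values, a union bound gives $\Prob(\calE_t^{\,c})\le 2\cdot 2^{H_{\max}(t)+1}\,t\,\tilde\delta(t)^{2c^2}$. For step (iii), substituting $c=2\sqrt{1/(1-\rho)}$ gives $2c^2=8/(1-\rho)$ and $\tilde\delta(t)^{2c^2}=(\delta/t)^{8/(1-\rho)}(\rho/(3\nu_1))^{1/(1-\rho)}$, while the explicit formula $H_{\max}(t)=\frac1{1-\rho}\log\big(\tfrac{t\nu_1^2}{2(c\rho)^2}\big)$ turns $2^{H_{\max}(t)}$ into a power of $t$ with exponent at most $1/(1-\rho)$ times a constant in $\nu_1,\rho$. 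Collecting powers of $t$ leaves exponent $1+\tfrac1{1-\rho}-\tfrac8{1-\rho}=1-\tfrac7{1-\rho}\le-6$ because $0<\rho<1$; the power $8/(1-\rho)\ge1$ on $\delta$ yields $\delta^{8/(1-\rho)}\le\delta$; and the leftover $\nu_1,\rho$-dependent constant is exactly what the eighth root in $c_1=\sqrt[8]{\rho/(3\nu_1)}$ is calibrated to absorb, giving $\Prob(\calE_t^{\,c})\le\delta/t^6$.

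The only delicate part is the constant bookkeeping in step (iii): everything else is a textbook Hoeffding-and-union-bound argument, and the real content is checking that the single exponent $2c^2=8/(1-\rho)$ simultaneously pays for the $2^{H_{\max}(t)+1}$ possible nodes, the $t$ admissible counter values, and the target decay $t^{-6}$ — which is why $c$ must carry the factor $1/(1-\rho)$ rather than being a bare constant — while the residual $\nu_1$- and $\rho$-dependent factors are folded into $c_1$. I would carry this out exponent by exponent, keeping $\log$ natural throughout and using $t\ge 1$ so that the negative powers of $t$ are bounded by $1$.
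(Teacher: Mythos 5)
Your overall route is exactly the paper's: Hoeffding for each (node, counter-value) pair — treating the $s$-pull average of the fixed arm $x_{h,i}$ as an average of iid variables so that the data-dependent tree can be handled by a union bound over all $|\L_t|\le 2^{H_{\max}(t)+1}$ potential nodes and the $t$ counter values — followed by substitution of $c$ and $\tilde\delta(t)$. Steps (i) and (ii) are fine and match the paper.

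The gap is precisely in the bookkeeping of step (iii), which you correctly identified as the real content but did not close. With your bound $2^{H_{\max}(t)}\le\big(\tfrac{t\nu_1^2}{2(c\rho)^2}\big)^{1/(1-\rho)}$, substituting $(c\rho)^2=4\rho^2/(1-\rho)$ and $\tilde\delta(t)^{8/(1-\rho)}=(\delta/t)^{8/(1-\rho)}\big(\tfrac{\rho}{3\nu_1}\big)^{1/(1-\rho)}$ leaves, besides $t^{1-7/(1-\rho)}\delta^{8/(1-\rho)}$, the residual factor $4\big(\tfrac{\nu_1(1-\rho)}{24\rho}\big)^{1/(1-\rho)}$. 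This is \emph{not} absorbed by the eighth root in $c_1$: it exceeds $1$ for, e.g., large $\nu_1$ or small $\rho$, and it blows up as $\rho\to 0$. Moreover your $t$-exponent margin $1-7/(1-\rho)\le -6$ is tight exactly in the regime $\rho\to 0$ where the constant is worst, so you cannot trade spare $t$-decay for it; the conclusion you would actually reach is $\Prob(\calE_t^{\comp})\le C(\nu_1,\rho)\,\delta/t^6$ rather than $\delta/t^6$. The paper closes this by using what the proof of Lemma~\ref{lem:bound.depth.anytime} actually establishes, namely $H(t)\le\frac{1}{2(1-\rho)}\log\big(\tfrac{t\nu_1^2}{2(c\rho)^2}\big)$ (the displayed $H_{\max}$ in the lemma statement is loose by a factor $2$), so that $2^{H_{\max}(t)}\le\big(\tfrac{t\nu_1^2}{2(c\rho)^2}\big)^{1/(2(1-\rho))}$. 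With this exponent the powers of $\nu_1$ and $\rho$ cancel \emph{exactly} against $\big(\tfrac{\rho}{3\nu_1}\big)^{1/(1-\rho)}$, leaving $4\big(\tfrac{\sqrt{1-\rho}}{3\sqrt{8}}\big)^{1/(1-\rho)}\le 1$ and a $t$-exponent of $\tfrac{-2\rho-13}{2(1-\rho)}\le -13/2\le -6$, which is what delivers the stated $\delta/t^6$. So you need the sharper depth bound (or an equivalent retuning of $c_1$), not just the statement-level $1/(1-\rho)$ version, for the absorption you invoke to be valid.
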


\begin{proof}
We upper bound the probability of the complementary event as
\begin{align*}
\mathbb{P}[\calE^{\comp}_t]  &\leq \sum_{ (h,i) \in \L_t } \sum_{T_{h,i}(t)=1}^t \mathbb{P}\bigg[\big| \hmu_{h,i}(t) - \mu_{h,i} \big| \geq c\sqrt{\frac{\log(1/\tilde \delta(t))}{T_{h,i}(t)}}\bigg] \\
&\leq \sum_{(h,i)\in \L_t} \sum_{T_{h,i}(t)=1}^t 2\exp\bigg(-2T_{h,i}(t) c^2 \frac{\log(1/\tilde \delta(t))}{T_{h.i}(t)}\bigg) \\
&= 2 \exp\big(-2c^2 \log(1/\tilde \delta(t))\big)  t |\L_t|,
\end{align*}
where the first inequality is an application of a union bound and the second inequality follows from the Chernoff-Hoeffding inequality.
We upper bound the number of nodes in $\L_t$ by the largest binary tree with a maximum depth $H_{\max}(t)$, i.e., $|\L_t| \leq 2^{H_{\max}(t)+1}$. Thus
\begin{align*}
\mathbb{P}[\calE^\comp_t] &\leq 2 (\tilde \delta(t))^{2c^2 } t 2^{H_{\max}(t)+1}.
\end{align*}
We first derive a bound on the the term $2^{H_{\max}(t)}$ as
\begin{align*}
2^{H_{\max}(t)} \leq \text{pow}\Bigg(2, \log_2\bigg(\frac{t\nu_1^2}{2(c\rho)^2} \bigg)^{\frac{1}{2\log_2(e)(1-\rho)}}\Bigg) \leq \bigg(\frac{t\nu_1^2}{2(c\rho)^2} \bigg)^{\frac{1}{2(1-\rho)}},
\end{align*}
where we used the upper bound $H_{\max}(t)$ from Lemma~\ref{lem:bound.depth.anytime} and $\log_2(e) > 1$. This leads to 
\begin{align*}
\mathbb{P}[\calE^\comp_t] &\leq 4t \big(\tilde \delta(t)\big)^{2c^2} \bigg(\frac{t\nu_1^2}{2(c\rho)^2} \bigg)^{\frac 1{2(1-\rho)}}.
\end{align*}
The choice of  $c$ and $\tilde \delta(t)$ as in the statement leads to
\begin{align*}
\mathbb{P}[\calE^\comp_t] &\leq 4t \bigg(\sqrt[8]{\rho/(3\nu_1)}\delta/t\bigg)^{\frac{8}{1-\rho}} \bigg(\frac{t\nu_1^2 (1-\rho)}{8\rho^2} \bigg)^{\frac 1{2(1-\rho)}} \\
&= 4 t \big(\delta/t\big)^{\frac{8}{1-\rho}} \big(\rho/(3\nu_1)\big)^{\frac{1}{1-\rho}} t^{\frac{1}{2(1-\rho)}} \bigg(\frac{\nu_1 \sqrt{1-\rho}}{\sqrt{8}\rho} \bigg)^{\frac 1{1-\rho}} \\
&\leq 4 \delta t^{1-\frac{8}{1-\rho}+ \frac{1}{2(1-\rho)}} \bigg(\frac{\sqrt{1-\rho}}{3\sqrt{8}} \bigg)^{\frac 1{1-\rho}}\\
&\leq \frac{4 }{ 3\sqrt{8} } \delta t^{\frac{-2\rho-13}{2(1-\rho)}} \leq \delta t^{-13/2} \leq \delta/t^6 ,
\end{align*}
which completes the proof.
\end{proof}

Recalling the definition the regret from Sect.~{s:preliminaries}, we decompose the regret of \HCTiid in two terms depending on whether event $\calE_t$ holds or not (i.e., failing confidence intervals). Let the instantaneous regret be $\Delta_t = f^* - r_t$, then we rewrite the regret as
\begin{equation}\label{eq:regret.decomp.iid}
R_n=\sum_{t=1}^n \Delta_t = \sum_{t=1}^n \Delta_t \mathbb I_{\calE_t} + \sum_{t=1}^n\Delta_t \mathbb I_{\calE^\comp_t} = R_n^{\calE} + R_n^{\calE^\comp}.
\end{equation}
We first study the regret in the case of failing confidence intervals.

%%%%%%%%%%%%%%%%%%%%%%%%%%%%%%%%%%%%%%%%%%%%%%%%%%%%%% Lem.

\begin{lemma}[Failing confidence intervals]\label{lem:failing.bound}
Given the parameters $c$ and $\tilde\delta(t)$ as in Lemma~\ref{lem:high.prob}, the regret of \HCTiid when confidence intervals fail to hold is bounded as
\begin{align*}
R_n^{\calE^\comp} \leq \sqrt{n},
\end{align*}
with probability $1-\frac {\delta}{5n^2}$.
\end{lemma}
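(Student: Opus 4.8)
The plan is to control $R_n^{\calE^\comp} = \sum_{t=1}^n \Delta_t \mathbb I_{\calE_t^\comp}$ by combining a deterministic bound on each instantaneous regret with a probabilistic bound, derived from Lemma~\ref{lem:high.prob}, on how often the bad events $\calE_t^\comp$ can occur. First I would observe that since $r_t \in [0,1]$ and $f^* \in [0,1]$, the instantaneous regret is bounded as $\Delta_t = f^* - r_t \leq 1$ for every $t$. Hence $R_n^{\calE^\comp} \leq \sum_{t=1}^n \mathbb I_{\calE_t^\comp} = |\{t \leq n : \calE_t^\comp \text{ holds}\}|$, and it suffices to show that, with probability at least $1 - \delta/(5n^2)$, the number of steps on which the confidence intervals fail is at most $\sqrt n$.

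Next I would split off the small-$t$ regime: for $t \leq \sqrt n$ we simply use $\mathbb I_{\calE_t^\comp} \leq 1$, which contributes at most $\sqrt n$ — but this already saturates the target bound, so in fact I need to be a little more careful and instead argue that for $t$ below some threshold the contribution is negligible, or absorb it by noting the whole sum we are bounding is itself $\leq \sqrt n$ only if the tail contributes essentially nothing. The cleaner route: by Lemma~\ref{lem:high.prob}, for each fixed $t$ we have $\Prob[\calE_t^\comp] \leq \delta/t^6$. Taking a union bound over all $t \geq \sqrt n$ (say), $\Prob\big[\exists\, t \geq \sqrt n : \calE_t^\comp \text{ holds}\big] \leq \sum_{t \geq \sqrt n} \delta/t^6 \leq \delta \int_{\sqrt n - 1}^\infty x^{-6}\,dx = O(\delta\, n^{-5/2})$, which is $\leq \delta/(5n^2)$ for $n$ large enough. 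On this high-probability event, $\calE_t$ holds for every $t \geq \sqrt n$, so $\sum_{t \geq \sqrt n} \Delta_t \mathbb I_{\calE_t^\comp} = 0$, and therefore $R_n^{\calE^\comp} = \sum_{t < \sqrt n} \Delta_t \mathbb I_{\calE_t^\comp} \leq \sum_{t < \sqrt n} 1 \leq \sqrt n$.

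The main obstacle here is purely bookkeeping: making sure the exponent $t^{-6}$ from Lemma~\ref{lem:high.prob} is strong enough that the union bound over the tail $t \in [\sqrt n, n]$ (or $[\sqrt n, \infty)$) comes in under the claimed failure probability $\delta/(5n^2)$, and checking that the deterministic contribution from $t < \sqrt n$ is exactly $\leq \sqrt n$ with no lurking constant. Since $\sum_{t \geq \sqrt n} t^{-6}$ decays like $n^{-5/2} \ll n^{-2}$, there is comfortable slack, so no delicate estimate is needed — one only has to pick the cutoff (here $\sqrt n$) consistently so that both the deterministic piece and the probabilistic piece land at the target scale. I would also note in passing that the factor $1/5$ is chosen so that, together with the other failure-probability contributions elsewhere in the proof, the total failure probability telescopes to at most $\delta$.
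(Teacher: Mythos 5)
Your proposal is correct and follows essentially the same route as the paper: bound $\Delta_t\le 1$, split the horizon at $\sqrt n$, take the first $\sqrt n$ steps deterministically, and use the per-step bound $\Prob[\calE_t^\comp]\le\delta/t^6$ from Lemma~\ref{lem:high.prob} with a union bound and an integral comparison to show no failures occur after $\sqrt n$ with probability at least $1-\delta/(5n^2)$. The only cosmetic difference is that the paper integrates from $\sqrt n$ (since the sum starts at $t=\sqrt n+1$), giving $\delta/(5n^{5/2})\le\delta/(5n^2)$ for all $n$ without your ``$n$ large enough'' caveat, and the factor $1/5$ simply comes from $\int t^{-6}\,dt=-t^{-5}/5$ rather than from budgeting against other failure events.
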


\begin{proof}
We first split the time horizon $n$ in two phases: the first phase until $\sqrt{n}$ and the rest. Thus the regret becomes
\begin{equation*}
R_n^{\calE^\comp} = \sum_{t=1}^n\Delta_t \mathbb I_{\calE^\comp_t}= \sum_{t=1}^{\sqrt{n}}\Delta_t \mathbb I_{\calE^\comp_t}+\sum_{t=\sqrt{n}+1}^n\Delta_t \mathbb I_{\calE^\comp_t}.
\end{equation*} 
We trivially bound the regret of first term by $\sqrt{n}$. So in order to prove the result it suffices to show that  event $\calE^\comp_t$ never happens after $\sqrt{n}$, which implies that the remaining term is zero with high probability.
By summing up the probabilities $\mathbb{P}[\calE^\comp_t]$ from $\sqrt{n}+1$ to $n$  and applying union bound we deduce
\begin{equation*}
\mathbb{P}\bigg[\bigcup_{t=\sqrt{n}+1}^{n} \calE^\comp_t\bigg] \leq \sum_{t=\sqrt{n}+1}^n\mathbb{P}[\calE^\comp_t] \leq  \sum_{\sqrt{n}+1}^n  \frac {\delta}{t^6}  \leq \int_{\sqrt{n}}^{+\infty}  \frac {\delta}{t^6} dt\leq\frac {\delta}{5n^{5/2}} \leq\frac {\delta}{5n^{2}}.
\end{equation*}
In words this result implies that w.p. $\geq1-\delta/(5 n^2)$ we can not have a failing confidence interval after time $\sqrt{n}$. This   combined with the trivial bound of $\sqrt{n}$ for the first $\sqrt{n}$ steps completes the proof.
\end{proof}

We are now ready to prove the main theorem, which only requires to study the regret term under events $\{\calE_t\}$.

%%%%%%%%%%%%%%%%%%%%%%%%%%%%%%%%%%%%%%%%%%%%%%%%%%%%%% Thm.

\textbf{Theorem~\ref{thm:hct.iid} (Regret bound of \HCTiid).} 
\textit{Let $\delta\in(0,1)$, $\tilde \delta(t)=\sqrt[8]{\rho/(3\nu_1)}\delta/t$, and $c=2\sqrt{1/(1-\rho)}$. We assume that Assumptions~\ref{asm:dissim}--\ref{asm:near.optimal.dim} hold and that at each step $t$, the reward $r_t$ is independent of all prior random events and $\mathbb E(r_t|x_{t})=f(x_{t})$. Then the regret of \HCTiid after $n$ steps is}
\begin{align*}
R_n \leq 3 \bigg(\frac{2^{2d+7} \nu_1^{2(d+1)} C \nu_2^{-d} \rho^d}{(1-\rho)^{d+7}}\bigg)^{\frac{1}{d+2}} \bigg(\log\Big(\frac{2n}{\delta} \sqrt[8]{\frac{3\nu_1}{\rho}}\Big)\bigg)^{\frac{1}{d+2}} n^{\frac{d+1}{d+2}} + 2\sqrt{n\log(4n/\delta)},
\end{align*}
\noindent\textit{with probability $1-\delta$.}

\begin{proof}

\textbf{Step 1: Decomposition of the regret.} We start by further decomposing the regret in two terms. We rewrite the instantaneous regret $\Delta_t$ as
\begin{align*}
\Delta_t = f^* - r_t = f^* - f(x_{h_t,i_t}) + f(x_{h_t,i_t}) - r_t = \Delta_{h_t,i_t} + \hDelta_t,
\end{align*} 
which leads to a regret (see Eq.~\ref{eq:regret.decomp.iid})
\begin{align}\label{eq:regret.optimistic.nodes.fast.iid}
R_n^{\calE} = \sum_{t=1}^n \Delta_{h_t,i_t} \I_{\calE_t} + \sum_{t=1}^n \hDelta_t \I_{\calE_t} \leq \sum_{t=1}^n \Delta_{h_t,i_t} \I_{\calE_t} + \sum_{t=1}^n \hDelta_t = \widetilde{R}_n^{\calE} + \widehat{R}_n^{\calE}.
\end{align}

We start bounding the second term. We notice that the sequence $\{\widehat \Delta_t\}_{t=1}^n$ is a bounded martingale difference sequence since $\mathbb E (\widehat \Delta_t|\mathcal F_{t-1} )=0$ and $|\hDelta_t| \leq 1$. Therefore, an immediate application of the Azuma's inequality leads to
\begin{equation}
\label{eq:mart.bound.iid}
\widehat{R}_n^{\calE} = \sum_{t=1}^{n}\widehat \Delta_t\leq 2\sqrt{n\log(4n/\delta)},
\end{equation}
with probability $1-\delta/(4n^2)$.

\textbf{Step 2: Preliminary bound on the regret of selected nodes and their parents.} 
We now proceed with the study of the first term $\widetilde{R}_n^{\calE}$, which refers to the regret of the selected nodes as measured by its mean-reward. 
We start by characterizing which nodes are actually selected by the algorithm under event $\calE_t$. Let $(h_t,i_t)$ be the node chosen at time $t$ and $P_t$ be the path from the root to the selected node.
Let $(h',i')\in P_t$ and $(h'',i'')$ be the node which immediately follows $(h',i')$ in $P_t$ (i.e., $h''=h'+1$). By definition of $B$ and $U$ values, we have that
\begin{align}\label{eq:b2b}
B_{h',i'}(t) \!=\! \min\Big[U_{h',i'}(t); \max\big(B_{h'+1,2i'-1}(t); B_{h'+1,2i'}(t)\big)\Big] \!\leq\! \max\big(B_{h'+1,2i'-1}(t); B_{h'+1,2i'}(t)\big) \!=\! B_{h'',i''}(t),
\end{align}
where the last equality follows from the fact that the \textsl{OptTraverse} function selects the node with the largest $B$ value.
By iterating the previous inequality for all the nodes in $P_t$ until the selected node $(h_t,i_t)$ and its parent  $(h^p_t,i^p_t)$, we obtain that %for any $(h',i')\in P_t$
\begin{align*}
B_{h',i'}(t) &\leq B_{h_t,i_t}(t) \leq U_{h_t,i_t}(t), \qquad \forall (h',i')\in P_t
\\
B_{h',i'}(t) &\leq B_{h^p_t,i^p_t}(t) \leq U_{h^p_t,i^p_t}(t), \qquad \forall (h',i')\in P_t-{(h_t,i_t)}
\end{align*}
by definition of $B$-values. %Furthermore, without loss of generality, we assume that $(0,1)\in P_t$, and we obtain that $B_{0,2}(t) \leq B_{0,1}(t) \leq U_{h_t,i_t}(t)$.
Thus for any node $(h,i)\in P_t \}$, we have that $U_{h_t,i_t}(t) \geq B_{h,i}(t)$. Furthermore, since  the root node $(0,1)$ which covers the whole arm space $\mathcal X$   is in $P_t$, thus   there exists at least one node $(h^*,i^*)$ in the set $P_t$  which includes the maximizer $x^*$ (i.e., $x^* \in \calP_{h^*,i^*}$) and has the the depth  $h^*\leq h^p_t< h_t$.\footnote{Note that we never pull the  root node $(0,1)$, therefore $h_t>0$.} Thus 
\begin{equation}
\label{eq:uq}
\begin{aligned}
U_{h_t,i_t}(t) &\geq B_{h^*,i^*}(t).
\\
U_{h^p_t,i^p_t}(t)& \geq B_{h^*,i^*}(t)
\end{aligned}
\end{equation}
Notice that in the set $P_t$ we may have multiple nodes $(h^*,i^*)$ which contain $x^*$ and that for all of them we have the following sequence of inequalities holds
\begin{align}\label{eq:diam.bound}
f^*-f(x_{h^*,i^*}) \leq \ell(x^*, x_{h^*,i^*}) \leq \text{diam}(\calP_{h^*,i^*}) \leq \nu_1\rho^{h^*},
\end{align}
where the second inequality holds since $x^*\in \calP_{h^*,i^*}$.

Now we expand the inequality in Eq.~\ref{eq:uq} on both sides using the high-probability event $\calE_t$. First we have
\begin{align}
U_{h_t,i_t}(t) &= \hmu_{h_t,i_t}(t) + \nu_1\rho^{h_t} + c\sqrt{\frac{\log(1/\tilde \delta(t^+))}{T_{h_t,i_t}(t)}} \leq f(x_{h_t,i_t}) + c\sqrt{\frac{\log(1/\tilde \delta(t))}{T_{h_t,i_t}(t)}} + \nu_1\rho^{h_t} + c\sqrt{\frac{\log(1/\tilde \delta(t^+))}{T_{h_t,i_t}(t)}} \nonumber\\
&\leq f(x_{h_t,i_t}) + \nu_1\rho^{h_t} + 2c\sqrt{\frac{\log(1/\tilde \delta(t^+))}{T_{h_t,i_t}(t)}}, %= f(x_{h,i}) + \nu_1\rho^h + 2c\sqrt{\frac{\log(1/\tilde \delta(t))}{\}} %= f(x_{h,i}) + 3\nu_1\rho^h,
\label{eq:boundU.iid}
\end{align}
where the first inequality holds on $\calE$ by definition of $U$ and the second by the fact that $t^+ \geq t$ (and $\log(1/\tilde \delta(t)) \leq \log(1/\tilde \delta(t^+))$).
The same result also holds for $(h^p_t,i^p_t)$ at time $t$:
\begin{align}
U_{h^p_t,i^p_t}(t) & \leq f(x_{h^p_t,i^p_t}) + \nu_1\rho^{h^p_t} + 2c\sqrt{\frac{\log(1/\tilde \delta(t^+))}{T_{h^p_t,i^p_t}(t)}}. %= f(x_{h,i}) + \nu_1\rho^h + 2c\sqrt{\frac{\log(1/\tilde \delta(t))}{\}} %= f(x_{h,i}) + 3\nu_1\rho^h,
\label{eq:boundUP.iid}
\end{align}

We now show that for any node $(h^*,i^*)$ such that $x^*\in \calP_{h^*,i^*}$, then $U_{h^*,i^*}(t)$ is a valid upper bound  on $f^*$:
\begin{align*}
U_{h^*,i^*}(t)&=\hmu_{h^*,i^*}(t) + \nu_1\rho^h + c\sqrt{\frac{\log(1/\tilde \delta(t^+))}{T_{h^*,i^*}(t)}}\overset{(1)}{\geq}  \hmu_{h^*,i^*}(t) + \nu_1\rho^{h^*} + c\sqrt{\frac{\log(1/\tilde \delta(t) )}{T_{h^*,i^*}(t)}}\nonumber\\
& \overset{(2)}{\geq} f(x_{h^*,i^*}) + \nu_1\rho^{h^*}  \overset{(3)}{\geq} f^*,
\end{align*}
where (1) follows from the fact that $t^+\geq t$, on (2)    we rely on the fact that the event $\calE_t$ holds at time $t$ and on (3)  we use the regularity of the function w.r.t. the maximum $f^*$ from Eq.~\ref{eq:diam.bound}. If an optimal node $(h^*,i^*)$  is a leaf, then $B_{h^*,i^*}(t)=U_{h^*,i^*}(t) \geq f^*$.
In the case that $(h^*,i^*)$ is not a leaf, there always exists a leaf $(h^+,i^+)$  %not in the set $P_t\$ 
 such that $x^*\in\calP_{h^+,i^+}$ for which $(h^*,i^*)$ is its ancestor, since all the optimal nodes with $h>h^*$ are descendants of $(h^*,i^*)$. Now by propagating the bound backward from $(h^+,i^+)$ to $(h^*,i^*)$ through Eq.~\ref{eq:def.stats.upB} (see Eq.~\ref{eq:b2b}) we can show that  $B_{h^*,i^*}(t)$ is still a valid upper bound of the optimal value $f^*$. Thus for any optimal node $(h^*,i^*)$ at time $t$  under the event $\calE_t$ we have

\begin{equation*}
% \label{eq:boundUstar.iid}
 B_{h^*,i^*}(t) \geq  f^*. 
\end{equation*}

Combining  this with Eq. \ref{eq:boundU.iid}, Eq. \ref{eq:boundUP.iid} and  Eq.~\ref{eq:uq} , we obtain that on event $\calE_t$ the selected node $(h_t,i_t)$ and its parent $(h^p_t,i^p_t)$ at any time $t$ is such that 
\begin{equation}\label{eq:regret.selected.nodes}
\begin{aligned}
\Delta_{h_t,i_t} &= f^* - f(x_{h_t,i_t})\leq  \nu_1\rho^{h_t} + 2c\sqrt{\frac{\log(1/\tilde \delta(t^+))}{T_{h_t,i_t}(t)}}.
\\
\Delta_{h^p_t,i^p_t} &= f^* - f(x_{h^p_t,i^p_t})\leq  \nu_1\rho^{h^p_t} + 2c\sqrt{\frac{\log(1/\tilde \delta(t^+))}{T_{h^p_t,i^p_t}(t)}}.
\end{aligned}
\end{equation}
Furthermore, since \HCTiid only selects nodes with $T_{h,i}(t)<\tau_h(t)$ the previous expression can be further simplified as
\begin{align}\label{eq:regret.optimistic.nodes.fast}
\Delta_{h_t,i_t} \leq 3c\sqrt{\frac{\log(2/\tilde \delta(t))}{T_{h_t,i_t}(t)}},
\end{align}
where we also used that $t^+\leq 2t$ for any $t$. Although this provides a preliminary bound on the instantaneous regret of the selected nodes, we need to further refine this bound. 

In the case of parent $(h^p_t,i^p_t)$, since  $T_{h^p_t,i^p_t}(t)\geq\tau_{h^p_t}(t)$, we deduce

\begin{align}\label{eq:regret.optimistic.nodes.fastP}
\Delta_{h^p_t,i^p_t} \leq  \nu_1\rho^{h^p_t} + 2c\sqrt{\frac{\log(1/\tilde \delta(t^+))}{\tau_{h^p_t}(t)}}=3\nu_1\rho^{h^p_t},
\end{align}

This implies that every selected node $(h_t,i_t)$ has a $3\nu_1 \rho^{h_{t}-1}$-optimal parent under the event $\calE_t$.

\textbf{Step 3: Bound on the cumulative regret.} 
We first decompose $\widetilde R_n^\calE$ over different depths. Let $1\leq \barH\leq H(n)$ a constant to be chosen later, then we have
\begin{equation}
\label{eq:tildeR.hct.iid} 
\begin{aligned}
\widetilde{R}_n^{\calE} &=  \sum_{t=1}^n \Delta_{h_t,i_t}\mathbb I_{ \calE_t}  \leq \sum_{h=0}^{H(n)}\sum_{i \in \calI_h(n)} \sum_{t=1}^n \Delta_{h,i} \mathbb I_{ (h_t,i_t)=(h,i) }\mathbb I_{ \calE_t}
\\
&\overset{(1)}{\leq}\sum_{h=0}^{H(n)}\sum_{i \in \calI_h(n)} \sum_{t=1}^n3c\sqrt{\frac{\log(2/\tilde \delta(t))}{T_{h,i}(t)}}\mathbb I_{ (h_t,i_t)=(h,i) }\overset{(2)}{\leq}\sum_{h=0}^{H(n)}\sum_{i \in \calI_h(n)}  \sum_{s=1}^{T_{h,i}(n)}3c\sqrt{ \frac{ \log( 2 / \tilde \delta(   \bar t_{h,i} ) ) }{s} }
\\
&\leq \sum_{h=0}^{H(n)}\sum_{i \in \calI_h(n)}  \int_{1}^{T_{h,i}(n)}3c\sqrt{ \frac{ \log( 2 / \tilde \delta(  \bar t_{h,i} ) ) }{s} }ds
\leq\sum_{h=0}^{H(n)}\sum_{i \in \calI_h(n)}6c\sqrt{ T_{h,i}(n) \log( 2/ \tilde \delta(  \bar t_{h,i} ) )  }
\\
&=6c\underbrace{\sum_{h=0}^{\barH }\sum_{i\in\calI_h(n)}\sqrt{T_{h,i}(n)\log(2/\tilde \delta(\bar t_{h,i}))}}_{(a)}+6c\underbrace{\sum_{h=\barH+1}^{H(n) }\sum_{i\in\calI_h(n)}\sqrt{T_{h,i}(n)\log(2/\tilde \delta(\bar t_{h,i}))}}_{(b)} 
\end{aligned}
\end{equation}  
%
%\TODO{A: Mohammad, I think we should double check that all the steps used to bound (a) and (b) would be exactly the same for \HCTgamma. In particular, I'm wondering whether we should redefine the sets $\mathcal C$.}

where in (1) we rely on the definition of event $\calE_t$ and Eq.~\ref{eq:regret.optimistic.nodes.fast} and in (2) we rely on the fact that at any time step $t$ when the algorithm pulls the arm $(h,i)$, $T_{h,i}$ is incremented by 1 and that by definition of $\bar t_{h,i}$ we have that $t\leq \bar t_{h,i}$ . We now bound the two terms in the RHS of Eq.~\ref{eq:tildeR.hct.iid}. We first simplify the first term as
\begin{align}
\label{eq:bound.SampSz.iid}
(a) &= \sum_{h=0}^{H(n)}\sum_{i\in\calI_h(n)}\sqrt{T_{h,i}(n)\log(2/\tilde \delta(\bar t_{h,i}))} \leq \sum_{h=0}^{\barH }\sum_{i\in\calI_h(n)}\sqrt{\tau_h(n)\log(2/\tilde \delta(n))} \nonumber\\
&= \sum_{h=0}^{\barH } |\calI_h(n)|\sqrt{\tau_h(n)\log(2/\tilde \delta(n))},
\end{align}
where the inequality follows from $T_{h,i}(n) \leq \tau_h(n)$ and $\bar t_{h,i} \leq n$. We now need to provide a bound on the number of nodes at each depth $h$. We first notice that since $\T$ is a binary tree, the number of nodes at depth $h$ is at most twice the number of nodes at depth $h-1$ that have been expanded (i.e., the parent nodes), i.e., $|\calI_h(n)| \leq 2|\calI_{h-1}^+(n)|$.  We also recall the result of Eq. \ref{eq:regret.optimistic.nodes.fastP} which guarantees that $(h_t^p,i^p_t)$, the parent of the selected  node $(h_t,i_t)$,  is $3\nu_1 \rho^{h_t-1}$ optimal, that is, HCT never selects a node $(h_t,i_t)$ unless its parent is $3\nu_1 \rho^{h_t-1}$ optimal.
%
%In particular, we show that when a node is selected, then its parent is nearly-optimal with high probability. Let $(h^p_t,i^p_t)\in P_t$ be the parent of the node $(h_t,i_t)$ selected at time $t$.% Let $s < t$  be the time when $(h_t^p, i_t^p)$ has been selected for the last time, which corresponds to having the condition $T_{h^p_t,i^p_t}(s) \leq \tau_{h_t^p}(s)$.  
 %
%
%Thus, from Eq.~\ref{eq:regret.selected.nodes} we have that for any selected node
%\TODO{Alex: I still have a doubt about this passage. The statement here is stated at time $t$ but it is using a property from Eq.~\ref{eq:regret.selected.nodes} which holds at time $s$. In this sense, it is not enough to say that the statement holds at time $t$ using $\calE_t$. Is this a problem?}
%
%\begin{align}\label{eq:regret.optimistic.nodes.expanded.fast}
%\Delta_{h^p_t,i^p_t} =  f^* - f(x_{h^p_t,i^p_t}) \leq \nu_1\rho^{h_t^p} + 2c\sqrt{\frac{\log(1/\tilde \delta(s^+))}{T_{h^p_t,i^p_t}(s)}} \leq \nu_1\rho^{h_t^p} + 2c\sqrt{\frac{\log(1/\tilde \delta(s^+))}{\tau_{h_t^p}(s)}} \leq 3 \nu_1\rho^{h_t^p}.
%\end{align}
%
%
%This implies that every node $(h_t,i_t)$ has a $3\nu_1 \rho^{h_{t}-1}$-optimal parent under the event $\calE_t$.
 From Asm. \ref{asm:near.optimal.dim} we have that the number of $3\nu_1 \rho^{h}$-optimal nodes is bounded by the covering number $\N(3\nu_1/\nu_2\eps,l,\eps)$ with $\eps=\nu_1 \rho^{h}$. Thus we obtain the bound 
\begin{align}\label{eq:good.parent}
|\calI_h(n)| \leq 2|\calI_{h-1}^+(n)| \leq  2C( \nu_2 \rho^{(h-1)})^{-d},
\end{align}
where $d$ is the near-optimality dimension of $f$ around $x^*$. This bound combined with Eq.~\ref{eq:bound.SampSz.iid} implies that
\begin{align}\label{eq:first.tildeR.iid}
(a) &\leq\sum_{h=0}^{\barH}2  C \nu_2^{-d} \rho^{-(h-1)d} \sqrt{\tau_h(n)\log(2/\tilde \delta(n))} \leq \sum_{h=0}^{\barH} 2 C \nu_2^{-d} \rho^{-(h-1)d} \sqrt{\frac{c^2\log(1/\tilde \delta(n^+))}{\nu_1^2} \rho^{-2h}\log(2/\tilde \delta(n))} \nonumber\\
&\leq 2 C \nu_2^{-d} \frac{c\log(2/\tilde \delta(n^+))}{\nu_1}\rho^d \sum_{h=0}^{\barH} \rho^{-h(d+1)} \leq 2 C \nu_2^{-d} \frac{c\log(2/\tilde \delta(n^+))}{\nu_1}\rho^d \frac{\rho^{-\barH(d+1)}}{1-\rho}.
\end{align}
 %where in the last line we rely on  a similar argument as that of Eq.\ref{eq:Rp.bound}. 
 We now bound the second term of Eq.~\ref{eq:tildeR.hct.iid} as
\begin{equation}\label{eq:boundHtoHn.any.iid}
\begin{aligned}
(b)\overset{(1)}{\leq}  &
 \sqrt{ \sum_{h=\barH+1}^{H(n) }\sum_{i\in\calI_h(n)}\log(2/\delta(\bar t_{h,i}))}\sqrt{\sum_{h=\barH+1}^{H(n) }\sum_{i\in\calI_h(n)}T_{h,i}(n)}\overset{(2)}{\leq} \sqrt{\sum_{h=\barH+1}^{H(n) }\sum_{i\in\calI_h(n)}\log(2/\tilde \delta(\bar t_{h,i}))}\sqrt{n}
  \end{aligned}
  \end{equation}
where in (1) we make use of Cauchy-Schwarz inequality and in (2) we simply bound the total number of samples by $n$. We now focus on the summation in the first square root. We recall that we denote by $\tilde t_{h,i}$ the last time when any of the two children of node $(h,i)$ has been pulled.
Then we have the following sequence of inequalities.
 \begin{equation}
\label{eq:IhHbar1.any.iid1}
\begin{aligned}
n&=\sum_{h=0}^{H(n)}\sum_{i\in \calI_h(n)}   T_{h,i}(n) \geq\sum_{h=0}^{H(n)-1}\sum_{i\in \calI^+_h(n)}   T_{h,i}(n)\geq\sum_{h=0}^{H(n)-1}\sum_{i\in \calI^+_h(n)}   T_{h,i}(\tilde t_{h,i}) \overset{(1)}{\geq} \sum_{h=0}^{H(n)-1}\sum_{i\in \calI^+_h(n)} \tau_{h}( \tilde t_{h,i} ) \\
&
\geq\sum_{h=\barH}^{H(n)-1}\sum_{i\in \calI^+_h(n)} \tau_{h}(\tilde t_{h,i})
\geq\sum_{h=\barH}^{H(n)-1}\sum_{i\in \calI^+_h(n)} \frac{\rho^{-2h} c^2\log(1/\tilde \delta(\tilde t_{h,i}^+))}{\nu_1^2}
\\
&\geq \frac{c^2\rho^{-2\barH}}{\nu_1^2} \sum_{h=\barH}^{H(n)-1}\rho^{2(\barH-h)}\sum_{i\in \calI^+_h(n)} \log(1/\tilde \delta(\tilde t^+_{h,i}))
\overset{(2)}{\geq}\frac{c^2\rho^{-2\barH}}{\nu_1^2} \sum_{h=\barH}^{H(n)-1}\sum_{i\in \calI^+_h(n)} \log(1/\tilde \delta(\tilde t^+_{h,i}))),
\end{aligned}
\end{equation} 
where in (1) we rely on the fact that, at each time step t, \textit{HCT-iid} only selects a node when $T_{h,i}(t)\geq \tau _{h,i}(t)$ for its parent   and in (2) we used that $\rho^{2(\barH-h)}\geq1$ for all $h\geq \barH$. We notice that, by definition of $\tilde t_{h,i}$,  for any internal node $(h,i)$  $ \tilde t_{h,i} = \max_{}(\bar t_{h+1,2i-1},\bar t_{h+1,2i})$. We also notice that for any $t_1,t_2>0$ we have that $[\max(t_1,t_2)]^+=\max(t_1^+,t_2^+)$. This implies that
 \begin{equation}
\label{eq:IhHbar1.any.iid2}
\begin{aligned}
n&\geq\frac{c^2\rho^{-2\barH}}{\nu_1^2} \sum_{h=\barH}^{H(n)-1}\sum_{i\in \calI^+_h(n)} \log(1/\tilde \delta([\max(\bar t_{h+1,2i-1},\bar t_{h+1,2i})]^+))
\\
&\overset{(1)}{=}\frac{c^2\rho^{-2\barH}}{\nu_1^2} \sum_{h=\barH}^{H(n)-1}\sum_{i\in \calI^+_h(n)}\max( \log(1/\tilde \delta(\bar t_{h+1,2i-1}^+)),\log(1/\tilde \delta(\bar t_{h+1,2i-1}^+)))
\\
&\overset{(2)}{\geq}\frac{c^2\rho^{-2\barH}}{\nu_1^2} \sum_{h=\barH}^{H(n)-1}\sum_{i\in \calI^+_h(n) }\frac{\log(1/\tilde \delta(\bar t_{h+1,2i-1}^+))+\log(1/\tilde \delta(\bar t_{h+1,2i}^+))}{2}
\\
&\overset{(3)}{=}\frac{c^2\rho^{-2\barH}}{2\nu_1^2} \sum_{h'=\barH+1}^{H(n)}\sum_{i\in \calI^+_{h'-1}(n) }\log(1/\tilde \delta(\bar t_{h',2i-1}^+))+\log(1/\tilde \delta(\bar t_{h',2i}^+))\\
&\overset{(4)}{=}\frac{c^2\rho^{-2\barH}}{2\nu_1^2} \sum_{h'=\barH+1}^{H(n)}\sum_{i'\in \calI_{h'}(n) }\log(1/\tilde \delta(\bar t_{h',i'}^+)),
\end{aligned}
\end{equation}
where in $(1)$ we rely on the fact that,   for any $t>0$, $ \log(1/\tilde \delta(t))$ is an increasing function of $t$. Therefore we have that $ \log(1/\tilde \delta(\max(t_1,t_2)))=\max(\log(1/\tilde \delta(t_1)) , \log(1/\tilde \delta(t_2)) )$  for any $t_1,t_2>0$ . In (2) we   rely on the fact that the maximum of  some random variables is always larger than their average.   We introduce a new  variable  $h'=h+1$ to derive (3).  For proving  (4) we rely on the argument that,  for any $h>0$,  $\calI^+_{h}(n) $ covers all the internal nodes  at layer $h$. This implies  that the set of the children of    $\calI^+_{h}(n) $ covers  $\calI_{h+1}(n)$.  This combined with fact that the inner sum  in   (3) is essentially taken  on the set of    the children of   $\calI^+_{h'-1}(n) $ proves (4).

Inverting Eq. \ref{eq:IhHbar1.any.iid2} we have
\begin{equation}
\label{eq:ih.bound.any.iid3}
\sum_{h=\barH+1}^{H(n)}\sum_{i\in \calI_h(n)} \log(1/\tilde \delta(\bar t^{+}_{h,i}))\leq \frac{2\nu_1^2\rho^{2\barH}n}{c^2}.
\end{equation}
By plugging Eq. \ref{eq:ih.bound.any.iid3} into Eq. \ref{eq:boundHtoHn.any.iid} we deduce
\begin{equation*}
\begin{aligned}
(b)&\leq\sqrt{\sum_{h=\barH+1}^{H(n) }\sum_{i\in\calI_h}\log(2/\tilde \delta(\bar t^{+}_{h,i}))}\sqrt{n} \leq\sqrt{\sum_{h=\barH+1}^{H(n) }\sum_{i\in\calI_h}2\log(1/\tilde \delta(\bar t^{+}_{h,i}))}\sqrt{n}\\
&\leq \sqrt{  \frac{4\nu_1^2\rho^{2\barH}n}{c^2}}\sqrt{n}= \frac{2}{c}\nu_1 \rho^{\barH  }n.
\end{aligned}
\end{equation*}
This combined with Eq. \ref{eq:first.tildeR.iid} provides the following  bound on $\widetilde R_n$:
\begin{equation*}
\widetilde R_n^{\calE} \leq12\nu_1\left[\frac{Cc^2\nu_2^{-d} \rho^d \log(2/\tilde \delta(n))}{\nu_1^2(1-\rho)}\rho^{-\barH(d+1)}+\rho^{\barH}n\right].
\end{equation*}
We then choose $\barH$ to minimize the previous bound. Notably we equalize the two terms in the bound by choosing
\begin{align*}
\rho^{\barH} = \bigg(\frac{c^2 C \nu_2^{-d} \rho^d}{(1-\rho)\nu_1^2} \frac{\log(2/\tilde \delta(n))}{n}\bigg)^{\frac{1}{d+2}},
\end{align*}
which, once plugged into the previous regret bound, leads to 
\begin{align*}
\widetilde R_n^{\calE} \leq \frac{24\nu_1}{c} \bigg(\frac{c^2 C \nu_2^{-d} \rho^d}{(1-\rho)\nu_1^2}\bigg)^{\frac{1}{d+2}} \big(\log(2/\tilde \delta(n))\big)^{\frac{1}{d+2}} n^{\frac{d+1}{d+2}}.
\end{align*}
Using the values of $\tilde\delta(t)$ and $c$ defined in Lemma~\ref{lem:high.prob}, the previous expression becomes
\begin{align*}
\widetilde R_n^{\calE} \leq 3 \bigg(\frac{2^{2(d+3)} \nu_1^{2(d+1)} C \nu_2^{-d} \rho^d}{(1-\rho)^{d/2+3}}\bigg)^{\frac{1}{d+2}} \bigg(\log\Big(\frac{2n}{\delta} \sqrt[8]{\frac{3\nu_1}{\rho}}\Big)\bigg)^{\frac{1}{d+2}} n^{\frac{d+1}{d+2}}.
\end{align*}
This combined with  the regret bound of Eq. \ref{eq:mart.bound.iid}  and the result of Lem.~\ref{lem:failing.bound} and a union bound on all $n\in\{1,2,3,\dots\}$ proves the final result with a probability at least $1-\delta$. 

\end{proof}

%%%%%%%%%%%%%%%%%%%%%%%%%%%%%%%%%%%%%%%%%%%%%%%%%%%%%%%%%%%%%%%%%%%%%%%%%%%%%%%
%% Introduction
%%%%%%%%%%%%%%%%%%%%%%%%%%%%%%%%%%%%%%%%%%%%%%%%%%%%%%%%%%%%%%%%%%%%%%%%%%%%%%%

\section{Correlated Bandit feedback}

We begin the analysis of  \HCTgamma by proving some useful concentration inequalities for non-iid random variables under the mixing assumptions of Sect. \ref{s:preliminaries}.
%%%%%%%%%%%%%%%%%%%%%%%%%%%%%%%%%%%%%%%%%%%%%%%%%%%%%%%%%%%%%%%%%%%%%%%%%%%%%%%
%% Introduction
%%%%%%%%%%%%%%%%%%%%%%%%%%%%%%%%%%%%%%%%%%%%%%%%%%%%%%%%%%%%%%%%%%%%%%%%%%%%%%%

\subsection{Concentration Inequality for non-iid Episodic Random Variables}

In this section we extend the result in~\citep{AzarLB13a} and we derive a concentration inequality for averages of non-iid random variables grouped in episodes. In fact, given the structure of the \HCTgamma algorithm, the rewards observed from an arm $x$ are not necessarily consecutive but they are obtained over multiple episodes. This result is of independent interest, thus we first report it in its general form and we later apply it to \HCTgamma.

In \HCTgamma, once an arm is selected, it is pulled for a number of consecutive steps and many steps may pass before it is selected again. As a result, the rewards observed from one arm are obtained through a series of episodes. Given a fixed horizon $n$, let $K_n(x)$ be the total number of episodes when arm $x$ has been selected, we denote by $t_k(x)$, with $k=1,\ldots,K_n(x)$, the step when $k$-th episode of arm $x$ has started and by $v_k(x)$ the length of episode $k$. Finally, $T_n(x) = \sum_{k}^{K_n(x)} v_k(x)$ is the total number of samples from arm $x$. The objective is to study the concentration of the empirical mean built using all the samples
\begin{align*}
\hmu_n(x) = \frac{1}{T_n(x)} \sum_{k=1}^{K_n(x)} \sum_{t=t_k(x)}^{t_k(x)+v_k(x)} r_t(x),
\end{align*}
towards the mean-reward $f(x)$ of the arm. 
In order to simplify the notation, in the following we drop the dependency from $n$ and $x$ and we use $K$, $t_k$, and $v_k$. We first introduce two quantities. For any $t=1,\ldots,n$ and for any $k=1,\ldots,K$, we define
\begin{align*}
M_t^k(x) = \E\Big[\sum_{t'=t_k}^{t_k+v_k} r_{t'} \big| \F_{t}\Big],
\end{align*}
as the expectation of the sum of rewards within episode $k$, conditioned on the filtration $\F_t$ up to time $t$ (see definition in Section~\ref{s:preliminaries}),\footnote{Notice that the index $t$ of the filtration can be before, within, or after the $k$-th episode.} and the residual
\begin{align*}
\eps_t^k(x) = M_t^k(x) - M_{t-1}^k(x).
\end{align*}
We prove the following.

\begin{lemma}\label{lem:martingale.residual}
For any $x\in\X$, $k=1,\ldots,K$, and $t=1,\ldots,n$, $\eps_t^k(x)$ is a bounded martingale sequence difference, i.e., $\eps_t^k(x) \leq 2\Gamma+1$ and $\E[\eps_t^k(x) | \F_{t-1}] = 0$.
\end{lemma}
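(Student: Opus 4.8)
The martingale–difference property needs no case distinction. Since $\F_{t-1}\subseteq\F_{t}$, the tower rule gives
\begin{equation*}
\E\big[M_t^k(x)\mid\F_{t-1}\big]
=\E\Big[\,\E\big[\textstyle\sum_{t'=t_k}^{t_k+v_k}r_{t'}\mid\F_t\big]\mid\F_{t-1}\Big]
=\E\big[\textstyle\sum_{t'=t_k}^{t_k+v_k}r_{t'}\mid\F_{t-1}\big]
=M_{t-1}^k(x),
\end{equation*}
hence $\E[\eps_t^k(x)\mid\F_{t-1}]=0$. (Here one only has to observe that the index set of the sum is adapted: $\{t_k=s\}$ is fixed by the tree traversal at step $s$, hence $\F_{s-1}$-measurable, and the episode length $v_k$ is determined by the doubling rule at the start of the episode, hence $\F_{t_k-1}$-measurable.) For the bound $|\eps_t^k(x)|\le 2\Gamma+1$ I would split according to the position of $t$ relative to the episode interval $[t_k,t_k+v_k]$.

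\emph{Case $t>t_k+v_k$.} The sum $\sum_{t'=t_k}^{t_k+v_k}r_{t'}$ is $\F_{t_k+v_k}$-measurable, hence $\F_{t-1}$-measurable, so $M_t^k(x)=M_{t-1}^k(x)$ and $\eps_t^k(x)=0$.

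\emph{Case $t_k\le t\le t_k+v_k$ (the main case).} The rewards $r_{t_k},\dots,r_{t-1}$ are common to $M_t^k(x)$ and $M_{t-1}^k(x)$ and cancel, $r_t$ is $\F_t$-measurable, and the remaining (future) part of the episode is handled by Assumption~\ref{asm:mixing}, giving
\begin{equation*}
\eps_t^k(x)=\big(r_t-f(x)\big)+\theta_t-\theta_{t-1},
\end{equation*}
where $\theta_{t-1}:=\E\big[\sum_{t'=t}^{t_k+v_k}(r_{t'}-f(x))\mid\F_{t-1}\big]$ and $\theta_t:=\E\big[\sum_{t'=t+1}^{t_k+v_k}(r_{t'}-f(x))\mid\F_t\big]$, with the convention that an empty sum is $0$ (this covers $t=t_k+v_k$). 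Since from time $t$ (resp.\ $t+1$) up to $t_k+v_k$ arm $x$ is pulled consecutively inside episode $k$, Assumption~\ref{asm:mixing} yields $|\theta_{t-1}|\le\Gamma$ and $|\theta_t|\le\Gamma$; together with $r_t,f(x)\in[0,1]$ this gives $|\eps_t^k(x)|\le 1+2\Gamma$.

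\emph{Case $t<t_k$.} The whole episode lies in the future of $\F_t$. I would condition through $\F_{t_k-1}$: Assumption~\ref{asm:mixing} gives $\E\big[\sum_{t'=t_k}^{t_k+v_k}r_{t'}\mid\F_{t_k-1}\big]=(v_k+1)f(x)+\theta$ with $|\theta|\le\Gamma$ and $\theta$ being $\F_{t_k-1}$-measurable; since $v_k$ is fixed by the doubling schedule, the term $(v_k+1)f(x)$ is common to $M_t^k(x)$ and $M_{t-1}^k(x)$ and cancels, so $\eps_t^k(x)=\E[\theta\mid\F_t]-\E[\theta\mid\F_{t-1}]$ and $|\eps_t^k(x)|\le 2\Gamma$. (Alternatively one may note that this case never enters the concentration argument of the next lemma, which uses $\eps_t^k(x)$ only for $t\in[t_k,t_k+v_k]$ through the telescoping identity $\sum_{t=t_k}^{t_k+v_k}\eps_t^k(x)=\sum_{t'=t_k}^{t_k+v_k}r_{t'}-(v_k+1)f(x)-\theta$.)

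\textbf{Main obstacle.} The only delicate part is the measure-theoretic bookkeeping around the \emph{random} episode boundaries $t_k$ and $t_k+v_k$: one must verify that $t_k$ is predictable, that $v_k$ is determined by the (deterministic) doubling rule at the start of the episode — so that conditioning on $\F_{t_k-1}$ pins down both the index set of the sum and the multiplicity $(v_k+1)$ — and that each invocation of Assumption~\ref{asm:mixing} is made on a configuration consistent with ``arm $x$ is pulled at every step from $s$ to $S$''. Once these adaptedness facts are in place, each of the three cases is a one-line computation.
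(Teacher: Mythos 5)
Your proof is correct and follows essentially the same route as the paper's: the tower rule gives $\E[\eps_t^k(x)\mid\F_{t-1}]=0$, and in the main case your decomposition $\eps_t^k(x)=(r_t-f(x))+\theta_t-\theta_{t-1}$ with each $\theta$ bounded by $\Gamma$ via Assumption~\ref{asm:mixing} is exactly the paper's computation. The extra case analysis for $t$ outside the episode (which the paper only acknowledges in a footnote and whose proof algebra implicitly assumes $t_k\le t\le t_k+v_k$) is a careful refinement, and your remark that only the within-episode indices enter the subsequent telescoping argument is the right way to dispose of those cases.
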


\begin{proof}
Given the definition of $M_t^k(x)$ we have that
\begin{align*}
\eps_t^k(x) &= M_t^k(x) - M_{t-1}^k(x) = \E\Big[\sum_{t'=t_k}^{t_k+v_k} r_{t'} \big| \F_{t}\Big] - \E\Big[\sum_{t'=t_k}^{t_k+v_k} r_{t'} \big| \F_{t-1}\Big] \\
&=\sum_{t'=t_k}^t r_{t'} + \E\Big[\sum_{t'=t+1}^{t_k+v_k} r_{t'} \big| \F_{t}\Big] - \sum_{t'=t_k}^{t-1} r_{t'} - \E\Big[\sum_{t'=t}^{t_k+v_k} r_{t'} \big| \F_{t-1}\Big]\\
&=r_t + \E\Big[\sum_{t'=t+1}^{t_k+v_k} r_{t'} \big| \F_{t}\Big] - \E\Big[\sum_{t'=t}^{t_k+v_k} r_{t'} \big| \F_{t-1}\Big]\\
&=r_t - f(x) + \E\Big[\sum_{t'=t+1}^{t_k+v_k} r_{t'} \big| \F_{t}\Big] - (t_k+v_k-t)f(x) + (t_k+v_k-t+1)f(x) - \E\Big[\sum_{t'=t}^{t_k+v_k} r_{t'} \big| \F_{t-1}\Big]\\
&\leq 1 + \Gamma + \Gamma.
\end{align*}
Since the previous inequality holds both ways, we obtain that $|\eps_t^k(x)| \leq 2\Gamma +1$. Furthermore, we have that
\begin{align*}
\E\big[\eps_t^k(x)|\F_{t-1}] &= \E\big[M_t^k(x) - M_{t-1}^k(x)|\F_{t-1}\big] \\
&=\E\bigg[r_t + \E\Big[\sum_{t'=t+1}^{t_k+v_k} r_{t'} \big| \F_{t}\Big] \bigg| \F_{t-1}\bigg] - \E\Big[\sum_{t'=t}^{t_k+v_k} r_{t'} \big| \F_{t-1}\Big] = 0.
\end{align*}
\end{proof}

We can now proceed to derive a high-probability concentration inequality for the average reward of each arm $x$.

\begin{lemma}
\label{lem:tail.weak.dep}
For any $x\in\X$ pulled $K(x)$ episodes, each of length $v_k(x)$, for a total number of $T(x)$ samples, we have that
\begin{align}
\bigg| \frac{1}{T(x)} \sum_{k=1}^{K(x)} \sum_{t=t_k}^{t_k+v_k} r_t - f(x)\bigg| \leq (2\Gamma+1)\sqrt{\frac{2\log(2/\delta)}{T(x)}} + \frac{K(x)\Gamma}{T(x)},
\end{align}
with probability $1-\delta$.
\end{lemma}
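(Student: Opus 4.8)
The plan is to decompose the cumulative reward $\sum_{k=1}^{K(x)}\sum_{t=t_k}^{t_k+v_k} r_t$ into a term that equals $T(x)f(x)$ up to a bias of order $K(x)\Gamma$, plus a martingale sum controllable by Azuma--Hoeffding using the boundedness of the residuals $\eps_t^k(x)$ from Lemma~\ref{lem:martingale.residual}. The starting observation is that, since $M_t^k(x)=\E[\sum_{t'=t_k}^{t_k+v_k} r_{t'}\,|\,\F_t]$, conditioning on the end of episode $k$ makes the episodic sum fully measurable, so $\sum_{t'=t_k}^{t_k+v_k} r_{t'}=M_{t_k+v_k}^k(x)$, whereas conditioning on the step right before the episode gives $M_{t_k-1}^k(x)=\E[\sum_{t'=t_k}^{t_k+v_k} r_{t'}\,|\,\F_{t_k-1}]$. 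Telescoping the difference through the residuals $\eps_t^k(x)=M_t^k(x)-M_{t-1}^k(x)$ yields
\begin{align*}
\sum_{t'=t_k}^{t_k+v_k} r_{t'} = M_{t_k-1}^k(x) + \sum_{t=t_k}^{t_k+v_k} \eps_t^k(x),
\end{align*}
and summing over $k$ splits the cumulative reward into a bias part $\sum_k M_{t_k-1}^k(x)$ and a martingale part $\sum_k \sum_{t=t_k}^{t_k+v_k}\eps_t^k(x)$.

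For the bias term, I would invoke Assumption~\ref{asm:mixing} with $s=t_k$ and $S=t_k+v_k$, which gives $|M_{t_k-1}^k(x) - (v_k+1)f(x)| = |\E[\sum_{t'=t_k}^{t_k+v_k}(r_{t'}(x)-f(x))\,|\,\F_{t_k-1}]| \le \Gamma$. Summing over the $K(x)$ episodes and using $\sum_k (v_k+1) = T(x)$ (after aligning the cosmetic off-by-one in how episode lengths are indexed with the statement), we obtain $|\sum_k M_{t_k-1}^k(x) - T(x)f(x)| \le K(x)\Gamma$, which is the source of the $K(x)\Gamma/T(x)$ term after normalization.

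For the martingale part, the key point — and the step I expect to require the most care — is to recognize the doubly-indexed sum $\sum_k\sum_{t=t_k}^{t_k+v_k}\eps_t^k(x)$ as a \emph{single} martingale difference sequence even though the episodes of arm $x$ are interleaved with pulls of other arms. Enumerating the time steps $s_1<s_2<\cdots<s_{T(x)}$ at which $x$ is actually pulled and setting $Y_j := \eps_{s_j}^{k(s_j)}(x)$ with $k(s_j)$ the episode containing $s_j$, each $Y_j$ is $\F_{s_j}$-measurable and, by Lemma~\ref{lem:martingale.residual}, $\E[Y_j\,|\,\F_{s_j-1}]=0$ with $|Y_j|\le 2\Gamma+1$; since $\F_{s_{j-1}}\subseteq\F_{s_j-1}$, the tower property gives $\E[Y_j\,|\,\F_{s_{j-1}}]=0$, so $\{Y_j\}_{j=1}^{T(x)}$ is a bounded martingale difference sequence with respect to $\{\F_{s_j}\}_j$. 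Azuma--Hoeffding then bounds $|\sum_{j=1}^{T(x)} Y_j|$ by $(2\Gamma+1)\sqrt{2T(x)\log(2/\delta)}$ with probability $1-\delta$.

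Finally, adding the two pieces, applying the triangle inequality, and dividing through by $T(x)$ yields
\begin{align*}
\Big| \frac{1}{T(x)} \sum_{k=1}^{K(x)} \sum_{t=t_k}^{t_k+v_k} r_t - f(x)\Big| \le (2\Gamma+1)\sqrt{\frac{2\log(2/\delta)}{T(x)}} + \frac{K(x)\Gamma}{T(x)},
\end{align*}
as claimed. The only remaining obstacle beyond the cross-episode martingale argument is purely bookkeeping (the episode-length indexing and the constant $2$ inside the logarithm from the two-sided tail bound), which does not affect the leading order.
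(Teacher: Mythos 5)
Your proof is correct and follows essentially the same route as the paper's: telescope the episodic reward sum through the residuals $\eps_t^k$, bound the anchor term via Assumption~\ref{asm:mixing} to get the $K(x)\Gamma$ bias, and apply Azuma--Hoeffding to the bounded martingale difference sequence from Lemma~\ref{lem:martingale.residual}. The only differences are cosmetic refinements (you anchor at $M_{t_k-1}^k$ rather than $M_{t_k}^k$, and you spell out the re-indexing of the interleaved residuals as a single martingale difference sequence, which the paper leaves implicit).
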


\begin{proof}
We first notice that for any episode $k$\footnote{We drop the dependency of $M$ on $x$.}
\begin{align*}
\sum_{t=t_k}^{t_k+v_k} r_t = M_{t_k+v_k}^k,
\end{align*}
since $M_{t_k+v_k}^k = \E\Big[\sum_{t'=t_k}^{t_k+v_k} r_{t'} \big| \F_{t_k+v_k}\Big]$ and the filtration completely determines all the rewards. We can further develop the previous expression using a telescopic expansion which allows us to rewrite the sum of the rewards as a sum of residuals $\eps_t^k$ as
\begin{align*}
\sum_{t=t_k}^{t_k+v_k} r_t &= M_{t_k+v_k}^k = M_{t_k+v_k}^k - M_{t_k+v_k-1}^k + M_{t_k+v_k-1}^k - M_{t_k+v_k-2}^k + M_{t_k+v_k-2}^k + \cdots - M_{t_k}^k + M_{t_k}^k\\
&=\eps_{t_k+v_k}^k + \eps_{t_k+v_k-1}^k + \cdots + \eps_{t_k+1}^k + M_{t_k}^k = \sum_{t=t_k+1}^{t_k+v_k} \eps_t^k + M_{t_k}^k.
\end{align*}
Thus we can proceed by bounding
\begin{align*}
\bigg| \sum_{k=1}^{K(x)} \Big(\sum_{t=t_k}^{t_k+v_k} r_t -  v_k f(x)\Big)\bigg| &\leq \bigg|\sum_{k=1}^{K(x)}\sum_{t=t_k+1}^{t_k+v_k} \eps_t^k\bigg| + \bigg|\sum_{k=1}^{K(x)} \Big(M_{t_k}^k - v_k f(x)\Big)\bigg|\\
&\leq \bigg|\sum_{k=1}^{K(x)}\sum_{t=t_k+1}^{t_k+v_k} \eps_t^k\bigg| + K(x)\Gamma.
\end{align*}
By Lem.~\ref{lem:martingale.residual} $\eps_t^k$ is a bounded martingale sequence difference, thus we can directly apply the Azuma's inequality and obtain that
\begin{align*}
\bigg|\sum_{k=1}^{K(x)}\sum_{t=t_k+1}^{t_k+v_k} \eps_t^k\bigg| \leq (2\Gamma+1)\sqrt{2T(x)\log(2/\delta)}.
\end{align*}
Grouping all the terms together and dividing by $T(x)$ leads to the statement.
\end{proof}

%%%%%%%%%%%%%%%%%%%%%%%%%%%%%%%%%%%%%%%%%%%%%%%%%%%%%%%%%%%%%%%%%%%%%%%%%%%%%%%
%% Introduction
%%%%%%%%%%%%%%%%%%%%%%%%%%%%%%%%%%%%%%%%%%%%%%%%%%%%%%%%%%%%%%%%%%%%%%%%%%%%%%%

\subsection{Proof of Thm. \ref{thm:hct.corr} }
\label{app:proof2}

The notation needed in this section is the same as in Section~\ref{app:proof1}. We only need to restate the notation about the episodes from previous section to \HCTgamma. We denote by $K_{h,i}(n)$ the number of episodes for node $(h,i)$ up to time $n$, by $t_{h,i}(k)$ the step when episode $k$ is started, and by $v_{h,i}(k)$ the number of steps of episode $k$.

%We only need to distinguish all the steps when a node is pulled and all the steps at the beginning of the episodes when a node is actually selected. In fact, besides $\mathcal C_{h,i}$, which denotes the set of all the steps when $(h_t,i_t)=(h,i)$ (i.e., the node is pulled), we also introduce the set $\D_{h,i} = \{t: \exists k=1,\ldots,K_{h,i}(n), t=t_k\}$, where $K_{h,i}(b)$ is the number of episodes when $(h,i)$ is selected up to time $n$. The set $\D_{h,i}$ only contains the time instants when $(h,i)$ is selected.

We first notice that Lemma~\ref{lem:bound.depth.anytime} holds unchanged also for \HCTgamma, thus bounding the maximum depth of an \HCT tree to $ H(n) \leq H_{\max}(n) = \frac{1}{1-\rho}\log\Big(\frac{n\nu_1^2}{2(c\rho)^2}\Big)$. We begin the main analysis by applying the result of Lem. \ref{lem:tail.weak.dep} to bound the estimation error of $\hmu_{h,i}(t)$ at each time step $t$.

\begin{lemma}
\label{eq:EgodtoHoff}
Under Assumptions~\ref{asm:ergod} and \ref{asm:mixing}, for any fixed node $(h,i)$ and step $t$, we have that
\begin{align*}
| \hmu_{h,i}(t) - f(x_{h,i}) | \leq (3\Gamma+1)\sqrt{2\frac{\log(5/\delta)}{T_{h,i}(t)}}+\frac{\Gamma\log(t)}{T_{h,i}(t)}.
\end{align*}
with probability $1-\delta$. Furthermore, the previous expression can be conveniently restated for any $0 < \eps \leq 1$ as

\begin{equation*}
\mathbb P( | \hmu_{h,i}(t) - f(x_{h,i}) | >  \epsilon)\leq 5t^{1/3}\exp\left(-\frac{T_{h,i}(t) \eps^2}{2(3\Gamma+1)^2}\right)
\end{equation*}
\end{lemma}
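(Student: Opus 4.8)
The plan is to obtain this lemma as an essentially immediate consequence of the general episodic concentration inequality of Lemma~\ref{lem:tail.weak.dep}, specialized to the rewards collected from a single node. First I would invoke Lemma~\ref{lem:tail.weak.dep} with $x=x_{h,i}$: by construction of \HCTgamma, $\hmu_{h,i}(t)$ is precisely the empirical average of the rewards observed over the $K_{h,i}(t)$ episodes in which $(h,i)$ has been selected up to time $t$, totalling $T_{h,i}(t)$ pulls, so the lemma gives, with probability $1-\delta'$,
\[
|\hmu_{h,i}(t)-f(x_{h,i})| \le (2\Gamma+1)\sqrt{\frac{2\log(2/\delta')}{T_{h,i}(t)}} + \frac{K_{h,i}(t)\,\Gamma}{T_{h,i}(t)}.
\]

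The delicate point is that Lemma~\ref{lem:tail.weak.dep} is phrased for a \emph{fixed} episode structure, whereas in \HCTgamma the quantities $K_{h,i}(t)$, $T_{h,i}(t)$ and the episode lengths $v_{h,i}(k)$ are all random and data-dependent. I would handle this by running the Azuma step inside the proof of Lemma~\ref{lem:tail.weak.dep} on the genuinely $\F$-adapted martingale-difference sequence $D_{t'}=\eps_{t'}^{k(t')}\,\I_{\{(h_{t'},i_{t'})=(h,i)\}}$ indexed by the true clock $t'=1,\dots,t$, which vanishes on steps where $(h,i)$ is not pulled, is bounded by $2\Gamma+1$ (Lemma~\ref{lem:martingale.residual}), and whose predictable quadratic variation is almost surely at most $(2\Gamma+1)^2 T_{h,i}(t)$; a maximal/peeling argument (or a Freedman-type inequality) over the at most $O(\log t)$ possible values of $T_{h,i}(t)$ and $K_{h,i}(t)$ then yields the displayed bound for the realized counts, at the price of the mild worsening of constants $2\Gamma+1$ to $3\Gamma+1$ and $\log(2/\delta)$ to $\log(5/\delta)$.

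Next I would bound $K_{h,i}(t)$, which is also the second claim of the lemma: since \HCTgamma doubles the visit count of a node each time it completes an episode, at most $\log_2 T_{h,i}(t)$ episodes of $(h,i)$ can terminate normally, while every other episode must have been cut short by a refresh phase ($t\ge t^+$), and these occur only at $t=1,2,4,\dots$, so there are at most $\lfloor\log_2 t\rfloor+1$ of them; hence $K_{h,i}(t)\le \log_2(4T_{h,i}(t))+\log_2 t$. Substituting this into the displayed bound and using $T_{h,i}(t)\le t$ to absorb the $\Gamma\log_2(4T_{h,i}(t))/T_{h,i}(t)$ contribution into a term of the form $\Gamma\log(t)/T_{h,i}(t)$ (adjusting constants) gives the first displayed inequality of the lemma. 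The ``restated'' probabilistic form then follows by a standard inversion: setting $\epsilon$ equal to the right-hand side and solving for $\delta$ gives $\log(5/\delta)=T_{h,i}(t)\big(\epsilon-\Gamma\log(t)/T_{h,i}(t)\big)^2/\big(2(3\Gamma+1)^2\big)$; expanding the square, discarding the nonnegative $(\Gamma\log(t)/T_{h,i}(t))^2$ term and bounding the cross term using $\epsilon\le 1$ and $\Gamma/(3\Gamma+1)^2\le\tfrac1{12}$ shows the cross term contributes at most $\tfrac13\log t$ to the exponent, which is exactly the factor $t^{1/3}$ in front of the exponential.

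I expect the main obstacle to be the first paragraph: making the use of Lemma~\ref{lem:tail.weak.dep} rigorous despite the random, self-determined episode and sample counts while still keeping $T_{h,i}(t)$ --- not the much larger horizon $t$ --- in the exponent. It is precisely this maneuver (peeling over the possible counts, or a Freedman bound exploiting the almost-sure bound on the quadratic variation) that is responsible for the extra constants and for the $5t^{1/3}$ slack appearing in the statement.
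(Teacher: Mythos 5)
Your skeleton is the same as the paper's: apply the episodic concentration inequality of Lemma~\ref{lem:tail.weak.dep} to $x_{h,i}$, bound the number of episodes by $K_{h,i}(t)\le\log_2(4T_{h,i}(t))+\log_2(t)$ using the doubling rule plus the fact that only the $O(\log_2 t)$ refresh times $t=t^+$ can interrupt an episode before the count doubles, substitute, and then invert the resulting deviation bound to get the $5t^{1/3}$ form (the cross-term estimate via $\Gamma/(3\Gamma+1)^2\le 1/12$ is exactly the paper's computation, which in fact yields $t^{1/6}$). Two points of your bookkeeping, however, do not match how the stated constants actually arise. First, the paper absorbs the term $\Gamma\log_2(4T_{h,i}(t))/T_{h,i}(t)$ into the Azuma term via $\log_2(4T_{h,i}(t))\le 2\sqrt{T_{h,i}(t)}$; this is precisely what turns $(2\Gamma+1)$ into $(3\Gamma+1)$ and $\log(2/\delta)$ into $\log(5/\delta)$. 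Your plan instead folds it into a term ``of the form $\Gamma\log(t)/T_{h,i}(t)$ (adjusting constants)'', which cannot reproduce the stated inequality, whose $\log(t)/T_{h,i}(t)$ coefficient is exactly $\Gamma$; and you attribute the enlarged constants to a peeling/Freedman step that the paper never performs.

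Second, the peeling step itself, as described, is shaky: $T_{h,i}(t)$ can take up to $t$ distinct values (episodes are truncated at refresh times, so the counts are not confined to a dyadic grid), not $O(\log t)$, and any honest union/peeling over the possible counts multiplies the failure probability by a factor you do not track, so it cannot be hidden inside the ``mild worsening of constants''. The paper's resolution is different: Lemma~\ref{lem:tail.weak.dep} is applied directly to the realized episode structure (the residuals $\eps^k_t$ remain a bounded martingale difference sequence because the algorithm's choices are $\F_{t-1}$-measurable), and the randomness of the sample count is paid for later, in Lemma~\ref{lem:high.prob.gamma}, where the high-probability event explicitly takes a union bound over $T_{h,i}(t)=1,\dots,t$ (a factor $t$, absorbed by the choice of $c$ and $\tilde\delta$). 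So the concern you flag is legitimate, but the place where it is discharged is the high-probability-event lemma, not this one; as written, your version would deliver a bound of the right shape but with constants that do not match the statement.
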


\begin{proof}
As a direct consequence of Lem. \ref{lem:tail.weak.dep} we have w.p. $1-\delta$,

\begin{equation*}
| \hmu_{h,i}(t) - f(x_{h,i}) | \leq (2\Gamma+1)\sqrt{\frac{2\log(2/\delta)}{T_{h,i}(t)}}+\frac{K_{h,i}(t)\Gamma}{T_{h,i}(t)},
\end{equation*}

where $K_{h,i}(t)$ is the number of episodes  in which we pull arm $x_{h,i}$. At each episode in which $x_{h,i}$ is selected, its number of pulls $T_{h,i}$ is doubled w.r.t. the previous episode, except for those episodes where the current time $s$ becomes larger than $s^+$, which triggers the termination of the episode. However since $s^+$ doubles whenever $s$ becomes larger than $s^+$, the total number of times when episodes are interrupted because of $s\geq s^+$ can be at maximum  $\log_2(t)$ withing a time horizon of $t$. This means that the total number of times an episode finishes without doubling $T_{h,i}(t)$ is bounded by $\log_2(t)$. Thus we have
\begin{equation*}
\begin{aligned}
T_{h,i}(t)&\geq\sum_{k=1}^{ K_{h,i}(t)-\log_2(t)-1}2^{k-1} \geq2^{K_{h,i}(t)-\log_2(t)-2},
\end{aligned}
\end{equation*}
where in the second inequality we simply keep the last term of the summation. Inverting the previous inequality we obtain that
\begin{equation*}
\begin{aligned}
K_{h,i}(t)\leq \log_2(4T_{h,i}(t))+\log_2(t),
\end{aligned}
\end{equation*}
which bounds the number of episodes w.r.t. the number of pulls and the time horizon $t$.
Combining this result with the high probability bound of Lem. \ref{lem:tail.weak.dep}, we obtain
\begin{equation*}
\begin{aligned}
| \hmu_{h,i}(t) - f(x_{h,i}) | &\leq (2\Gamma+1)\sqrt{\frac{2\log(2/\delta)}{T_{h,i}(t)}}+\Gamma\frac{\log_2(4T_{h,i}(t))}{T_{h,i}(t)}+\Gamma\frac{\log(t)}{T_{h,i}(t)},
\end{aligned}
\end{equation*}
with probability $1-\delta$. The statement of the Lemma is obtained by further simplifying the second term in the right hand side with the objective of achieving a more homogeneous expression. In particular, we have that
\begin{equation*}
\begin{aligned}
\log_2(4T_{h,i}(t)) = 2\log_2(2\sqrt{T_{h,i}(t)}) = 2(\log_2(\sqrt{T_{h,i}(t)})+1) \leq 2\sqrt{T_{h,i}(t)},
\end{aligned}
\end{equation*}
and
\begin{equation*}
\begin{aligned}
| \hmu_{h,i}(t) - f(x_{h,i}) | &\leq (2\Gamma+1)\sqrt{\frac{2\log(2/\delta)}{T_{h,i}(t)}}+\frac{2\Gamma\sqrt{T_{h,i}(t)}}{T_{h,i}(t)}+\frac{\Gamma\log(t)}{T_{h,i}(t)}\\
&\leq (3\Gamma+1)\sqrt{\frac{2\log(5/\delta)}{T_{h,i}(t)}}+\frac{\Gamma\log(t)}{T_{h,i}(t)}.
\end{aligned}
\end{equation*}
To prove the second statement we choose $\eps:=(3\Gamma+1)\sqrt{\frac{2\log(5/\delta)}{T_{h,i}(t)}}+\frac{\Gamma\log(t)}{T_{h,i}(t)}$ and we solve the previous expression w.r.t. $\delta$:

\begin{equation*}
\delta=5\exp\left[-\frac{T_{h,i}(t) ( \eps- \Gamma\log(t)/T_{h,i}(t))^2}{2(3\Gamma+1)^2}\right].
\end{equation*}

 The following sequence of inequalities then follows

\begin{equation*}
\begin{aligned}
\mathbb P( | \hmu_{h,i}(t) - f(x_{h,i}) | > \eps) &\leq \delta=5\exp\left[-\frac{T_{h,i}(t) ( \eps- \Gamma\log(t)/T_{h,i}(t))^2}{2(3\Gamma+1)^2}\right]\leq 5\exp\left[-\frac{T_{h,i}(t) ( \eps^2- 2\eps\Gamma\log(t)/T_{h,i}(t))}{2(3\Gamma+1)^2}\right]
\\
&\leq 5\exp\left[-\frac{T_{h,i}(t)  (\eps^2- 2\Gamma\log(t)/T_{h,i}(t))}{2(3\Gamma+1)^2}\right]= 5\exp\left[-\frac{T_{h,i}(t)\eps^2}{(3\Gamma+1)^2}+ \frac{2\Gamma\log(t)}{2(3\Gamma+1)^2}\right]
\\
&\leq5\exp\left[-\frac{T_{h,i}(t)\eps^2}{(3\Gamma+1)^2}+ \frac{2\Gamma\log(t)}{12\Gamma}\right]
 =5\exp\left[-\frac{T_{h,i}(t)\eps^2}{2(3\Gamma+1)^2}+\log(t^{1/6})\right],
\end{aligned}
\end{equation*}

which concludes the proof.
\end{proof}

The result of Lem. \ref{eq:EgodtoHoff} facilitates the adaption of the previous results of iid case to the case of correlated rewards, since this bound is similar to  those of standard tail's inequality such as Hoeffding and Azuma's inequality. Based on this result we can extend the results of previous section to the case of dependent arms.

We now introduce the  high probability event  $\calE_{t,n}$ under which the mean reward for all the selected nodes in the interval  $[t,n]$ is within a confidence interval of the empirical estimates at every time step in the interval. The event $\calE_{t,n}$ is needed to concentrate the sum of obtained rewards  around the sum of their corresponding arm means. Note that unlike the previous theorem where we could make use of a simple martingale argument to concentrate the rewards around their means, here the rewards are not unbiased samples of the arm means. Therefore, we need  a more advanced  technique than the Azuma's inequality  for concentration of measure.

\begin{lemma}[High-probability event]\label{lem:high.prob.gamma}
We define the set of all the possible nodes in trees of maximum depth $H_{\max}(t)$ as
\begin{align*}
\L_t = \bigcup_{\T: \text{Depth}(\T)\leq H_{\max}(t)} \text{Nodes}(\T).
\end{align*}
We introduce the event
\begin{align*}
\Omega_{t}= \bigg\{\forall (h,i)\in\L_{t}, \forall T_{h,i}(t)=1,\dots,t : \big| \hmu_{h,i}(t) - f(x_{h,i}) \big| \leq c\sqrt{\frac{\log( 1/\tilde \delta(t))}{T_{h,i}(t)}}\bigg\},
\end{align*}
where $x_{h,i}\in\calP_{h,i}$ is the arm corresponding to node $(h,i)$, and the event $\calE_{t,n}= \bigcap_{s=t}^{n} \Omega_{s}$.
If 
\begin{align*}
c=6(3\Gamma+1)\sqrt{\frac{1}{1-\rho}} \quad \text{ and } \quad \tilde \delta(t)=\frac{\delta}{t}\sqrt[9]{\frac{\rho}{4\nu_1}},
\end{align*}
then for any fixed $t$, the event $\Omega_t$ holds with probability $1-\delta/t^7$ and the joint event $\calE_{t,n}$ holds with probability at least $1 - \delta/(6t^6)$.
\end{lemma}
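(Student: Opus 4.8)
The plan is to follow the template of the iid event bound in Lemma~\ref{lem:high.prob}, replacing the Chernoff--Hoeffding step by the episodic concentration inequality of Lemma~\ref{eq:EgodtoHoff}. Fix $t$. First I would bound the complement of $\Omega_t$ by a union bound over the finite set $\L_t$ of candidate nodes and over the admissible values $s=1,\dots,t$ of the counter $T_{h,i}(t)$:
\[
\Prob[\Omega_t^{\comp}]\le\sum_{(h,i)\in\L_t}\sum_{s=1}^{t}\Prob\!\Big[\big|\hmu_{h,i}(t)-f(x_{h,i})\big|>c\sqrt{\tfrac{\log(1/\tilde\delta(t))}{s}}\ \Big|\ T_{h,i}(t)=s\Big].
\]
To each summand I apply the second (exponential) form of Lemma~\ref{eq:EgodtoHoff} with $\eps=c\sqrt{\log(1/\tilde\delta(t))/s}$; when this $\eps$ exceeds $1$ the probability is $0$ since rewards lie in $[0,1]$, so the bound is vacuous there. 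The key observation, exactly as in the iid case, is that the counter $s$ multiplies $\eps^2$, hence $s\,\eps^2=c^2\log(1/\tilde\delta(t))$ and the $s$-dependence disappears: every summand is at most $5\,t^{1/3}\,(\tilde\delta(t))^{c^2/(2(3\Gamma+1)^2)}$, uniformly in $s$.

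Next I would bound the cardinality of $\L_t$. By Lemma~\ref{lem:bound.depth.anytime}, which applies verbatim to \HCTgamma, every tree produced up to time $t$ has depth at most $H_{\max}(t)$, so $\L_t$ is contained in the complete binary tree of that depth and $|\L_t|\le 2^{H_{\max}(t)+1}$; using $\log(1/\rho)\ge 1-\rho$ and $\ln 2\le 1$ this gives $|\L_t|\le 2\,(t\nu_1^2/(2(c\rho)^2))^{1/(1-\rho)}$. Plugging this and the per-term bound into the double sum yields
\[
\Prob[\Omega_t^{\comp}]\le 10\,t^{4/3}\Big(\tfrac{t\nu_1^2}{2(c\rho)^2}\Big)^{\!\frac{1}{1-\rho}}(\tilde\delta(t))^{\frac{c^2}{2(3\Gamma+1)^2}} .
\]
It then remains to substitute the stated constants. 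With $c=6(3\Gamma+1)\sqrt{1/(1-\rho)}$ one has $c^2/(2(3\Gamma+1)^2)=18/(1-\rho)$, and with $\tilde\delta(t)=(\delta/t)\sqrt[9]{\rho/(4\nu_1)}$ the factor becomes $(\tilde\delta(t))^{18/(1-\rho)}=(\delta/t)^{18/(1-\rho)}(\rho^2/(16\nu_1^2))^{1/(1-\rho)}$. The ninth root is chosen precisely so that the $\nu_1$, $\rho$ and $(3\Gamma+1)$ powers coming from $|\L_t|$ are exactly cancelled by those coming from this factor (the analogue of the eighth root in the iid proof), leaving only the numerical constant $10\,(\tfrac{1-\rho}{1152(3\Gamma+1)^2})^{1/(1-\rho)}\le 1$, the power $t^{4/3+1/(1-\rho)-18/(1-\rho)}\le t^{-7}$, and the factor $\delta^{18/(1-\rho)}\le\delta$ (since $\delta<1$ and $18/(1-\rho)\ge 1$). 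This gives $\Prob[\Omega_t^{\comp}]\le \delta/t^7$.

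Finally, for the joint event $\calE_{t,n}=\bigcap_{s=t}^{n}\Omega_s$ I would take a union bound over $s$ and sum the (in fact much smaller) tail, $\Prob[\calE_{t,n}^{\comp}]\le\sum_{s=t}^{n}\Prob[\Omega_s^{\comp}]\le\sum_{s\ge t}\delta/s^{7}\le\delta/(6t^6)$, which is the second claim. The only genuinely new element relative to the iid analysis is controlling $|\hmu_{h,i}(t)-f(x_{h,i})|$ for an average of rewards gathered over several non-consecutive episodes, and this is already delivered by Lemma~\ref{eq:EgodtoHoff} (through the martingale-residual decomposition of Lemmas~\ref{lem:martingale.residual}--\ref{lem:tail.weak.dep} together with the $O(\log t)$ bound on the episode count). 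Given that lemma the remaining work is bookkeeping; the one subtlety to watch is that the episodic tail carries an extra $\mathrm{poly}(t)$ factor ($t^{1/3}$ here) absent in the iid case, which is why $c$ must be taken proportional to $(3\Gamma+1)/\sqrt{1-\rho}$ rather than $1/\sqrt{1-\rho}$, and $c_1$ sharpened from an eighth to a ninth root, so that this factor is still dominated and the exponent of $t$ in $\Prob[\Omega_t^{\comp}]$ stays below $-7$.
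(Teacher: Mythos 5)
Your proposal is correct and follows essentially the same route as the paper's proof: a union bound over the nodes of $\L_t$ and the counter values, the exponential form of Lem.~\ref{eq:EgodtoHoff} in place of Chernoff--Hoeffding, the $|\L_t|\le 2^{H_{\max}(t)+1}$ bound from Lem.~\ref{lem:bound.depth.anytime}, substitution of the stated $c$ and $\tilde\delta(t)$ to get $\delta/t^7$, and an integral-comparison union bound over $s=t,\dots,n$ for $\calE_{t,n}$. The only differences are cosmetic bookkeeping of constants (and your explicit handling of the $\eps\le 1$ restriction, which the paper glosses over), so no gap.
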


\begin{proof}

We upper bound the probability of complementary event of $\Omega_t$ after $t$ steps
\begin{align*}
\mathbb{P}[\Omega^\comp_{t}]  &=  \sum_{ (h,i) \in \L_{t} } \sum_{T_{h,i}(t)=1}^{t} \mathbb{P}\bigg[\big| \hmu_{h,i}(t) - f(x_{h,i}) \big| \geq c\sqrt{\frac{\log(1/\tilde \delta(t))}{T_{h,i}(t)}}\bigg] \\
&\leq \sum_{(h,i)\in \L_t} \sum_{T_{h,i}(t)=1}^t 5t^{1/3}\exp\bigg(-T_{h,i}(t) c^2 \frac{\log(1/\tilde \delta(t))}{(3\Gamma+1)^2T_{h,i}(t)}\bigg) \\
&\leq 5 \exp(-c^2 /(3\Gamma+1)^2\log(1/\tilde \delta(t)))  t^{4/3} |\L_t|,
\end{align*}
Similar to the proof of Lem.~\ref{lem:failing.bound}, we have that $|\L_t| \leq 2^{H_{\max}(t)+1}$. Thus
\begin{align*}
\mathbb{P}[\Omega^\comp_t] &\leq 5(\tilde \delta(t))^{(c/(3\Gamma+1))^2 } t^{4/3} 2^{H_{\max}(t)+1}.
\end{align*}
We first derive a bound on the the term $2^{H_{\max}(t)}$ as
\begin{align*}
2^{H_{\max}(t)} \leq \text{pow}\Bigg(2, \log_2\bigg(\frac{t\nu_1^2}{2(c\rho)^2} \bigg)^{\frac{1}{2\log_2(e)(1-\rho)}}\Bigg) \leq \bigg(\frac{t\nu_1^2}{2(c\rho)^2} \bigg)^{\frac{1}{2(1-\rho)}},
\end{align*}
where we used the definition of the upper bound $H_{\max}(t)$. %From the definition of $H_{\min}(t)$ we have that
%
%\begin{align*}
%\exp(-2c^2 H_{\min}(t)) = \bigg(\frac{t\nu_1^2}{2c^2}\bigg)^{-\frac{2c^2}{1+2/\rho^2}},
%\end{align*}
%
which leads to 
\begin{align*}
\mathbb{P}[\Omega^\comp_t] &\leq 10 t^{4/3} \big(\tilde \delta(t)\big)^{(c/(3\Gamma+1))^2} \bigg(\frac{t\nu_1^2}{2(c\rho)^2} \bigg)^{\frac 1{2(1-\rho)}}.
\end{align*}

The choice of  $c$ and $\tilde\delta(t)$ as in the statement leads to $\mathbb{P}[\Omega^\comp_t] \leq  \frac {\delta}{t^7} $ (steps are similar to Lemma~\ref{lem:high.prob}) .

The bound on the joint event $\calE_{t,n}$ follows from a union bound as

\begin{equation*}
\mathbb P \left [\calE^\comp_{t,n} \right]=\mathbb P \Big[ \bigcup_{s=t} ^{n}\Omega^\comp_{s} \Big]\leq \sum_{s=t} ^{n}\mathbb  P (\Omega^\comp_{s} )\leq \int_{t}^{\infty}\frac {\delta}{s^7} ds = \frac{\delta}{6t^6}.
\end{equation*}
\end{proof}

Recalling the definition of regret from Sect.~\ref{s:preliminaries}, we decompose the regret of \HCTiid in two terms depending on whether event $\calE_t$ holds or not (i.e., failing confidence intervals). Let the instantaneous regret be $\Delta_t = f^* - r_t$, then we rewrite the regret as
\begin{equation}\label{eq:regret.decomp.gamma}
R_n=\sum_{t=1}^n \Delta_t = \sum_{t=1}^n \Delta_t \mathbb I_{\calE_t} + \sum_{t=1}^n\Delta_t \mathbb I_{\calE^\comp_t} = R_n^{\calE} + R_n^{\calE^\comp}.
\end{equation}
We first study the regret in the case of failing confidence intervals.

%%%%%%%%%%%%%%%%%%%%%%%%%%%%%%%%%%%%%%%%%%%%%%%%%%%%%% Lem.

\begin{lemma}[Failing confidence intervals]\label{lem:failing.bound.gamma}
Given the parameters $c$ and $\tilde\delta(t)$ as in Lemma~\ref{lem:high.prob.gamma}, the regret of \HCTiid when confidence intervals fail to hold is bounded as
\begin{align*}
R_n^{\calE^\comp} \leq \sqrt{n},
\end{align*}
with probability $1-\frac {\delta}{30 n^2}$.
\end{lemma}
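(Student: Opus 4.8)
The plan is to replay the argument of Lemma~\ref{lem:failing.bound} almost verbatim, the only structural change being that the single-step confidence event of the iid analysis is now the \emph{joint} event $\calE_{t,n}=\bigcap_{s=t}^{n}\Omega_s$ supplied by Lemma~\ref{lem:high.prob.gamma}. First I would split the horizon at $\sqrt n$ and write
\begin{equation*}
R_n^{\calE^\comp}=\sum_{t=1}^{\sqrt n}\Delta_t\,\I_{\calE_{t,n}^\comp}+\sum_{t=\sqrt n+1}^{n}\Delta_t\,\I_{\calE_{t,n}^\comp}.
\end{equation*}
Since $\Delta_t=f^*-r_t\le 1$, the first sum is at most $\sqrt n$, so the whole claim reduces to showing that the second sum vanishes with probability at least $1-\delta/(30n^2)$.

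For the tail sum I would exploit the monotonicity of the joint events in their lower index: because $\calE_{\sqrt n+1,n}=\bigcap_{s=\sqrt n+1}^{n}\Omega_s\subseteq\bigcap_{s=t}^{n}\Omega_s=\calE_{t,n}$ for every $t\ge\sqrt n+1$, on the event $\calE_{\sqrt n+1,n}$ all the indicators $\I_{\calE_{t,n}^\comp}$ with $t>\sqrt n$ are simultaneously $0$, so the tail sum is deterministically zero there. It then suffices to invoke Lemma~\ref{lem:high.prob.gamma}, which gives $\mathbb P[\calE_{\sqrt n+1,n}^\comp]\le\delta/\big(6(\sqrt n+1)^6\big)\le\delta/(30n^2)$. (Alternatively the same bound follows from $\bigcup_{t>\sqrt n}\calE_{t,n}^\comp=\bigcup_{s>\sqrt n}\Omega_s^\comp$, a union bound, the per-step estimate $\mathbb P[\Omega_s^\comp]\le\delta/s^7$, and $\sum_{s>\sqrt n}s^{-7}\le\int_{\sqrt n}^{\infty}t^{-7}\,dt=\tfrac16 n^{-3}$.) Combining the two pieces yields $R_n^{\calE^\comp}\le\sqrt n$ with the claimed probability.

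The one genuine obstacle — and the reason the iid proof cannot simply be copied — is that in the correlated setting $\hmu_{h,i}$ is not an average of unbiased samples of $f(x_{h,i})$, so a single-time-step concentration statement is insufficient and one needs the confidence bounds to hold \emph{simultaneously} over the whole window $[t,n]$. This is exactly what the episodic concentration inequality (Lemma~\ref{eq:EgodtoHoff}) and the high-probability event of Lemma~\ref{lem:high.prob.gamma} are designed to deliver, with the quoted failure probability $\delta/(6t^6)$ for $\calE_{t,n}$. Once that is in place, the nesting of $\calE_{t,n}$ in $t$ makes the ``no failure after $\sqrt n$'' step go through with no extra work, and the remaining bookkeeping (constants, truncation at $\sqrt n$) is identical to Lemma~\ref{lem:failing.bound}.
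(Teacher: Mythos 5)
Your proposal is correct and follows essentially the same route as the paper: split the horizon at $\sqrt{n}$, bound the first block trivially by $\sqrt{n}$, and show that no confidence failure occurs after $\sqrt{n}$ with probability at least $1-\delta/(30n^2)$ using Lemma~\ref{lem:high.prob.gamma}. The only (harmless) difference is that you exploit the nesting $\calE_{\sqrt{n}+1,n}\subseteq\calE_{t,n}$ to invoke that lemma once at $t=\sqrt{n}+1$, whereas the paper applies a union bound on $\calE_{t,n}^{\comp}$ over $t=\sqrt{n},\dots,n$ (which is where its constant $30$ arises); both give the stated bound.
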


\begin{proof}
The proof is the same as in Lemma~\ref{lem:failing.bound} expect for the union bound which is applied to $\calE_{t,n}$ for $t=\sqrt{n},\ldots,n$.
\end{proof}

We are now ready to prove the main theorem, which only requires to study the regret term under events $\{\calE_{t,n}\}$.

%%%%%%%%%%%%%%%%%%%%%%%%%%%%%%%%%%%%%%%%%%%%%%%%%%%%%% Theorem

\textbf{Theorem~\ref{thm:hct.corr} (Regret bound of \HCTgamma).} 
\textit{Let $\delta\in(0,1)$ and  %$\tilde \delta(t)=\sqrt[9]{\rho/(3\nu_1)}\delta/t$
$c:=6(3\Gamma+1)\sqrt{1/(1-\rho)}$. We assume that Assumptions~\ref{asm:ergod}--\ref{asm:near.optimal.dim} hold and that rewards are generated according to the general model defined in Section~\ref{s:preliminaries}. Then the regret of \HCTgamma after $n$ steps is}
\begin{align*}
R_n \leq 2(3\sqrt{2}+4) \bigg(  \frac{c^2 C\nu_1^{-2} \nu_2^{-d} \rho^d}{1-\rho}\bigg)^{\frac{1}{d+2}} \bigg(\log\Big(\frac{2n}{\delta} \sqrt[9]{\frac{3\nu_1}{\rho}}\Big)\bigg)^{\frac{1}{d+2}} n^{\frac{d+1}{d+2}} + \sqrt{n},
\end{align*}
\noindent\textit{with probability $1-\delta$.}

\begin{proof}

The structure of the proof is exactly the same as in Thm.~\ref{thm:hct.iid}. Thus, here we report only the main differences in each step.

\textbf{Step 1: Decomposition of the regret.} We first decompose the regret in two terms. We rewrite the instantaneous regret $\Delta_t$ as
\begin{align*}
\Delta_t = f^* - r_t = f^* - f(x_{h_t,i_t}) + f(x_{h_t,i_t}) - r_t = \Delta_{h_t,i_t} + \hDelta_t,
\end{align*} 
which leads to a regret
\begin{align}%\label{eq:regret.optimistic.nodes.fast.gamma}
R_n^{\calE} = \sum_{t=1}^n \Delta_{h_t,i_t} \I_{\calE_{t,n}} + \sum_{t=1}^n \hDelta_t \I_{\calE_{t,n}} = \widetilde{R}_n^{\calE} + \widehat{R}_n^{\calE}.
\end{align}
Unlike in Thm.~\ref{thm:hct.iid}, the definition of $\widehat{R}_n^{\calE}$ still requires the event $\I_{\calE_{t,n}}$ and the sequence $\{\widehat \Delta_t\}_{t=1}^n$ is no longer a bounded martingale difference sequence. In fact, $\mathbb E (\widehat \Delta_t|\mathcal F_{t-1} ) \neq 0$ since the expected value of $r_t$ does not coincide with the mean-reward value of the corresponding node $f(x_{h_t,i_t})$. This prevents from directly using the Azuma inequality and extra care is needed to derive a bound. We have that
\begin{equation}
\label{eq:tildeR.local} 
\begin{aligned}
\widehat{R}_n^{\calE} &=  \sum_{t=1}^n \widehat \Delta_t \I_{\calE_{t,n}} \leq \sum_{h=0}^{H(n)}\sum_{i \in \calI_h(n)} \sum_{t=1}^n  \widehat \Delta_t \I_{\calE_{t,n}} \mathbb I_{ (h_t,i_t)=(h,i) }
\\
&= \sum_{h=0}^{H(n)}\sum_{i \in \calI_h(n)} \sum_{t=1}^n  (f(x_{h,i}) - r_t ) \I_{\calE_{t,n}} \mathbb I_{ (h_t,i_t)=(h,i) }
\stackrel{(1)}{\leq}\sum_{h=0}^{H(n)}\sum_{i \in \calI_h(n)} \sum_{t=1}^n  (f(x_{h,i}) - r_t ) \I_{\Omega_{t_{h,i},n}} \I_{ (h_t,i_t)=(h,i) }
\\
&\stackrel{(2)}{=}
 \sum_{h=0}^{H(n)}\sum_{i \in \calI_h(n)} T_{h,i}(\bar t_{h,i})(f(x_{h,i}) -\hmu_{h,i}(\bar t_{h,i}) )\I_{\Omega_{\bar t_{h,i}}}
\\
& \stackrel{(3)}{\leq} \sum_{h=0}^{H(n)}\sum_{i \in \calI_h(n)} cT_{h,i}(\bar t_{h,i})\sqrt{\frac{\log(2/\tilde \delta(\bar t_{h,i}))}{T_{h,i}(\bar t_{h,i})}} \leq
\sum_{h=0}^{H(n)}\sum_{i \in \calI_h(n)} c\sqrt{T_{h,i}(\bar t_{h,i})\log(2/\tilde \delta(\bar t_{h,i}))}
\\
&\leq c\underbrace{\sum_{h=0}^{\barH }\sum_{i\in\calI_h(n)}\sqrt{T_{h,i}(n)\log(2/\tilde \delta(\bar t_{h,i}))}}_{(a)}+c\underbrace{\sum_{h=\barH+1}^{H(n) }\sum_{i\in\calI_h(n)}\sqrt{T_{h,i}(n)\log(2/\tilde \delta(\bar t_{h,i}))}}_{(b)},
\end{aligned}
\end{equation}  
where (1) follows from the definition of $\calE_{t,n} = \bigcap_{s=t}^n \Omega_s$, thus if $\calE_{t,n}$ holds at time $t$ then $\Omega_s$ also holds at $s=\bar t_{h,i}\geq t$. Step (2) follows from the definition of   $\hmu_{h,i}$:  First we  notice that  for the node $(h_n,i_n)$  we have that $T_{h_n,i_n}(n)\hmu_{h_n,i_n}(n)= \sum_{t=1}^n   r_t \I_{ (h_t,i_t)=(h_n,i_n) }$ since we update the statistics at the end.  for  every other  node we have that the last selection  time $\bar t_{h,i}$ and the end of last episode coincides together .    Now since we update the statistics of  the selected node   at the end of every episode, thus,  we have that $T_{h,i}(\bar t_{h,i})\hmu_{h,i}(\bar t_{h,i})= \sum_{t=1}^n   r_t \I_{ (h_t,i_t)=(h,i) }$ also for $(h,i)\neq (h_n,i_n)$.     Step (3) follows from the definition of $\Omega_s$. The resulting bound matches the one in Eq.~\ref{eq:tildeR.hct.iid} up to constants and it can be bound similarly.
\begin{equation*}
\widehat{R}_n^{\calE} \leq2\nu_1\left[\frac{Cc^2\nu_2^{-d} \rho^d \log(2/\tilde \delta(n))}{\nu_1^2(1-\rho)}\rho^{-\barH(d+1)}+\rho^{\barH}n\right].
\end{equation*}

\textbf{Step 2: Preliminary bound on the regret of selected nodes.} The second step follows exactly the same steps as in the proof of Thm.~\ref{thm:hct.iid} with the only difference that here we use the high-probability event $\mathcal E_{t,n}$. As a result the following inequalities hold   for the node $(h_t,i_t)$ selected at time $t$ and  its parent  $(h_t^p,i_t^p)$
% (notice that $t$ should coincide with the beginning of a new episode): \TODO{Alex: We should maybe make this a bit more evident...} 
%\label{eq:regret.optimistic.nodes.fast.gamma}
\begin{equation}
\label{eq:regret.optimistic.nodes.fast.gamma}
\begin{aligned}
\Delta_{h_t,i_t} & \leq 3c\sqrt{\frac{\log(2/\tilde \delta( t))}{T_{h_t,i_t}( t )}}.
\\
\Delta_{h^p_t,i^p_t} & \leq 3\nu_1 \rho^{h_t-1}.
\end{aligned}
\end{equation}

\textbf{Step 3: Bound on the cumulative regret.} Unlike in the proof of Thm.~\ref{thm:hct.iid}, the total regret $\widetilde R_n^{\calE}$ should be analyzed with extra care since here  we  do not update  the selected arm  as well as the  statistics  $T_{h,i}(t)$ and $\widehat \mu_{h,i}(t)$ for the the  entire length of episode, whereas in Thm.~\ref{thm:hct.iid} we update at every step.
% the time steps where a node is selected does not coincide anymore with the steps when it is actually pulled within an episode. 
Thus the development of $\widetilde R_n^{\calE}$ slightly differs from Eq.~\ref{eq:tildeR.hct.iid}. Let $1\leq \barH\leq H(n)$ a constant to be chosen later, then we have
\begin{align}
\label{eq:tildeR.corr} 
\widetilde R_n^{\calE} & \stackrel{(1)}{=}  \sum_{t=1}^n \Delta_{h_t,i_t} \mathbb I_{\calE_{t,n}}  = \sum_{h=0}^{H(n)}\sum_{i \in \calI_h(n)} \sum_{t=1}^n \Delta_{h,i}\I_{ (h_t,i_t)=(h,i) }\mathbb I_{ \calE_{t,n} }  = \sum_{h=0}^{H(n)}\sum_{i \in \calI_h(n)} \sum_{k=1}^{K_{h,i}(n)} \sum_{t=t_{h,i}(k)}^{t_{h,i}(k)+v_{h,i}(k)} \Delta_{h,i} \mathbb I_{ \calE_{t,n} } 
%\nonumber\\
%&\leq\sum_{h=0}^{H(n)}\sum_{i \in \calI_h(n)} \sum_{k=1}^{K_{h,i}(n)} \sum_{t=t_{h,i}(k)}^{t_{h,i}(k)+v_{h,i}(k)}3c\sqrt{\frac{\log(2/\tilde \delta(t))}{T_{h,i}(t)}} =\sum_{h=0}^{H(n)}\sum_{i \in \calI_h(n)} \sum_{k=1}^{K_{h,i}(n)} \sum_{t=t_{h,i}(k)}^{t_{h,i}(k)+v_{h,i}(k)}3c\sqrt{\frac{\log(2/\tilde \delta(t))}{T_{h,i}(t)}}%&\stackrel{(2)}{\leq} \sum_{h=0}^{H(n)}\sum_{i \in \calI_h(n)} \sum_{k=1}^{K_{h,i}(n)} \sum_{t=t_{h,i}(k)}^{t_{h,i}(k)+v_{h,i}(k)} \Delta_{h,i} \mathbb I_{ \calE_t_{h,i},n} } 
%= \sum_{h=0}^{H(n)}\sum_{i \in \calI_h(n)} \sum_{k=1}^{K_{h,i}(n)} \Delta_{h,i} \mathbb I_{ \calE_{t_{h,i}(k),n} } v_{h,i}(k) 
 \nonumber\\
&
\stackrel{(2)}{\leq}\sum_{h=0}^{H(n)}\sum_{i \in \calI_h(n)} \sum_{k=1}^{K_{h,i}(n)}  \sum_{t=t_{h,i}(k)}^{t_{h,i}(k)+v_{h,i}(k)}\left[3c\sqrt{\frac{\log(2/\tilde \delta(t))}{T_{h,i}(t)}}\right] 
\stackrel{(3)}{=}\sum_{h=0}^{H(n)}\sum_{i \in \calI_h(n)} \sum_{k=1}^{K_{h,i}(n)} v_{h,i}(k)\left[3c\sqrt{\frac{\log(2/\tilde \delta(t_{h,i}(k)))}{T_{h,i}(t_{h,i}(k))}}\right] 
 \nonumber \\
&
\leq \sum_{h=0}^{H(n)}\sum_{i \in \calI_h(n)} 3c\sqrt{\log(2/\tilde \delta(\bar t_{h,i}))}\sum_{k=1}^{K_{h,i}(n)} \frac{v_{h,i}(k)}{\sqrt{T_{h,i}(t_{h,i}(k))}}
\nonumber\\
&
 \stackrel{(4)}{\leq} 3(\sqrt{2}+1)c\sum_{h=0}^{H(n)}\sum_{i \in \calI_h(n)} \sqrt{\log(2/\tilde \delta( \bar t_{h,i}))T_{h,i}(t_{h,i}(K_{h,i}(n)))}  
\leq3(\sqrt{2}+1)c\sum_{h=0}^{H(n)}\sum_{i \in \calI_h(n)} \sqrt{\log(2/\tilde \delta( \bar t_{h,i}))T_{h,i}(n)}  
\nonumber\\
&=3(\sqrt{2}+1)c\underbrace{\sum_{h=0}^{\barH }\sum_{i\in\calI_h(n)}\sqrt{T_{h,i}(n)\log(2/\tilde \delta(\bar t_{h,i}))}}_{(a)}+3(\sqrt{2}+1)c\underbrace{\sum_{h=\barH+1}^{H(n) }\sum_{i\in\calI_h(n)}\sqrt{T_{h,i}(n)\log(2/\tilde \delta(\bar t_{h,i}))}}_{(b)},
\end{align}
where the first sequence of equalities in (1) simply follows from the definition of episodes. In (2) we bound the instantaneous regret by Eq.~\ref{eq:regret.optimistic.nodes.fast.gamma}. %we use(e.g., $\calE_{t,n}$) that holds at the beginning of the episode, it keeps holding throughout the episode. 
Step (3) follows from  the fact that when $(h,i)$ is selected, its statistics, including  $T_{h,i}$, are not changed until the end of  the episode. Step (4) is an immediate application of Lemma~19 in~\cite{jaksch2010near-optimal}.

Constants apart the terms $(a)$ and $(b)$ coincides with the terms defined in Eq.~\ref{eq:tildeR.hct.iid} and similar bounds can be derived.

Putting the bounds on $\widehat R_n^{\calE}$ and $\widetilde R_n^{\calE}$ together leads to 
\begin{equation*}
R_n^{\calE} \leq2(3\sqrt{2}+4)\nu_1\left[\frac{Cc^2\nu_2^{-d} \rho^d \log(2/\tilde \delta(n))}{\nu_1^2(1-\rho)}\rho^{-\barH(d+1)}+\rho^{\barH}n\right].
\end{equation*}
It is not difficult to prove that for a suitable choice $\barH$, we obtain the final bound of $O(\log(n)^{1/(d+2)}n^{(d+1)/(d+2)})$ on $R_n$. This combined with the result of Lem.~\ref{lem:high.prob.gamma} and a union bound on all $n\in\{1,2,3,\dots\}$ proves the final result. 
 
\end{proof}

%%%%%%%%%%%%%%%%%%%%%%%%%%%%%%%%%%%%%%%%%%%%%%%%%%%%%%%%%%%%%%%%%%%%%%%%%%%%%%%
%% Introduction
%%%%%%%%%%%%%%%%%%%%%%%%%%%%%%%%%%%%%%%%%%%%%%%%%%%%%%%%%%%%%%%%%%%%%%%%%%%%%%%

\subsection{Proof of Thm.~\ref{thm:hct.corr.space}}
\label{appx:hct.corr.space}

\textbf{Theorem~\ref{thm:hct.corr.space}}
\textit{Let $\delta\in(0,1)$, $\tilde \delta(n)=\sqrt[9]{\rho/(4\nu_1)}\delta/n$, and $c=3(3\Gamma+1)\sqrt{1/(1-\rho)}$. We assume that Assumptions~\ref{asm:ergod}--\ref{asm:near.optimal.dim} hold and that rewards are generated according to the general model defined in Section~\ref{s:preliminaries}. Then if $\delta = 1/n$ the space complexity of \HCTgamma is}
\begin{equation*}
\mathbb E(\N_n) =O( \log(n)^{2/(d+2)}n^{d/(d+2)} ).
\end{equation*}

\begin{proof}
We assume that the space requirement for each node (i.e., storing variables such as $\hmu_{h,i}$, $T_{h.i}$) is a unit. Let $\B_t$ denote the event corresponding to the branching/expansion of the node $(h_t,i_t)$ selected at time $t$, then the space complexity is $\N_n = \sum_{t=1}^n \I_{B_t}$. Similar to the regret analysis, we decompose $\N_n$ depending on events $\calE_{t,n}$, that is
\begin{align}
\N_n = \sum_{t=1}^n \I_{\B_t}\I_{\calE_{t,n}} + \sum_{t=1}^n \I_{\B_t}\I_{\calE_{t,n}^{\comp}} = \N_n^{\calE} + \N_n^{\calE^{\comp}}.
\end{align}
Since we are targeting the expected space complexity, we take the expectation of the previous expression and the second term can be easily bounded as
\begin{align}
\E\big[\N_n^{\calE^{\comp}}\big] = \sum_{t=1}^n \I_{\B_t}\Prob[\calE_{t,n}^{\comp}] \leq \sum_{t=1}^n \Prob[\calE_t^{\comp}] \leq \sum_{t=1}^n\frac{\delta}{6t^6} \leq C,
\end{align}
where the last inequality follows from Lemma~\ref{lem:high.prob.gamma} and $C$ is a constant independent from $n$. We now focus on the first term $\N_n^{\calE}$. We first rewrite it as the total number of nodes $|\T_n|$ generated by \HCT over $n$ steps. For any depth $\barH>0$ we have
\begin{align}\label{eq:high.prob.complexity}
\N_n^{\calE} = \sum_{h=0}^{H(n)} |\calI_h(n)| = 1+ \sum_{h=1}^{\barH} |\calI_h(n)| + \sum_{h=\barH+1}^{H(n)} |\calI_h(n)| \leq 1 + \underbrace{\barH |\calI_{\barH}(n)|}_{(c)} + \underbrace{\sum_{h=\barH+1}^{H(n)} |\calI_h(n)|}_{(d)}.
\end{align}
A bound on term (d) can be recovered through the following sequence of inequalities
 \begin{equation}
\label{eq:IhHbar.any}
\begin{aligned}
n &= \sum_{h=0}^{H(n)} \sum_{i\in\calI_h(n)} T_{h,i}(n) \geq \sum_{h=0}^{H(n)} \sum_{i\in\calI_h^+(n)} T_{h,i}(n) \stackrel{(1)}{\geq} \sum_{h=0}^{H(n)} \sum_{i\in\calI_h^+(n)} \tau_{h,i}(t_{h,i}) \\
&\stackrel{(2)}{\geq} \sum_{h=0}^{H(n)} \sum_{i\in\calI_h^+(n)} \frac{c^2}{\nu_1^2} \rho^{-2h} \stackrel{(3)}{\geq} \frac{1}{\nu_1^2}\sum_{h=\barH}^{H(n)-1} |\calI_h^+(n)|  \rho^{-2h} = \frac{1}{\nu_1^2} \rho^{-2\barH}\sum_{h=\barH}^{H(n)-1} |\calI_h^+(n)|  \rho^{2(\barH-h)} \\
&\geq \frac{1}{\nu_1^2} \rho^{-2\barH}\sum_{h=\barH}^{H(n)-1} |\calI_h^+(n)| \stackrel{(4)}{\geq} \frac{1}{2\nu_1^2} \rho^{-2\barH}\sum_{h=\barH+1}^{H(n)} |\calI_h(n)|,
\end{aligned}
\end{equation} 
where (1) follows from the fact that nodes in $\calI_h^+(n)$ have been expanded at time $t_{h,i}$ when their number of pulls $T_{h,i}(t_{h,i}) \leq T_{h,i}(n)$ exceeded the threshold $\tau_{h,i}(t_{h,i})$. Step (2) follows from Eq.~\ref{eq:tau2}, while (3) from the definition of $c>1$. Finally, step (4) follows from the fact that the number of nodes at depth $h$ cannot be larger than twice the parent nodes at depth $h-1$.
By inverting the previous inequality, we obtain
\begin{align*}
(d) \leq 2\nu_1^2 n \rho^{2\barH}.
\end{align*}
On other hand, in order to bound (c), we need to use the same the high-probability events $\calE_{t,n}$ and similar passages as in Eq.~\ref{eq:good.parent}, which leads to $|\calI_h(n)| \leq 2|\calI_{h-1}^+(n)| \leq  2C( \nu_2 \rho^{(h-1)})^{-d}$. Plugging these results back in Eq.~\ref{eq:high.prob.complexity} leads to 
\begin{align*}
\N_n^{\calE} \leq 1+ 2\barH C( \nu_2 \rho^{(\barH-1)})^{-d} + 2\nu_1^2 n \rho^{2\barH},
\end{align*}
with high probability. Together with $\N_n^{\calE^{\comp}}$ we obtain
\begin{align*}
\E\big[\N_n\big] \leq 1+ 2\barH C( \nu_2 \rho^{(\barH-1)})^{-d} + 2\nu_1^2 n \rho^{2\barH} + C \leq 1+ 2H_{\max}(n) C( \nu_2 \rho^{(\barH-1)})^{-d} + 2\nu_1^2 n \rho^{2\barH} + C,
\end{align*}
where $H_{\max}(n)$ is the upper bound on the depth of the tree in Lemma~\ref{lem:bound.depth.anytime}. Optimizing $\barH$ in the remaining terms leads to the statement.
\end{proof}

\bibliography{refs}

\begin{thebibliography}{25}
\providecommand{\natexlab}[1]{#1}
\providecommand{\url}[1]{\texttt{#1}}
\expandafter\ifx\csname urlstyle\endcsname\relax
  \providecommand{\doi}[1]{doi: #1}\else
  \providecommand{\doi}{doi: \begingroup \urlstyle{rm}\Url}\fi

\bibitem[Abbasi et~al.(2013)Abbasi, Bartlett, Kanade, Seldin, and
  Szepesvari]{NIPS2013_Yassin}
Abbasi, Yasin, Bartlett, Peter, Kanade, Varun, Seldin, Yevgeny, and Szepesvari,
  Csaba.
\newblock Online learning in markov decision processes with adversarially
  chosen transition probability distributions.
\newblock In Burges, C.J.C., Bottou, L., Welling, M., Ghahramani, Z., and
  Weinberger, K.Q. (eds.), \emph{Advances in Neural Information Processing
  Systems 26}, pp.\  2508--2516. 2013.

\bibitem[Auer et~al.(2007)Auer, Ortner, and Szepesv{\'a}ri]{AuerOS07}
Auer, Peter, Ortner, Ronald, and Szepesv{\'a}ri, Csaba.
\newblock Improved rates for the stochastic continuum-armed bandit problem.
\newblock In \emph{COLT}, pp.\  454--468, 2007.

\bibitem[Azar et~al.(2013)Azar, Lazaric, and Brunskill]{AzarLB13a}
Azar, Mohammad~Gheshlaghi, Lazaric, Alessandro, and Brunskill, Emma.
\newblock Regret bounds for reinforcement learning with policy advice.
\newblock In \emph{ECML/PKDD}, pp.\  97--112, 2013.

\bibitem[Baxter \& Bartlett(2000)Baxter and Bartlett]{baxter2000reinforcement}
Baxter, Jonathan and Bartlett, Peter~L.
\newblock Reinforcement learning in pomdp's via direct gradient ascent.
\newblock In \emph{ICML}, pp.\  41--48, 2000.

\bibitem[Bubeck et~al.(2011{\natexlab{a}})Bubeck, Munos, Stoltz, and
  Szepesv{\'a}ri]{BubeckMSS11}
Bubeck, S{\'e}bastien, Munos, R{\'e}mi, Stoltz, Gilles, and Szepesv{\'a}ri,
  Csaba.
\newblock {\it X}-armed bandits.
\newblock \emph{Journal of Machine Learning Research}, 12:\penalty0 1655--1695,
  2011{\natexlab{a}}.

\bibitem[Bubeck et~al.(2011{\natexlab{b}})Bubeck, Stoltz, and Yu]{BubeckSY11}
Bubeck, S{\'e}bastien, Stoltz, Gilles, and Yu, Jia~Yuan.
\newblock Lipschitz bandits without the lipschitz constant.
\newblock In \emph{ALT}, pp.\  144--158, 2011{\natexlab{b}}.

\bibitem[Bull(2013)]{bull2013adaptive-tree}
Bull, Adam.
\newblock Adaptive-tree bandits.
\newblock \emph{arXiv preprint arXiv:1302.2489}, 2013.

\bibitem[Cope(2009)]{cope2009regret}
Cope, Eric.
\newblock Regret and convergence bounds for immediate-reward reinforcement
  learning with continuous action spaces.
\newblock \emph{IEEE Transactions on Automatic Control}, 54\penalty0
  (6):\penalty0 1243--1253, 2009.

\bibitem[Djolonga et~al.(2013)Djolonga, Krause, and Cevher]{djolonga13high}
Djolonga, Josip, Krause, Andreas, and Cevher, Volkan.
\newblock High dimensional gaussian process bandits.
\newblock In \emph{Neural Information Processing Systems (NIPS)}, 2013.

\bibitem[Jaksch et~al.(2010)Jaksch, Ortner, and Auer]{jaksch2010near-optimal}
Jaksch, Thomas, Ortner, Ronald, and Auer, Peter.
\newblock Near-optimal regret bounds for reinforcement learning.
\newblock \emph{Journal of Machine Learning Research}, 11:\penalty0 1563--1600,
  2010.

\bibitem[Kleinberg et~al.(2008)Kleinberg, Slivkins, and Upfal]{KleinbergSU08}
Kleinberg, Robert, Slivkins, Aleksandrs, and Upfal, Eli.
\newblock Multi-armed bandits in metric spaces.
\newblock In \emph{STOC}, pp.\  681--690, 2008.

\bibitem[Kober \& Peters(2011)Kober and Peters]{kober2011policy}
Kober, Jens and Peters, Jan.
\newblock Policy search for motor primitives in robotics.
\newblock \emph{Machine Learning}, 84\penalty0 (1-2):\penalty0 171--203, 2011.

\bibitem[Lattimore et~al.(2013)Lattimore, Hutter, and
  Sunehag]{lattimore2013sample}
Lattimore, Tor, Hutter, Marcus, and Sunehag, Peter.
\newblock The sample-complexity of general reinforcement learning.
\newblock In \emph{Proceedings of Thirtieth International Conference on Machine
  Learning (ICML)}, 2013.

\bibitem[Levin et~al.(2006)Levin, Peres, and Wilmer]{levin2006markov}
Levin, David~A., Peres, Yuval, and Wilmer, Elizabeth~L.
\newblock \emph{{Markov chains and mixing times}}.
\newblock American Mathematical Society, 2006.

\bibitem[Maurer \& Pontil(2009)Maurer and Pontil]{maurer2009empirical}
Maurer, Andreas and Pontil, Massimiliano.
\newblock Empirical bernstein bounds and sample variance penalization.
\newblock \emph{arXiv preprint arXiv:0907.3740}, 2009.

\bibitem[Munos(2011)]{Munos11}
Munos, R{\'e}mi.
\newblock Optimistic optimization of a deterministic function without the
  knowledge of its smoothness.
\newblock In \emph{NIPS}, pp.\  783--791, 2011.

\bibitem[Munos(2013)]{munos2013bandits}
Munos, R{\'e}mi.
\newblock From bandits to monte-carlo tree search: The optimistic principle
  applied to optimization and planning.
\newblock \emph{Foundations and Trends in Machine Learning}, 2013.

\bibitem[Ortner \& Ryabko(2012)Ortner and Ryabko]{ortner2012online}
Ortner, Ronald and Ryabko, Daniil.
\newblock Online regret bounds for undiscounted continuous reinforcement
  learning.
\newblock In Bartlett, P., Pereira, F.c.n., Burges, C.j.c., Bottou, L., and
  Weinberger, K.q. (eds.), \emph{Advances in Neural Information Processing
  Systems 25}, pp.\  1772--1780, 2012.

\bibitem[Scherrer \& Geist(2013)Scherrer and Geist]{scherrer2013policy}
Scherrer, Bruno and Geist, Matthieu.
\newblock Policy search: Any local optimum enjoys a global performance
  guarantee.
\newblock \emph{arXiv preprint arXiv:1306.1520}, 2013.

\bibitem[Slivkins(2009)]{slivkins2009contextual}
Slivkins, Aleksandrs.
\newblock Contextual bandits with similarity information.
\newblock \emph{CoRR}, abs/0907.3986, 2009.

\bibitem[Slivkins(2011)]{slivkins2011multi}
Slivkins, Aleksandrs.
\newblock Multi-armed bandits on implicit metric spaces.
\newblock In \emph{Advances in Neural Information Processing Systems}, pp.\
  1602--1610, 2011.

\bibitem[Srinivas et~al.(2009)Srinivas, Krause, Kakade, and Seeger]{Srinivas09}
Srinivas, Niranjan, Krause, Andreas, Kakade, Sham~M., and Seeger, Matthias.
\newblock Gaussian process bandits without regret: An experimental design
  approach.
\newblock \emph{CoRR}, abs/0912.3995, 2009.

\bibitem[Tolstikhin \& Seldin(2013)Tolstikhin and Seldin]{tolstikhin2013pac}
Tolstikhin, Ilya~O and Seldin, Yevgeny.
\newblock {PAC}-bayes-empirical-bernstein inequality.
\newblock In \emph{Advances in Neural Information Processing Systems}, pp.\
  109--117, 2013.

\bibitem[Valko et~al.(2013)Valko, Carpentier, and Munos]{valko2013}
Valko, Michal, Carpentier, Alexandra, and Munos, R{\'e}mi.
\newblock Stochastic simultaneous optimistic optimization.
\newblock In \emph{Proceedings of the 30th International Conference on Machine
  Learning (ICML-13)}, pp.\  19--27, 2013.

\bibitem[Vlassis \& Toussaint(2009)Vlassis and Toussaint]{vlassis2009model}
Vlassis, Nikos and Toussaint, Marc.
\newblock Model-free reinforcement learning as mixture learning.
\newblock In \emph{Proceedings of the 26th Annual International Conference on
  Machine Learning}, pp.\  1081--1088, 2009.

\end{thebibliography}
\bibliographystyle{icml2014}

\end{document}